\newcommand{\onenorm}[1]{\left\lVert#1\right\rVert_1}
\newcommand{\R}{\mathbb{R}}
\newcommand{\cX}{\mathcal{X}}
\newcommand{\E}{\mathbb{E}}
\newcommand{\reals}{\mathbb{R}}
\newcommand{\vectx}{\mathbf{x}}
\newcommand{\vecty}{\mathbf{y}}
\newcommand{\vectmu}{\boldsymbol{\mu}}
\newcommand{\norm}[1]{\left\lVert#1\right\rVert}
\newcommand{\inner}[2]{\langle#1,#2\rangle}
\DeclareMathOperator*{\argmin}{arg\,min}
\newcommand{\cupdot}{\mathbin{\mathaccent\cdot\cup}}
\newcommand{\cost}{\mathrm{cost}}
\newcommand{\diam}{\mathrm{diam}}
\definecolor{cluster0color}{RGB}{162,201,221}
\definecolor{cluster1color}{RGB}{50,158,43}
\definecolor{cluster2color}{RGB}{233,176,102}
\definecolor{cluster3color}{RGB}{106,61,154}
\definecolor{cluster4color}{RGB}{165,83,37}
\definecolor{bluish-green}{HTML}{009E73}
\newtheorem{theorem}{Theorem}[section]
\newtheorem{claim}[theorem]{Claim}
\newtheorem{fact}[theorem]{Fact}
\newtheorem{lemma}[theorem]{Lemma}
\newtheorem{proposition}[theorem]{Proposition}
\newcommand{\xcor}{\cX^{\mathsf{cor}}}
\newcommand{\xmis}{\cX^{\mathsf{mis}}}
\newcommand{\xvar}[1]{\textsf{#1}}
\newcommand{\xfunc}[1]{\texttt{#1}}
\newcommand{\xvbox}[2]{\makebox[#1][l]{#2}}
\newcommand{\NULL}{\textsc{null}}
\author{
  Sanjoy Dasgupta\quad\quad\\
  University of California, San Diego\quad\quad\\
  \url{dasgupta@eng.ucsd.edu}\quad\quad
  \and
  Nave Frost\quad\quad\\
  Tel Aviv University\quad\quad\\
  \url{navefrost@mail.tau.ac.il}\quad\quad
  \and
    Michal Moshkovitz\\
    University of California, San Diego\\
    \url{mmoshkovitz@eng.ucsd.edu}
  \and
    Cyrus Rashtchian\\
    University of California, San Diego\\
  \url{crashtchian@eng.ucsd.edu}
}
\title{Explainable $k$-Means and $k$-Medians Clustering}
\date{}
\begin{document}
\maketitle

\begin{abstract}
Clustering is a popular form of unsupervised learning for geometric data. Unfortunately, many clustering algorithms lead to cluster assignments that are hard to explain, partially because they depend on all the features of the data in a complicated way. To improve interpretability, we consider using a small decision tree to partition a data set into clusters, so that clusters can be characterized in a straightforward manner. We study this problem from a theoretical viewpoint, measuring cluster quality by the $k$-means and $k$-medians objectives: Must there exist a tree-induced clustering whose cost is comparable to that of the best unconstrained clustering, and if so, how can it be found? In terms of negative results, we show, first, that popular top-down decision tree algorithms may lead to clusterings with arbitrarily large cost, and second, that any tree-induced clustering must in general incur an $\Omega(\log k)$ approximation factor compared to the optimal clustering. On the positive side, we design an efficient algorithm that produces explainable clusters using a tree with $k$ leaves. For two means/medians, we show that a single threshold cut suffices to achieve a constant factor approximation, and we give nearly-matching lower bounds. For general $k \geq 2$, our algorithm is an $O(k)$ approximation to the optimal $k$-medians and an $O(k^2)$ approximation to the optimal $k$-means. Prior to our work, no algorithms were known with provable guarantees independent of dimension and input size.
\end{abstract}

\section{Introduction}
A central direction in machine learning is understanding the reasoning behind decisions made by learned models~\cite{lipton2018mythos, molnar2019, murdoch2019interpretable}. Prior work on AI explainability focuses on the interpretation of a black-box model, known as {\em post-modeling} explainability~\cite{baehrens2010explain, ribeiro2018anchors}. While methods such as LIME~\cite{ribeiro2016should} or Shapley explanations~\cite{lundberg2017unified} have made progress in this direction, they do not provide direct insight into the underlying data set, and the explanations depend heavily on the given model. This has raised concerns about the applicability of current solutions, leading researchers to consider more principled approaches to interpretable methods~\cite{rudin2019stop}.

We address the challenge of developing machine learning systems that are explainable by design, starting from an {\em unlabeled} data set. Specifically, we consider {\em pre-modeling} explainability in the context of {clustering}. 
A common use of clustering is to identify patterns or discover structural properties in a data set by quantizing the unlabeled points. For instance, $k$-means clustering may be used to discover coherent groups among a supermarket's customers. While there are many good clustering algorithms, the resulting cluster assignments can be hard to understand because the clusters may be determined using all the features of the data, and there may be no concise way to explain the inclusion of a particular point in a cluster. This limits the ability of users to discern the commonalities between points within a cluster or understand why points ended up in different clusters.

\begin{figure*}[t]
    \centering
    \subfloat[Optimal $5$-means clusters]{
         \includegraphics[width=.33\textwidth]{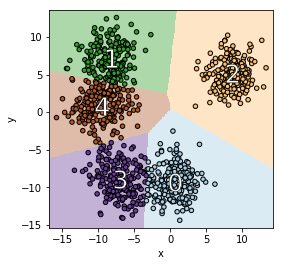}
    \label{fig:optimal_clusters}}
    \hfill
     \subfloat[Tree based $5$-means clusters]{
         \includegraphics[width=.33\textwidth]{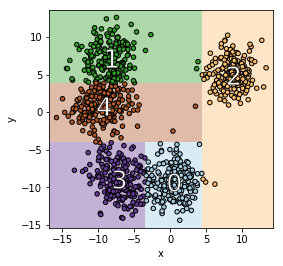}
      \label{fig:tree_clusters}
     }
     \hfill
     \subfloat[Threshold tree]{
            \resizebox{.25\textwidth}{!}{%
            \begin{tikzpicture}
            [inner/.style={shape=rectangle, rounded corners, draw, align=center, top color=white, bottom color=gray!40, scale=.85},
            leaf/.style={shape=rectangle, rounded corners, draw, align=center, scale=1},
            level 1/.style={sibling distance=4mm},
            level 2/.style={sibling distance=3mm},
            level 3/.style={sibling distance=2mm},
            level 4/.style={sibling distance=2mm},
            level distance=8mm]
            \Tree
            [.\node[inner]{$x \leq 4.5$};
                [.\node[inner]{$y \leq -4$};
                    [.\node[inner]{$x \leq -3.5$};
                        \node[leaf, top color=cluster3color!60, bottom color=cluster3color!80]{\textbf{3}};
                        \node[leaf, top color=cluster0color!60, bottom color=cluster0color!80]{\textbf{0}};    
                    ]
                    [.\node[inner]{$y \leq 4$};
                        \node[leaf, top color=cluster4color!60, bottom color=cluster4color!80]{\textbf{4}};
                        \node[leaf, top color=cluster1color!60, bottom color=cluster1color!80]{\textbf{1}};    
                    ]
                ]
                \node[leaf, top color=cluster2color!60, bottom color=cluster2color!80]{\textbf{2}};
            ]
            \node[scale=2] at (-2.5,-3.5) {$ $};
            \end{tikzpicture}
            } \label{fig:decision_tree}
     }
    \caption{
    	The optimal $5$-means clustering (left) determines uses combinations of both features. The explainable clustering (middle) uses axis-aligned rectangles summarized by the threshold tree (right). Because the clusters contain nearby points, a small threshold tree makes very few mistakes and leads to a good approximation. The benefit of explainability would be more apparent in higher dimensions.}
    \label{fig:optimal_vs_tree}
\end{figure*}

Our goal is to develop accurate, efficient clustering algorithms with concise explanations of the cluster assignments. There should be a simple procedure using a few features to explain why any point belongs to its cluster. Small decision trees have been identified as a canonical example of an easily explainable model~\cite{molnar2019, murdoch2019interpretable}, and
previous work on explainable clustering uses an unsupervised decision tree~\cite{bertsimas2018interpretable, fraiman2013interpretable,geurts2007inferring,ghattas2017clustering, liu2005clustering}. Each node of the binary tree iteratively partitions the data by thresholding on a single feature. We focus on finding $k$ clusters, and hence,  we use trees with $k$ leaves. Each leaf corresponds to a cluster, and the tree is as small as possible. 
We refer to such a tree as a {\em threshold tree}.

There are many benefits of using a small threshold tree to produce a clustering. Any cluster assignment is explained by computing the thresholds along the root-to-leaf path. By restricting to $k$ leaves, we ensure that each such path accesses at most $k-1$ features, independent of the data dimension. 
In general, a threshold tree provides an initial quantization of the data set, which can be combined with other methods for future learning tasks. While we consider static data sets, new data points can be easily clustered by using the tree, leading to explainable assignments.
To analyze clustering quality, we consider the $k$-means and $k$-medians objectives~\cite{macqueen, steinhaus}. The goal is to efficiently determine a set of $k$ centers that minimize either the squared $\ell_2$ or the $\ell_1$ distance, respectively, of the input vectors to their closest center. 

Figure~\ref{fig:optimal_vs_tree} provides an example of standard and explainable $k$-means clustering on the same data set. Figure~\ref{fig:optimal_clusters} on the left shows an optimal $5$-means clustering. Figure~\ref{fig:tree_clusters} in the middle shows an explainable, tree-based $5$-means clustering, determined by the tree in Figure~\ref{fig:decision_tree} on the right. The tree has five leaf nodes, and vectors are assigned to clusters based on the thresholds. Geometrically, the tree defines a set of axis-aligned cuts that determine the clusters. While the two clusterings are very similar, using the threshold tree leads to easy explanations, whereas using a standard $k$-means clustering algorithm leads to more complicated clusters. The difference between the two approaches becomes more evident in higher dimensions, because standard algorithms will likely determine clusters based on all of the feature values.

To reap the benefits of explainable clusters, we must ensure that the data partition is a good approximation of the optimal clustering. While many efficient algorithms have been developed for $k$-means/medians clustering, the resulting clusters are often hard to interpret~\cite{arthur2007k,kanungo02, ostrovsky2013effectiveness, shalev2014understanding}. For example, Lloyd's algorithm alternates between determining the best center for the clusters and reassigning points to the closest center~\cite{lloyd1982least}. The resulting set of centers depends in a complex way to the other points in the data set. Therefore, the relationship between a point and its nearest center may be the result of an opaque combination of many feature values. This issue persists even after dimension reduction or feature selection, because a non-explainable clustering algorithm is often invoked on the modified data set. As our focus is on pre-modeling explanability, we aim for simple explanations that use the original feature vectors.

Even though Figure~\ref{fig:optimal_vs_tree} depicts a situation in which the optimal clustering is very well approximated by one that is induced by a tree, it is not clear whether this would be possible in general. Our first technical challenge is to understand the {\em price of explainability} in the context of clustering: that is, the multiplicative blowup in $k$-means (or $k$-medians) cost that is inevitable if we force our final clustering to have a highly constrained, interpretable, form. The second challenge is to actually find such a tree {\em efficiently}. This is non-trivial because it requires a careful, rather than random, choice of a subset of features. As we will see, the kind of analysis that is ultimately needed is quite novel even given the vast existing literature on clustering.

\subsection{Our contributions}

We provide several new theoretical results on explainable $k$-means and $k$-medians clustering. Our new algorithms and lower bounds are summarized in Table~\ref{tab:results_summary}.  

\medskip \noindent {\bf Basic limitations.} 
% \paragraph{Basic limitations.} 
A partition into $k$ clusters can be realized by a binary threshold tree with $k-1$ internal splits. This uses at most $k-1$ features, but is it possible to use even fewer, say $\log k$ features? In Section~\ref{sec:motivating-examples}, we demonstrate a simple data set that requires $\Omega(k)$ features to achieve a explainable clustering with bounded approximation ratio compared to the optimal $k$-means/medians clustering. In particular, the depth of the tree might need to be $k-1$ in the worst case.

One idea for building a tree is to begin with a good $k$-means (or $k$-medians) clustering, use it to label all the points, and then apply a supervised decision tree algorithm that attempts to capture this labeling. In Section~\ref{sec:standard-dt-bad}, we show that standard decision tree algorithms, such as ID3, may produce clusterings with arbitrarily high cost. Thus, existing splitting criteria are not suitable for finding a low-cost clustering, and other algorithms are needed.

\medskip \noindent  {\bf New algorithms.} 
% \paragraph{New algorithms.} 
On the positive side, we provide efficient algorithms to find a small threshold tree that comes with provable guarantees on the cost. We note that using a small number of clusters is preferable for easy interpretations, and therefore $k$ is often relatively small.
For the special case of two clusters ($k=2$), we show (Theorem~\ref{thm:optimal_2_median_means}) that a single threshold cut provides a constant-factor approximation to the optimal $2$-medians/means clustering, with a closely-matching lower bound (Theorem~\ref{clm:2_median_lower_bound}), and we provide an efficient algorithm for finding the best cut. For general $k$, we show how to approximate any clustering by using a threshold tree with $k$ leaves (Algorithm~\ref{algo:imm}). The main idea is to minimize the number of mistakes made at each node in the tree, where a mistake occurs when a threshold separates a point from its original center. Overall, the cost of the explainable clustering will be close to the original cost up to a factor that depends on the tree depth (Theorem~\ref{thm:main-k}). In the worst-case, we achieve an approximation factor of $O(k^2)$ for $k$-means and $O(k)$ for $k$-medians compared to the cost of any clustering (e.g., the optimal cost). These results do not depend on the dimension or input size; hence, we get a constant factor approximation when $k$ is constant. 

%{\bf Approximation lower bounds.}
\paragraph{Approximation lower bounds.}
Since our upper bounds depend on $k$, it is natural to wonder whether it is possible to achieve a constant-factor approximation, or whether the cost of explainability grows with~$k$. On the negative side, we identify a data set such that any threshold tree with $k$ leaves must incur an $\Omega(\log k)$-approximation for both $k$-medians and $k$-means (Theorem~\ref{thm:lb-k}). 
For this data set, our algorithm achieves a nearly matching bound for $k$-medians.\vspace{2ex}

\begin{table}[!htb]
\renewcommand{\arraystretch}{1.6}
\centering
\begin{minipage}{.8\textwidth}
\centering
    \begin{tabular}{|c|cc|cc|}
        \hline
        \rowcolor[HTML]{F1F1F1}
        & \multicolumn{2}{c|}{\textbf{$k$-medians}} &
        \multicolumn{2}{c|}{\textbf{$k$-means}} \\
        %\hline
        % \midline
        \rowcolor[HTML]{F1F1F1}
        & $k = 2$ & $k > 2$ & $k = 2$ & $k > 2$  \\
         \hline
        % \midrule
        \cellcolor[HTML]{F1F1F1}
        \textbf{Upper Bound} & {$2$} & {$O(k)$} & {$4$} & {$O(k^2)$} \\
        % \hline
        \cellcolor[HTML]{F1F1F1} \textbf{Lower Bound} & {$2 - \frac{1}{d}$} & {$\Omega(\log k)$} &  {$3\left(1 - \frac{1}{d} \right)^2$} & {$\Omega(\log k)$} \\
        % \bottomrule
        \hline
    \end{tabular}
    \caption{Summary of our upper and lower bounds on approximating the optimal $k$-medians/means clustering with explainable, tree-based clusters. The values express the factor increase compared to the optimal solution in the worst case.
    }
    \label{tab:results_summary}
\end{minipage}
\end{table}

\subsection{Related work}
\label{sec:related}

The majority of work on explainable methods considers supervised learning, and in particular, explaining predictions of neural networks and other trained models~\cite{
alvarez2019weight, deutch2019constraints, garreau2020explaining, kauffmann2019clustering,
lipton2018mythos,lundberg2017unified, molnar2019, murdoch2019interpretable, ribeiro2016should, ribeiro2018anchors, rudin2019stop, sokol2020limetree}. In contrast, there is much less work on explainable unsupervised learning. Standard algorithms for $k$-medians/means use iterative algorithms to produce a good approximate clustering, but this leads to complicated clusters that depend on subtle properties of the data set~\cite{aggarwal09,arthur2007k, kanungo02, ostrovsky2013effectiveness}. Several papers consider the use of decision trees for explainable clustering~\cite{bertsimas2018interpretable, fraiman2013interpretable,geurts2007inferring,ghattas2017clustering, liu2005clustering}. However, all prior work on this topic is empirical, without any theoretical analysis of quality compared to the optimal clustering. We also remark that the previous results on tree-based clustering have not considered the $k$-medians/means objectives for evaluating the quality of the clustering, which is the focus of our work. It is NP-hard to find the optimal $k$-means  clustering~\cite{aloise2009np, dasgupta2008hardness} or even a very close approximation~\cite{awasthi2015hardness}. In other words, we expect tree-based clustering algorithms to incur an approximation factor bounded away from one compared to the optimal clustering.

One way to cluster based on few features is to use dimensionality reduction.
Two main types of dimensionality reduction methods have been investigated for $k$-medians/means. Work on {\em feature selection} shows that it is possible to cluster based on $\Theta(k)$ features and obtain a constant factor approximation for $k$-means/medians~\cite{boutsidis2009unsupervised, cohen2015dimensionality}. However, after selecting the features, these methods employ existing approximation algorithms to find a good clustering, and hence, the cluster assignments are not explainable. Work on {\em feature extraction} shows that it is possible to use the Johnson-Lindenstrauss transform to $\Theta(\log k)$ dimensions, while preserving the clustering cost~\cite{becchetti2019oblivious, makarychev2019performance}. Again, this relies on running a $k$-means/medians algorithm after projecting to the low dimensional subspace. The resulting clusters are not explainable, and moreover, the features are arbitrary linear combinations of the original features.

Besides explainability, many other clustering variants have received recent attention, such as fair clustering~\cite{ahmadian2020fair, backurs2019scalable, bera2019fair,chiplunkar2020solve,  huang2019coresets,kleindessner2019fair, mahabadi2020individual, schmidt2019fair}, online clustering~\cite{bhaskara20a, cohen2019online,hess2019sequential, liberty2016algorithm, moshkovitz2019unexpected}, and the use of same-cluster queries~\cite{ailon2018approximate, ashtiani2016clustering, huleihel2019same, mazumdar2017clustering}. An interesting avenue for future work would be to further develop tree-based clustering methods by additionally incorporating some of these other constraints or objectives.

\section{Preliminaries}

%\section{Preliminaries}
Throughout we use bold variables for vectors, and we use non-bold for scalars such as feature values.
Given a set of points $\cX=\{\vectx^1,\ldots,\vectx^n\}\subseteq\mathbb{R}^d$ and an integer $k$ the goal of $k$-medians and $k$-means clustering is to partition $\cX$ into $k$ subsets and minimize the distances of the points to the centers of the clusters. It is known that the optimal centers correspond to means or medians of the clusters, respectively. Denoting the centers as  $\vectmu^1,\ldots,\vectmu^k$, the aim of $k$-means is to find a clustering that minimizes the following objective
$$\cost_2(\vectmu^1, \ldots, \vectmu^k)=\sum_{\vectx\in \cX} \norm{\vectx-c_2(\vectx)}^2_2,$$ where $c_2(\vectx)=\argmin_{\vectmu \in \{\vectmu^1,\ldots,\vectmu^k\}}{\norm{\vectmu - \vectx}_2}$.
Similarly, the goal of $k$-medians is to minimize 
$$\cost_1(\vectmu^1, \ldots, \vectmu^k)=\sum_{\vectx\in \cX} \norm{\vectx-c_1(\vectx)}_1,$$ where $c_1(\vectx)=\argmin_{\vectmu \in \{\vectmu^1,\ldots,\vectmu^k\}}{\norm{\vectmu - \vectx}_1}$. 
% We refer to the optimal set of $k$-means centers as $opt^2$ and the optimal $k$-medians centers as $opt^1$.
As it will be clear from context whether we are talking about $k$-medians or $k$-means, we abuse notation and write $\cost$ and $c(\vectx)$ for brevity. We also fix the data set and use $opt$ to denote the optimal $k$-medians/means clustering, where the optimal centers are the medians or means of the clusters, respectively; hence, $\cost(opt)$ refers to the cost of the optimal $k$-medians/means clustering.

\subsection{Clustering using threshold trees}

Perhaps the simplest way to define two clusters is to use a \emph{threshold cut}, which partitions the data based on a threshold for a single feature. More formally, the two clusters can be written as $\widehat C^{\theta,i}=(\widehat C^1, \widehat C^2)$, which is defined using a coordinate $i$ and a threshold $\theta\in\mathbb{R}$ in the following way. For each input point $\vectx\in \cX$, we place $\vectx=[x_1, \ldots, x_d]$ in the first cluster $\widehat C^1$ if $x_i\leq\theta$, and otherwise $\vectx \in \widehat C^2$.
A threshold cut can be used to explain $2$-means or $2$-medians clustering because a single feature and threshold determines the division of the data set into exactly two clusters.

For $k > 2$ clusters, we consider iteratively using threshold cuts as the basis for the cluster explanations. More precisely, 
we construct a binary \emph{threshold tree}. This tree is an unsupervised variant of a decision tree. Each internal node contains a single feature and threshold, which iteratively partitions the data, leading to clusters determined by the vectors that reach the leaves. We focus on trees with exactly $k$ leaves, one for each cluster $\{1,2,\ldots, k\}$, which also limits the depth and total number of features to at most $k-1$. 

When clustering using such a tree, it is easy to understand why $\vectx$ was assigned to its cluster: we may simply inspect the threshold conditions on the root-to-leaf path for $\vectx$. This also ensures the number of conditions for the cluster assignment is rather small, which is crucial for interpretability. These tree-based explanations are especially useful in high-dimensional space, when the number of clusters is much smaller than the input dimension ($k \ll d$). 
More formally, a threshold tree $T$ with $k$ leaves induces a $k$-clustering of the data. Denoting these clusters as $\widehat{C}^j \subseteq \cX$, we define the $k$-medians/means cost of the tree as
$$
\cost_1(T) = \sum_{j=1}^k \sum_{x \in \widehat{C}^j} \|x - \mbox{median}(\widehat{C}^j) \|_1
\qquad \mbox{} \qquad 
\cost_2(T) = \sum_{j=1}^k \sum_{x \in \widehat{C}^j} \|x - \mbox{mean}(\widehat{C}^j) \|_2^2 
$$
Our goal is to understand when it is possible to efficiently produce a tree $T$ such that $\cost(T)$ is not too large compared to the optimal $k$-medians/means cost. Specifically, we say that an algorithm is an {\em $a$-approximation}, if the cost is at most $a$ times the optimal cost, i.e., if the algorithm returns threshold tree $T$ then we have
$\cost(T) \leq a\cdot \cost(opt),$ where $opt$ denotes the optimal $k$-medians/means clustering.

\section{Motivating Examples}
\label{sec:motivating-examples}

{\bf Using $k-1$ features may be necessary.}
We start with a simple but important bound showing that trees with depth less than~$k$ (or fewer than $k-1$ features) can be arbitrarily worse than the optimal clustering. Consider the data set consisting of the $k-1$ standard basis vectors $\mathbf{e}^1,\ldots,\mathbf{e}^{k-1} \in \R^{k-1}$ along with the all zeros vector. As this data set has $k$ points,  
the optimal $k$-median/means cost is zero, putting each point in its own cluster. 
Unfortunately,  it is easy to see that for this data, depth $k-1$ is necessary for clustering with a threshold tree.  Figure~\ref{fig:simplex} depicts an optimal tree for this data set. Shorter trees do not work because projecting onto any $k-2$ coordinates does not separate the data, as at least two points will have all zeros in these coordinates. Therefore, any tree with depth at most $k-2$ will put two points in the same cluster, leading to non-zero cost, whereas the optimal cost is zero. In other words, for this data set, caterpillar trees such as Figure~\ref{fig:simplex} are necessary and sufficient for an optimal clustering. This example also shows that $\Theta(k)$ features are tight for feature selection~\cite{cohen2015dimensionality} and provides a separation with feature extraction methods that use a linear map to only a logarithmic number of dimensions~\cite{becchetti2019oblivious, makarychev2019performance}.

\begin{figure}
    \centering
    \subfloat[Optimal threshold tree for the data set in $\R^{k-1}$ consisting of the $k-1$ standard basis vectors and the all zeros vector. Any optimal tree must use all $k-1$ features and have depth~$k-1$.]{
    \begin{minipage}[t]{0.45\textwidth}
    \centering
    \begin{tikzpicture}
	[inner/.style={shape=rectangle, rounded corners, draw, align=center, top color=white, bottom color=gray!40, scale=.85},
	dots/.style={shape=rectangle, align=center, top color=white, scale=.85},
	leaf/.style={shape=rectangle, rounded corners, draw, align=center, scale=.85},
	level 1/.style={sibling distance=2mm},
	level 2/.style={sibling distance=2mm},
	level 3/.style={sibling distance=2mm},
	level 4/.style={sibling distance=2mm},
	level distance=8mm]
	\Tree
	[.\node[inner]{$x_{i_1} \leq 0.5$};
	[.\node[inner]{$x_{i_2} \leq 0.5$};
	[.\node[inner] {$\ldots$};
	[.\node[inner] {$x_{i_d} \leq 0.5$};
	\node[leaf]{\textbf{$\mathbf{0}$}};
	\node[leaf]{\textbf{$\mathbf{e}^{i_d}$}};
	]    
	\node[leaf]{\textbf{$\ldots$}};
	]
	\node[leaf]{\textbf{$\mathbf{e}^{i_2}$}};
	]
	\node[leaf]{\textbf{$\mathbf{e}^{i_1}$}};
	]
	\node[top color=white, bottom color=white, scale=.75] at (-0.8,-4.1) {};
	\end{tikzpicture}
	\end{minipage}
	\label{fig:simplex}}
	\hfill
    \subfloat[The ID3 split results in a $3$-means/medians clustering with arbitrarily worse cost than the optimal because it places the top two points in separate clusters. Our algorithm (Section~\ref{sec:k-means}) instead starts with the optimal first split.
    ]{
    \includegraphics[width=.4\textwidth, trim=-2cm 0 -2cm 0, clip]{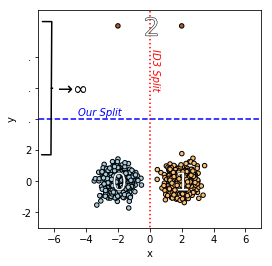}
    \label{fig:imm_vs_id3}}
\caption{Motivating examples showing that (a) threshold trees may need depth $k-1$ to determine $k$ clusters, and (b) standard decision tree algorithms such as ID3 or CART perform very badly on some data sets.}
\end{figure}

%{\bf Standard top-down decision trees do not work}
\paragraph{Standard top-down decision trees do not work.}
\label{sec:standard-dt-bad}
A natural approach to building a threshold tree is to (1) find a good $k$-medians or $k$-means clustering using a standard algorithm, then (2) use it to label all the points, and finally (3) apply a supervised decision tree learning procedure, such as ID3~\cite{quinlan1986induction,quinlan2014c4} to find a threshold tree that agrees with these cluster labels as much as possible. ID3, like other common decision tree algorithms, operates in a greedy manner, where at each step it finds the best split in terms of \emph{entropy} or \emph{information gain}. We will show that this is not a suitable strategy for clustering and that the resulting tree can have cost that is arbitrarily bad. 
In what follows, denote by $\cost({ID3}_{\ell})$  the cost of the decision tree with~$\ell$ leaves returned by ID3 algorithm.

% \begin{wrapfigure}{r}{0.45\textwidth}
% \begin{figure}[t]
% 	\centering
% 	\includegraphics[width=.35\textwidth]{our_vs_id3.png}
% 	\caption{Comparison between the first split of an entropy-based decision tree (e.g. ID3) versus the optimal first threshold. By placing the top two points in separate clusters, the ID3 split results in a $3$-means/medians clustering with arbitrarily worse cost.}
% 	\label{fig:imm_vs_id3}
% \end{figure}
% \end{wrapfigure}

Figure \ref{fig:imm_vs_id3} depicts a data set $\cX \subseteq \reals^2$ partitioned into three clusters $\cX = \cX_0 \cupdot \cX_1 \cupdot \cX_2$.
We define two centers $\vectmu^0=(-2,0)$ and $\vectmu^1=(2,0)$ and for each  $i\in \{0, 1\}$, we define~$\cX_i$ as $500$ i.i.d. points $\vectx \sim \mathcal{N}(\vectmu^i, \epsilon)$ for some small $\epsilon > 0$. 
Then, $\cX_2 = \{(-2, v), (2, v)\}$ where $v \to \infty$.  
With high probability, we have that the optimal $3$-means clustering is $(\cX_0, \cX_1, \cX_2)$, i.e. $\vectx \in \cX$ gets label $y\in\{0,1,2\}$ such that $\vectx \in \cX_y$.
The ID3 algorithm minimizes the entropy at each step. In the first iteration, it splits between the two large clusters. As a result $(-2, v)$ and $(2, v)$ will also be separated from one another. Since $ID3_3$ outputs a tree with exactly three leaves, one of the leaves must contain a point from $\cX_2$ together with points from either $\cX_0$ or $\cX_1$, this means that $\cost(ID3_3)= \Omega(v) \to \infty$.
Note that $\cost((\cX_1, \cX_2, \cX_3))$ does not depend on $v$, and hence, it is substantially smaller than $\cost(ID3_3)$.
Unlike ID3, the optimal threshold tree first separates $\cX_2$ from $\cX_0 \cupdot \cX_1$, and in the second split it separates $\cX_0$ and $\cX_1$. Putting the outliers in a separate cluster is necessary for an optimal clustering. It is easy to extend this example to more clusters or to when ID3 uses more leaves.

\section{Two Clusters Using a Single Threshold Cut}
\label{sec:2-means}
% \label{sec:2means-alg}

In this section, we consider the case of $k=2$ clusters, and we study how well a single threshold cut can approximate the optimal partition into two clusters. 

\subsection{Algorithm for \texorpdfstring{$k=2$}{k=2}}
\label{sec:2means-alg}

We present an algorithm to efficiently minimize the cost using a single threshold cut.  We begin by considering a single feature $i$ and determining the value of the best threshold $\theta \in \R$ for this feature. Then, we minimize over all features $i \in [d]$ to output the best threshold cut. We focus on the $2$-means algorithm; the $2$-medians case is similar.

For feature~$i$, we first sort the input points according to this feature, i.e., assume that the vectors are indexed as $x^1_i \leq \ldots \leq x^n_i.$
Notice that when restricting to this feature, there are only $n-1$ possible partitions of the data set into two non-empty clusters. In particular, we can calculate the cost of all threshold cuts for the $i$th feature by scanning the values in this feature from smallest to largest. 
Then, we compute for each position $p \in [n-1]$ $$\cost(p)=\sum_{j=1}^p\norm{\vectx^j-\vectmu^1(p)}_2^2 + \sum_{j=p+1}^n\norm{\vectx^j-\vectmu^2(p)}_2^2,$$
where we denote the optimal centers for these clusters as  $\vectmu^1(p)=\frac{1}{p}\sum_{j=1}^p \vectx^j$ and $\vectmu^2(p)=\frac{1}{n - p}\sum_{j=p + 1}^n \vectx^j$ because these are the means of the first $p$ and last $n-p$ points, respectively. Because there are $O(nd)$ possible thresholds, and naively computing the cost of each requires time $O(nd)$, this would lead to a running time of $O(n^2d^2)$. 
We can improve the time to $O(nd^2 + nd\log n)$ by using dynamic programming. Pseudo-code for the algorithm and description of the dynamic programming are in Appendix~\ref{sec:DP_Implementation}.

\subsection{Theoretical guarantees for \texorpdfstring{$k=2$}{}}

We prove that there always exists a threshold cut with low cost. Since our algorithm from the previous section finds the {\em best} cut, it achieves the guarantees of this theorem.
\begin{theorem}\label{thm:optimal_2_median_means}
	For any data set $\cX \subseteq \R^d$, there is a threshold cut $\widehat C$ such that the $2$-medians cost satisfies
	$$\cost(\widehat C)\leq 2 \cdot \cost(opt),$$
	and there is a threshold cut $\widehat C$ such that the $2$-means cost satisfies
	$$\cost(\widehat C)\leq 4 \cdot \cost(opt),$$
	where $opt$ is the optimal 2-medians or means clustering.
\end{theorem}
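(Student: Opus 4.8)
The plan is to fix the optimal $2$-clustering $opt = (C^1, C^2)$ with optimal centers $\vectmu^1, \vectmu^2$, and to exhibit a single threshold cut whose cost is bounded against $\cost(opt)$. Since our algorithm finds the best cut over all features and thresholds, it suffices to construct \emph{some} good cut. The natural idea is to cut along the single coordinate $i$ that most contributes to separating $\vectmu^1$ from $\vectmu^2$, namely the coordinate maximizing $|\mu^1_i - \mu^2_i|$ (for $k$-medians) or contributing the most to $\|\vectmu^1 - \vectmu^2\|_2^2$ (for $k$-means). Without loss of generality $\mu^1_i \le \mu^2_i$; place the threshold $\theta$ somewhere in the interval $[\mu^1_i, \mu^2_i]$ between the two centers on that coordinate, which guarantees that every point landing on the "wrong" side of $\theta$ is at least as far (in coordinate $i$) from its own center as the random threshold forces. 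The cut $\widehat C = (\widehat C^1, \widehat C^2)$ then differs from $opt$ only by the points that got separated from their center.

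The key steps, in order: (1) For the chosen coordinate $i$, observe $|\mu^1_i - \mu^2_i|$ is large: for $k$-medians, $|\mu^1_i - \mu^2_i| \ge \|\vectmu^1 - \vectmu^2\|_1 / d$ is too weak, so instead I would argue via averaging that choosing $i$ proportional to the coordinate gaps and choosing $\theta$ uniformly in $[\mu^1_i,\mu^2_i]$ makes the expected misclassification "charge" controllable. (2) Bound the cost of $\widehat C$ by first bounding the cost of the \emph{worse} clustering that keeps $\widehat C^1, \widehat C^2$ as the sets but uses the \emph{old} centers $\vectmu^1, \vectmu^2$ — this only increases cost — and then for each misclassified point $\vectx$ (say $\vectx \in C^1$ but $x_i > \theta$, so $\vectx \in \widehat C^2$), bound $\|\vectx - \vectmu^2\|$ by a triangle inequality against $\|\vectx - \vectmu^1\|$ plus $\|\vectmu^1 - \vectmu^2\|$, and charge $\|\vectmu^1 - \vectmu^2\|$ to the fact that $\vectx$ already crossed the threshold, i.e. $|x_i - \mu^1_i| \ge |\theta - \mu^1_i|$. (3) For $k$-medians, this triangle-inequality argument combined with the observation that a misclassified point is at $\ell_1$-distance at least the relevant coordinate gap from its own center yields the factor $2$; the extra unit comes from doubling the contribution of the crossing points. (4) For $k$-means, replace the triangle inequality by $\|\vectx - \vectmu^2\|_2^2 \le 2\|\vectx - \vectmu^1\|_2^2 + 2\|\vectmu^1 - \vectmu^2\|_2^2$ (or a sharper $(1+t)\|\cdot\|^2 + (1+1/t)\|\cdot\|^2$ tuned at the end), and again charge the center-gap term to the crossing, losing the extra factor of $2$ relative to the medians case and arriving at $4$. (5) Finally, replace the old centers by the \emph{optimal} centers of $\widehat C^1, \widehat C^2$ (means/medians of those sets), which can only decrease $\cost(\widehat C)$, completing the bound; and note the expectation over $\theta$ being at most the target implies existence of a deterministic good $\theta$, which our algorithm's exhaustive search over the $n-1$ candidate thresholds per feature finds.

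The main obstacle I anticipate is step (1)–(2): making the charging argument tight enough to get the stated constants ($2$ and $4$) rather than something larger. The subtlety is that a misclassified point $\vectx\in C^1$ with $x_i>\theta$ must be charged only once — its true cost contribution to $opt$ is $\|\vectx - \vectmu^1\|$, and we must show the \emph{increase} $\|\vectx-\vectmu^2\| - \|\vectx-\vectmu^1\|$ (or the squared analogue) is dominated by a quantity that, summed over all crossing points and averaged over the threshold choice, is at most $\cost(opt)$ itself. This requires carefully exploiting that the point has \emph{already} traveled distance $\ge|\theta-\mu^1_i|$ in coordinate $i$ away from $\vectmu^1$, so its own optimal cost $\|\vectx-\vectmu^1\|$ absorbs the center-gap term when $\theta$ is averaged over $[\mu^1_i,\mu^2_i]$. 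Getting the bookkeeping right — particularly ensuring no point is double-charged and that the two "sides" of the cut are handled symmetrically — is where the real work lies; the rest is triangle inequalities and the fact that passing to optimal centers only helps.
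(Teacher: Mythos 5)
The $2$-medians half of your plan can be made to work and is genuinely different from the paper's argument. If you sample the coordinate $i$ with probability $(\mu^2_i-\mu^1_i)/\onenorm{\vectmu^2-\vectmu^1}$ (taking $\mu^1_i\le\mu^2_i$) and then $\theta$ uniformly on $[\mu^1_i,\mu^2_i]$, a point $\vectx\in C^1$ crosses the cut with probability $\sum_i \min\bigl(\mu^2_i-\mu^1_i,\,(x_i-\mu^1_i)_+\bigr)/\onenorm{\vectmu^2-\vectmu^1}\le \onenorm{\vectx-\vectmu^1}/\onenorm{\vectmu^2-\vectmu^1}$, and each crossing point pays an excess of at most $\onenorm{\vectmu^1-\vectmu^2}$ (triangle inequality with the old centers; re-centering only helps). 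So the expected excess is at most $\cost(opt)$, giving existence of a cut of cost at most $2\cost(opt)$. The paper instead fixes the cut minimizing the number of changed points $t$ and proves $\cost(opt)\ge t\onenorm{\vectmu^1-\vectmu^2}$ via a Hall's-theorem matching (Lemma~\ref{lemma:matching}); your averaging argument reaches the same constant $2$ without the combinatorial lemma, which is a legitimate alternative.

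The $2$-means half, however, has a genuine gap: the crossing probability scales \emph{linearly} in the coordinate displacement while the quantity you must charge is the \emph{squared} center gap, so the expected-excess bookkeeping does not close. Concretely, take $d=1$, $\mu^1=0$, $\mu^2=1$, cluster $C^1$ consisting of many points at $0$ plus one point at $\epsilon$, and $C^2$ of many points at $1$. Then $\cost(opt)\approx\epsilon^2$, but a uniform threshold in $[0,1]$ misclassifies the point at $\epsilon$ with probability $\epsilon$ and that event costs about $1$, so the expected cost is $\Theta(\epsilon)\gg 4\epsilon^2$. Hence no constant factor can be extracted from the uniform-$\theta$ averaging, regardless of how the $(1+t),(1+1/t)$ split is tuned; step (4) as written fails. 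To repair it you would need either a threshold density vanishing at the endpoints of $[\mu^1_i,\mu^2_i]$ together with explicit control of the cross term $2\inner{\vectx-\vectmu^1}{\vectmu^1-\vectmu^2}$, or the paper's route: take the cut minimizing the number of changes, use the matching lemma to extract $t$ disjoint cross pairs per coordinate, and run the two-case analysis of Claim~\ref{claim:aux-2-means} that applies Cauchy--Schwarz only once, which is exactly what brings the constant down to $4$ (the generic ``$2a^2+2b^2$'' charging you propose yields at best $6$ even with the matching in hand).
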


The key idea of the analysis is to bound the cost of the threshold clustering in terms of the number of points on which it disagrees with an optimal clustering. Intuitively, if any threshold cut must lead to a fairly different clustering, then the cost of the optimal 2-medians/means clustering must also be large.

We note that it is possible to prove a slightly weaker bound by using the midpoint (for each feature) between the centers. When there are $t$ changes, using the midpoint shows that $\cost(opt)$ is at least $t$ times {\em half} of the distance between the two centers. In other words, this argument only captures half of the cost. Using Hall’s theorem, we show that each change corresponds to a pair in the matching, and each such pair contributes to $\cost(opt)$ the distance between the centers (not half as before). This improves the bound by a factor of two. The proof for $2$-means is in Appendix~\ref{sec:k2_upper_bound}.

\paragraph{Notation.}
% {\bf Notation.} 
We denote the optimal clusters as $C^1$ and $C^2$ with optimal centers $\vectmu^1$ and $\vectmu^2$. Notice that we  can assume $\mu^1_i \leq \mu^2_i$ for each coordinate $i$ because negating the $i$th coordinate for all points in the dataset does not change the $2$-medians/means cost.
Assume that a single threshold partitions $\cX$ into $\widehat C^1, \widehat C^2$ such that 
$$t = \min(|C^1 \Delta \widehat C^1|, |C^1 \Delta \widehat C^2|).$$ We refer to these $t$ points as {\em changes} and assume that $t$ is the minimum possible over all threshold cuts.

If $C^1,C^2$ is an optimal $2$-medians clustering, then we prove that the cost of  $\widehat C^1, \widehat C^2$ is at most twice the optimal $2$-medians cost. Similarly, if $C^1,C^2$ is an optimal $2$-means clustering, then we prove that the cost of $\widehat C^1, \widehat C^2$ is at most four times the optimal $2$-means cost. We simply need that the threshold cut $\widehat C = (\widehat C^1, \widehat C^2)$ minimizes the number of changes $t$ compared to the optimal clusters.

We begin with a structural claim regarding the best threshold cut. This will allow us to obtain a tighter bound on the optimal $2$-medians/means cost, compared to the general $k>2$ case, in terms of the necessary number of changes. We utilize Hall's theorem on perfect matchings. 

\begin{proposition}[Hall's Theorem] Let $(P,Q)$ be a bipartite graph. If all subsets $P' \subseteq P$ have at least $|P'|$ neighbors in $Q$, then there is a matching of size $|P|$.
\end{proposition}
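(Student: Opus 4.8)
The plan is to prove Hall's theorem by strong induction on $n = |P|$, following the classical Halmos--Vaughan argument. Throughout, for $S \subseteq P$ let $N(S) \subseteq Q$ denote the set of vertices adjacent to some vertex of $S$; the hypothesis is that $|N(S)| \ge |S|$ for every $S \subseteq P$. The base case $n = 1$ is immediate: the single vertex of $P$ has at least one neighbor, and matching it to such a neighbor gives a matching of size $1$.

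For the inductive step, assume the claim holds for every bipartite graph whose left side has fewer than $n$ vertices, and split into two cases according to how tight the Hall condition is. In the first case, suppose every nonempty proper subset $S \subsetneq P$ in fact satisfies $|N(S)| \ge |S| + 1$. Pick an arbitrary $p \in P$ and an arbitrary neighbor $q$ of $p$ (which exists since $|N(\{p\})| \ge 1$), add the edge $pq$ to the matching, and delete $p$ and $q$ from the graph. In the resulting graph the left side $P \setminus \{p\}$ has $n - 1$ vertices, and for any $S \subseteq P \setminus \{p\}$ the new neighborhood has size at least $|N(S)| - 1 \ge (|S| + 1) - 1 = |S|$, so the Hall condition is preserved; the inductive hypothesis then supplies a matching of size $n - 1$, and together with $pq$ we obtain a matching of size $n$.

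In the second case, there is a nonempty proper subset $A \subsetneq P$ that is tight, i.e.\ $|N(A)| = |A|$. I would restrict attention to the subgraph between $A$ and $N(A)$: for $S \subseteq A$ the neighborhood is unchanged, so the Hall condition holds there, and since $|A| < n$ the inductive hypothesis gives a matching $M_1$ of size $|A|$ that saturates $A$ using only vertices of $N(A)$. Next, consider the subgraph between $P \setminus A$ and $Q \setminus N(A)$; I claim it too satisfies the Hall condition. Indeed, for $B \subseteq P \setminus A$, if the new neighborhood $N(B) \setminus N(A)$ had fewer than $|B|$ vertices, then $|N(A \cup B)| \le |N(A)| + |N(B) \setminus N(A)| < |A| + |B| = |A \cup B|$ (using that $A$ and $B$ are disjoint), contradicting the Hall condition for $A \cup B$. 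Since $|P \setminus A| < n$, the inductive hypothesis yields a matching $M_2$ saturating $P \setminus A$ inside $Q \setminus N(A)$. Because $N(A)$ and $Q \setminus N(A)$ are disjoint, $M_1 \cup M_2$ is a valid matching saturating all of $P$, of size $|A| + |P \setminus A| = n$, completing the induction.

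The one genuinely delicate point is the second case: one must pair the tight set $A$ with exactly $N(A)$ so that the two sub-problems have disjoint vertex sets on the $Q$ side, and simultaneously verify that the Hall condition descends to the ``outside'' graph between $P \setminus A$ and $Q \setminus N(A)$ --- which is precisely where the tightness $|N(A)| = |A|$ enters, via the union bound $N(A \cup B) = N(A) \cup N(B)$ above. Everything else is routine bookkeeping. As an alternative, one could instead argue via augmenting paths (or invoke max-flow/min-cut or K\H{o}nig's theorem), but the inductive proof is self-contained and avoids importing additional machinery, so I would present that one.
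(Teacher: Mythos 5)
Your proof is correct: it is the classical Halmos--Vaughan induction, with the slack case handled by deleting a matched pair and the tight-set case handled by splitting into the subgraph on $A$ and $N(A)$ and the subgraph on $P \setminus A$ and $Q \setminus N(A)$, and the key verification that the Hall condition descends to the outside graph is carried out correctly using $|N(A)| = |A|$. Note, however, that the paper does not prove this proposition at all: Hall's theorem is stated as known background and invoked as a black box in the proof of the matching lemma (Lemma~\ref{lemma:matching}), so there is no proof in the paper to compare yours against. Your argument is a standard, self-contained, and complete proof of the classical result, and it would serve perfectly well if one wished to make the paper self-contained on this point.
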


\begin{lemma} \label{lemma:matching}
    Let $C^1$ and $C^2$ be the optimal clustering of $\cX \subseteq \R^d$, and assume that any threshold cut requires $t$ changes.
For each $i \in [d]$, there are $t$ disjoint pairs of vectors $(\mathbf{p}^j,\mathbf{q}^j)$ in $\cX$ such that $\mathbf{p}^j\in C^1$ and $\mathbf{q}^j\in C^2$ and  $q^j_i\leq p^j_i$ for every $j \in [t]$. 
\end{lemma}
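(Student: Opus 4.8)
The plan is to fix a coordinate $i$, assume without loss of generality that $\mu^1_i \le \mu^2_i$ (as in the notation set up above), and build the bipartite \emph{inversion graph} $G$ whose two sides are $C^1$ and $C^2$ and whose edges are exactly the pairs $(\mathbf{p},\mathbf{q})$ with $\mathbf{p}\in C^1$, $\mathbf{q}\in C^2$ and $q_i \le p_i$ — i.e.\ the pairs that violate the ``natural'' order in coordinate $i$. A matching of size $t$ in $G$ is precisely a choice of $t$ disjoint pairs of the kind demanded by the lemma, so it suffices to show that $G$ has a matching of size at least $t$. By König's theorem (equivalently, the deficiency version of the Hall's theorem stated above), this is the same as showing that \emph{every} vertex cover of $G$ has size at least $t$; one could also argue this concretely by building the matching greedily while scanning the points of $\cX$ sorted by coordinate $i$.

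The main step I would carry out is to convert an arbitrary vertex cover $S = S^1 \cupdot S^2$ (with $S^1\subseteq C^1$, $S^2\subseteq C^2$) into a threshold cut with at most $|S|$ changes. Since $S$ covers every edge, there is no edge between $C^1\setminus S^1$ and $C^2\setminus S^2$, which by the definition of $G$ means $q_i > p_i$ for every $\mathbf{p}\in C^1\setminus S^1$ and $\mathbf{q}\in C^2\setminus S^2$; hence $\max_{\mathbf{p}\in C^1\setminus S^1} p_i < \min_{\mathbf{q}\in C^2\setminus S^2} q_i$. Choosing a threshold $\theta$ strictly between these two values gives a threshold cut $\widehat C^{\theta,i} = (\widehat C^1, \widehat C^2)$ with $\widehat C^1 = \{x : x_i \le \theta\}$. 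Aligning $C^1$ with the low side $\widehat C^1$, a point is a ``change'' only if it lies in $C^1$ with $x_i > \theta$ or in $C^2$ with $x_i \le \theta$; by the choice of $\theta$ every such point must lie in $S^1$ or in $S^2$ respectively, so this cut has at most $|S|$ changes. Since $t$ is the minimum number of changes over all threshold cuts, we conclude $|S| \ge t$, as needed.

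Putting this together, $G$ has a matching of size at least $t$, and keeping any $t$ of its edges yields the required disjoint pairs $(\mathbf{p}^j,\mathbf{q}^j)$; repeating the argument for every coordinate $i\in[d]$ finishes the proof. The part I expect to be delicate is the degenerate situation in which $C^1\setminus S^1$ or $C^2\setminus S^2$ is empty: then the ``threshold cut'' constructed above would have an empty side and is not a legitimate two-clustering, so the inequality $|S|\ge t$ is not immediate. In that case one has $|S|\ge |C^1|$ (or $|S|\ge|C^2|$), so it suffices to know that each cluster already has size at least $t$; this is exactly the point where the hypothesis that $(C^1,C^2)$ is the \emph{optimal} clustering is used — for an arbitrary clustering the statement can fail — and a short separate argument (e.g.\ exhibiting an explicit cut that isolates the smaller cluster up to few changes) establishes $\min(|C^1|,|C^2|)\ge t$, which also guarantees that the $2t$ chosen vectors are genuinely distinct.
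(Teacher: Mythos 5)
Your proof is correct and amounts to the dual of the paper's argument. The paper works with Hall's condition directly: it restricts $P$ to the $t$ points of $C^1$ with the largest $i$th coordinate, observes that their neighborhoods in $C^2$ are nested, and shows that a Hall violator $\mathbf{p}^j$ (having fewer than $j$ neighbors) yields a threshold cut at $p^j_i$ with only $(t-j)+(j-1) < t$ changes. You instead argue via K\"onig's theorem, converting an arbitrary vertex cover $S$ of the inversion graph on $C^1 \times C^2$ into a threshold cut with at most $|S|$ changes and concluding $|S| \geq t$. These are two presentations of the same obstruction-to-matching idea; yours is somewhat more symmetric (you keep all of $C^1$ and $C^2$, and need neither the ``top-$t$'' restriction nor the nested-neighborhood observation), while the paper's is a touch more elementary in that it verifies Hall's condition from scratch. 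Both produce the threshold cut from the ``small obstruction'' in the same way, by placing the threshold between the $C^1$-points one must keep low and the $C^2$-points one must keep high.

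You also correctly flag that the argument needs $\min(|C^1|,|C^2|) \geq t$, and handle the degenerate vertex covers $S^1 = C^1$ or $S^2 = C^2$ by appealing to it. Be aware that the paper's proof relies on the same fact implicitly: defining $P$ as the $t$ points of $C^1$ with largest $i$th coordinate already requires $|C^1| \geq t$, and the cut at $p^t_i$ having a nonempty right side requires at least one point of $C^2$ strictly above $p^t_i$, hence $|C^2| \geq t$. The lemma's conclusion (requiring $t$ distinct points in each of $C^1$ and $C^2$) also forces this. Your identification of this as the delicate spot, and of optimality of $(C^1,C^2)$ as the hypothesis it must come from, is well placed; but the ``short separate argument'' you allude to is not actually given, and it is not given in the paper either, so you should treat that inequality as an unclosed loose end shared by both presentations rather than as a step that has been dispatched.
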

\begin{proof}
	Let $\vectmu^1$ and $\vectmu^2$ be the centers for the optimal clusters $C^1$ and $C^2$. Focus on index $i \in [d]$, and assume without loss of generality that  $\mu^1_i\leq \mu^2_i$. The $t$ pairs correspond to a matching the following bipartite graph $(P,Q)$. Let $Q = C^2$ and define $P \subseteq C^1$ as the $t$ points in $C^1$ with largest value in their $i$th coordinate. Connect $\mathbf{p} \in P$ and $\mathbf{q} \in Q$ by an edge if only if $q_i\leq p_i.$ 
	By construction, a matching with $t$ edges implies our claim.
	By Hall's theorem, we just need to prove that $P'\subseteq P$ has at least $|P'|$ neighbors.
	
	Index $P = \{\mathbf{p}^1,\ldots, \mathbf{p}^t\}$ by ascending value of $i$th coordinate, $p^1_i \leq \cdots \leq p^t_i.$  Now, notice that vertices in $P$ have nested neighborhoods: for all $j > j'$, the neighborhood of $\mathbf{p}^{j'}$ is a subset of the neighborhood of $\mathbf{p}^{j}$. It suffices to prove that $\mathbf{p}^{j}$ has at least $j$ neighbors, because this implies that any subset $P' \subseteq P$ has at least $|P'|$ neighbors, guaranteeing a matching of size $|P| = t$. Indeed, if $|P'| = b$ then we know that $\mathbf{p}^{j} \in P'$ for some $j \geq b$, implying that $P'$ has at least $j \geq b = |P'|$ neighbors.

	Assume for contradiction that $\mathbf{p}^j$ has at most $j-1$ neighbors. We argue that the threshold cut $x_i\leq p^j_i$ has fewer than $t$ changes, which contradicts the fact that all threshold cuts must make at least $t$ changes.  By our assumption, there are at most $j-1$ points that are smaller than $p^j_i$ and belong to the second cluster. 
	By the definition of $P$, there are exactly $t-j$ points with a larger $i$th coordinate than $p^j_i$ in the first cluster. Therefore, the threshold cut $x_i\leq p^j_i$ makes at most $(t-j)+(j-1)<t$ changes, a contradiction.
\end{proof}

\subsection{Upper Bound Proof for 2-medians}
% {\bf Proof for 2-medians.}
	Suppose $\vectmu^1,\vectmu^2$ are optimal $2$-medians centers for clusters $C^1$ and $C^2$, and that the threshold cut $\widehat C$ makes $t$ changes, which is the minimum possible. 
	A simple argument allows us to upper bound the cost of the threshold cut as $\cost(opt)$ plus an error term that depends on the number of changes. More formally, Lemma~\ref{lemma:tree-cost} (see Section~\ref{sec:general_k_upper_bound}) in the special case of $k=2$ implies that 
	$$\cost(\widehat C) \leq \cost(opt) + t\onenorm{\vectmu^1-\vectmu^2}.$$ 
	In other words, the core of the argument is to prove that
	$t\onenorm{\vectmu^1-\vectmu^2} \leq \cost(opt).$

	Applying Lemma~\ref{lemma:matching} for each coordinate $i\in[d]$ guarantees~$t$ pairs of vectors $(\mathbf{p}^1,\mathbf{q}^1), \ldots, (\mathbf{p}^t,\mathbf{q}^t)$ with the following properties. Each $p^j_i$ corresponds to the $i$th coordinate of some  point in $C^1$ and $q^j_i$ corresponds to the $i$th coordinate of some point in $C^2$. Furthermore, for each coordinate, the~$t$ pairs correspond to $2t$ distinct points in $\cX$. Finally, we can assume without loss of generality that
	$\mu^1_i \leq \mu^2_i$ and $q^j_i \leq p^j_i$, 
	which implies that 
	\begin{eqnarray*}
	\cost(opt) \ge \sum_{i=1}^d\sum_{j=1}^t |\mu^2_i-q_i^j| + |p^j_i-\mu^1_i| &\geq&
	\sum_{i=1}^d\sum_{j=1}^t(\mu^2_i-q^j_i) + (p^j_i-\mu_i^1)
	\\ &\geq& \sum_{i=1}^d\sum_{j=1}^t(\mu^2_i-q^j_i) + (q^j_i-p^j_i) + (p^j_i-\mu_i^1)
	\\ &=& t \cdot \sum_{i=1}^d(\mu^2_i-\mu_i^1)
	=t\onenorm{\vectmu^2-\vectmu^1}.
		\end{eqnarray*}

\subsection{Lower bounds for \texorpdfstring{$k=2$}{}}

We next show that optimal clustering is not, in general, realizable with a single threshold cut, except in a small number of dimensions (e.g., $d=1$). 
Our lower bounds on the approximation ratio increase with the dimension, approaching two for $2$-medians or three for $2$-means. 

The two lower bounds are based on a data set $\cX \subseteq \mathbb{R}^d$ consisting of $2d$ points, split into two optimal clusters each with $d$ points. The first cluster contains the $d$ vectors of the form $\mathbf{1} - \mathbf{e}^i$, where $\mathbf{e}^i$ is the $i$th coordinate vector and $\mathbf{1}$ is the all-ones vector. The second cluster contains their negations, $-\mathbf{1} + \mathbf{e}^i$. Due to the zero-valued coordinate in each vector, any threshold cut must separate at least one vector from its optimal center. In the case of $2$-medians, each incorrect cluster assignment incurs a cost of $2d$. The optimal cost is roughly $2d$, while the threshold cost is roughly $4d$ (correct assignments contribute $\approx 2d$, plus $2d$ from the error), leading to an approximation ratio of nearly two. A similar result holds for $2$-means. The proof of these two lower bounds is in Appendix~\ref{sec:k2_lower_bound}.

\begin{theorem}\label{clm:2_median_lower_bound}
For any integer $d \geq 1$, define data set $\cX \subseteq \mathbb{R}^d$ as above. Any threshold cut $\widehat C$ must have $2$-medians cost 
$$\cost(\widehat C)\geq \left(2-\frac1d\right)\cdot \cost(opt)$$
and $2$-means cost
$$\cost(\widehat C)\geq 3\left(1-\frac1{d}\right)^2\cdot \cost(opt),$$
where $opt$ is the optimal 2-medians or means clustering.
\end{theorem}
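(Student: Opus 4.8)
The plan is to bound the ratio $\cost(\widehat C)/\cost(opt)$ by controlling numerator and denominator separately. For the denominator it suffices to \emph{upper} bound $\cost(opt)$, which I would do by exhibiting the antipodal clustering $(C^1,C^2)$ of the construction. For $2$-medians the coordinate-wise median of $C^1$ is $\mathbf 1$ and that of $C^2$ is $-\mathbf 1$ (for $d\ge 2$), so $\cost_1(opt)\le \cost_1(C^1,C^2)=2d$. For $2$-means the mean of $C^1$ is $(1-\tfrac1d)\mathbf 1$ and that of $C^2$ is $-(1-\tfrac1d)\mathbf 1$, and a short computation gives $\cost_2(opt)\le \cost_2(C^1,C^2)=2(d-1)$. (These clusterings are in fact optimal, but only the upper bound is needed here.)

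The substance is a \emph{lower} bound on $\cost(\widehat C)$ holding for every threshold cut. The construction is invariant under permuting coordinates and under negating all coordinates (which merely swaps $C^1$ and $C^2$ and preserves cost), so without loss of generality the cut thresholds the first coordinate. The first coordinates of the $2d$ points form the multiset consisting of $d-1$ copies of $+1$, $d-1$ copies of $-1$, and exactly two zeros, the zeros coming from $\mathbf 1-\mathbf e^1\in C^1$ and $-\mathbf 1+\mathbf e^1\in C^2$. Hence there are only two threshold cuts of this coordinate into two non-empty parts, and up to swapping the sides both produce the \emph{same} partition: $\widehat C^1=\{-\mathbf 1+\mathbf e^j:j\ge 2\}$, of size $d-1$ and contained in $C^2$, and $\widehat C^2=C^1\cup\{-\mathbf 1+\mathbf e^1\}$, of size $d+1$, consisting of all of $C^1$ together with one ``intruder'' from $C^2$. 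It is precisely the single zero coordinate in every vector that forces this lopsided shape: no threshold cut can separate $C^1$ from $C^2$ cleanly.

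It then remains to evaluate $\cost(\widehat C)=\cost(\widehat C^1)+\cost(\widehat C^2)$ for this fixed partition. For both objectives the cost of a cluster decomposes over coordinates (coordinate-wise $1$-median cost for $\ell_1$; within-coordinate sum of squared deviations for squared $\ell_2$), and since every coordinate of every point lies in $\{-1,0,1\}$, each coordinate contributes a one-dimensional tally. For $2$-medians I would get $\cost_1(\widehat C^1)=d-1$ and $\cost_1(\widehat C^2)=2+3(d-1)=3d-1$ (cost $2$ in the intruder's zero coordinate, cost $3$ in each of the other $d-1$ coordinates), so $\cost_1(\widehat C)=4d-2=(2-\tfrac1d)\cdot 2d\ge(2-\tfrac1d)\cost_1(opt)$. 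For $2$-means the mean of $\widehat C^2$ is pulled toward the intruder; tallying the per-coordinate variances gives $\cost_2(\widehat C^1)=d-2$ and $\cost_2(\widehat C^2)=\tfrac{(d-1)(5d-2)}{d+1}$, hence $\cost_2(\widehat C)=\tfrac{2d(3d-4)}{d+1}$, and dividing by $\cost_2(opt)\le 2(d-1)$ and simplifying yields a factor at least $3\left(1-\tfrac1d\right)^2$. The two factors approach $2$ and $3$ as $d\to\infty$, so for $2$-medians this also shows the constant $2$ of Theorem~\ref{thm:optimal_2_median_means} is asymptotically optimal.

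I expect the only conceptual step to be the case analysis of the second paragraph --- arguing that \emph{no} threshold cut beats the one-intruder partition --- which becomes immediate once one observes that each feature takes only three distinct values, so there are no exotic cuts to rule out. The remaining work is bookkeeping; the part most prone to slips is the $2$-means calculation, where, unlike $2$-medians (whose cost inflation is cleanly $t\onenorm{\vectmu^1-\vectmu^2}$ with $t$ the number of changes, as in the upper-bound argument), the cluster mean shifts toward the intruder, so one must track per-coordinate variances directly rather than invoking the distance between the two optimal centers. The degenerate cases $d\le 2$ should also be checked separately.
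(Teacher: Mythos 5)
Your proof follows the same route as the paper: upper bound $\cost(opt)$ by the cost of the antipodal clustering $(C^1,C^2)$, observe that up to symmetry every threshold cut produces the one-intruder partition $\bigl(\widehat C^1,\widehat C^2\bigr)$ because each coordinate takes only the values $\{-1,0,1\}$, and evaluate the cost of that partition directly. Your numbers agree with the paper for $2$-medians ($d-1$ and $3d-1$, giving $4d-2$). For $2$-means you are actually \emph{more} careful than the paper: using the variance identity you obtain $\cost_2(\widehat C^2)=\tfrac{(d-1)(5d-2)}{d+1}$, which is correct, whereas the paper's per-coordinate tally miscounts the ``other'' coordinates (it writes $(d-1)\bigl(1-\tfrac{d-2}{d-1}\bigr)^2$ where the correct term is $(d-2)\bigl(1-\tfrac{d-2}{d+1}\bigr)^2$), inflating the answer to $\tfrac{(d-1)(5d^2+3d+7)}{(d+1)^2}$. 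The error does not affect the paper's conclusion since the final bound only needs a lower bound, and your smaller (correct) value still satisfies $\tfrac{2d(3d-4)}{d+1}\ge 3\bigl(1-\tfrac1d\bigr)^2\cdot 2(d-1)$ for $d\ge 2$. Finally, you are right to flag small $d$: the $2$-medians cost formulas $\cost_1(\widehat C^1)=d-1$, $\cost_1(\widehat C^2)=3d-1$ are valid only for $d\ge 3$; for $d=2$ the threshold cut actually achieves $\cost(\widehat C)=3=\cost(opt)$ (via a $1$--$3$ split), so the stated $2$-medians bound genuinely fails there and the paper glosses over this as well. None of this is a gap in your argument relative to the paper's; your write-up is, if anything, tighter.
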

\section{Threshold trees with $k > 2$ leaves}
\label{sec:k-means}

We provide an efficient algorithm to produce a threshold tree with $k$ leaves that constitutes an approximate $k$-medians or $k$-means clustering of a data set $\cX$. Our algorithm, Iterative Mistake Minimization (IMM), starts with a reference set of cluster centers, for instance from a polynomial-time constant-factor approximation algorithm for $k$-medians or $k$-means~\cite{aggarwal09}, or from a domain-specific clustering heuristic. 
% These centers may well use all the features and produce complicated clusters in the $d$-dimensional space.

We then begin the process of finding an explainable approximation to this reference clustering, in the form of a threshold tree with $k$ leaves, whose internal splits are based on single features. The way we do this is almost identical for $k$-medians and $k$-means, and the analysis is also nearly the same. Our algorithm is deterministic and its run time is only $O(kdn \log n)$, after finding the initial centers.

As discussed in Section~\ref{sec:motivating-examples}, existing decision tree algorithms use greedy criteria that are not suitable for our tree-building process. However, we show that an alternative greedy criterion---minimizing the number of {\em mistakes} at each split (the number of points separated from their corresponding cluster center)---leads to a favorable approximation ratio to the optimal $k$-medians or $k$-means cost.

\subsection{Our algorithm}
\label{sec:k-means-alg}

\begin{wrapfigure}{r}{0.5\textwidth}
% \begin{figure}[!htb]
\centering
\begin{minipage}{.5\textwidth}
\begin{algorithm}[H]
\SetKwFunction{Iterative Mistake Minimization}{Iterative Mistakes Minimization}
\SetKwFunction{BuildTree}{BuildTree}
\SetKwInOut{Input}{Input}\SetKwInOut{Output}{Output}\SetKwInOut{Preprocess}{Preprocess}
\Input{%
	$\vectx^1, \ldots, \vectx^n$ -- vectors in $\reals^d$\\
	$k$ -- number of clusters\\
  }
\Output{%
    root of the threshold tree
}

\LinesNumbered
\setcounter{AlgoLine}{0}
\BlankLine

$\vectmu^1, \ldots \vectmu^k \leftarrow \xfunc{k-Means}(\vectx^1, \ldots, \vectx^n, k)$\;

\ForEach {$j \in [1, \ldots, n]$}
{
    $y^j \leftarrow \argmin_{1 \leq \ell \leq k} \lVert \vectx^j - \vectmu^\ell \rVert$\;
}

\Return $\xfunc{build\_tree}(\{\vectx^j\}_{j=1}^n, \{y^j\}_{j=1}^n, \{\vectmu^j\}_{j=1}^k)$\;

\SetKwProg{buildtree}{$\xfunc{build\_tree}(\{\vectx^j\}_{j=1}^m, \{y^j\}_{j=1}^m, \{\vectmu^j\}_{j=1}^k)$:}{}{}
\LinesNumbered
\setcounter{AlgoLine}{0}
\BlankLine
\buildtree{}{
\If{$\{y^j\}_{j=1}^m$ \text{is homogeneous}}
{
    $\xvar{leaf}.cluster \leftarrow y^1$\;
    
    \Return $\xvar{leaf}$\;
}
\ForEach {$i \in [1, \ldots, d]$}
{
    $\ell_i \leftarrow \min_{1 \leq j \leq m} \mu^{y^j}_i$\;
    
    $r_i \leftarrow \max_{1 \leq j \leq m} \mu^{y^j}_i$\;
}
$i, \theta \leftarrow \argmin_{i,\ell_i \leq \theta < r_i} \sum_{j=1}^m \xfunc{mistake}(\vectx^j, \vectmu^{y^j}, i, \theta)$\;\label{ln:k_dynamic}

$\xvar{M} \leftarrow \{j \mid \xfunc{mistake}(\vectx^j, \vectmu^{y^j}, i, \theta) = 1\}_{j=1}^m$\; 

$\xvar{L} \leftarrow \{j \mid (x^j_i \leq \theta) \wedge (j \not \in \xvar{M})\}_{j=1}^m$\;

$\xvar{R} \leftarrow \{j \mid (x^j_i > \theta) \wedge (j \not \in \xvar{M})\}_{j=1}^m$\;

$\xvar{node}.condition \leftarrow ``x_i \leq \theta"$\;

$\xvar{node}.lt \leftarrow \xfunc{build\_tree}(\{\vectx^j\}_{j \in \xvar{L}}, \{y^j\}_{j \in \xvar{L}}, \{\vectmu^j\}_{j=1}^k)$\;

$\xvar{node}.rt \leftarrow \xfunc{build\_tree}(\{\vectx^j\}_{j \in \xvar{R}}, \{y^j\}_{j \in \xvar{R}}, \{\vectmu^j\}_{j=1}^k)$\;

\Return $\xvar{node}$\;
}

\SetKwProg{mistake}{$\xfunc{mistake}(\vectx, \vectmu, i, \theta)$:}{}{}
\LinesNumbered
\setcounter{AlgoLine}{0}
\BlankLine
\mistake{}{
\Return $(x_i \leq \theta) \neq (\mu_i \leq \theta)$  ? $1$ : $0$\;
}
\caption{\textsc{\newline Iterative Mistake Minimization}}
\label{algo:imm}
\end{algorithm}
\end{minipage} \vspace{-2ex}
% \end{figure}
\end{wrapfigure}

Algorithm~\ref{algo:imm} takes as input a data set $\cX \subseteq \R^d$. The first step is to obtain a reference set of $k$ centers $\{\vectmu^1, \ldots, \vectmu^k\}$, for instance from a standard clustering algorithm. We assign each data point $\vectx^j$ the label $y^j$ of its closest center. Then, the {\tt{build\_tree}} procedure looks for a tree-induced clustering that fits these labels. The tree is built top-down, using binary splits. Each node $u$ can be associated with the portion of the input space that passes through that node, a hyper-rectangular region $\mbox{cell}(u) \subseteq \mathbb{R}^d$. If this cell contains two or more of the centers $\vectmu^j$, then it needs to be split. We do so by picking the feature $i \in [d]$ and threshold value $\theta \in \R$ such that the resulting split $x_i \leq \theta$ sends at least one center to each side and moreover produces the fewest {\em mistakes}: that is, separates the fewest points in $\cX \cap \mbox{cell}(u)$ from their corresponding centers in $\{\vectmu^j: 1 \leq j \leq k\} \cap \mbox{cell}(u)$. We do not count points whose centers lie outside $\mbox{cell}(u)$, since they are associated with mistakes in earlier splits. We find the optimal split $(i, \theta)$ by searching over all pairs efficiently using dynamic programming. We then add this node to the tree, and discard the mistakes (the points that got split from their centers) before recursing on the left and right children. We terminate at a leaf node whenever all points have the same label (i.e., a {\em homogeneous} subset). As there are $k$ different labels, the resulting tree has exactly $k$ leaves.
Figure \ref{fig:imm_example} depicts the operation of Algorithm~\ref{algo:imm}. 

We first discuss the running time, and we analyze the approximation guarantees of IMM in Section~\ref{sec:imm-approx-main}.

% \subsection{Time analysis of the tree-building algorithm}
% \label{sec:imm-time-main}

% While we have analyzed the approximation ratio of our algorithm, we have yet to justify its efficiency. Here 

\smallskip \noindent {\bf Time analysis of tree building.}
We sketch how to execute the algorithm in time $O(kdn\log n)$ for an $n$-point data set. At each step of the top-down procedure, we find a coordinate and threshold pair that minimizes the mistakes at this node (line \ref{ln:k_dynamic} in \texttt{build\_tree} procedure). We use dynamic programming to avoid recomputing the cost from scratch for each potential threshold. For each coordinate $i \in [d]$, we sort the data and centers. Then, we iterate over possible thresholds. We claim that we can process each node in time $O(dn\log n)$ because each point will affect the number of mistakes at most twice. Indeed, when the threshold moves, either a data point or a center moves to the other side of the threshold. Since we know the number of mistakes from the previous threshold, we count the new mistakes efficiently as follows. If a single point switches sides, then the number of mistakes changes by at most one. If a center switches sides, which happens at most once, then we update the mistakes for this center. Overall, each point affects the mistakes at most twice (once when changing sides, and once when its center switches sides). Thus, the running time for each internal node is $O(dn\log n)$. As the tree has $k-1$ internal nodes, the total time is $O(kdn \log n)$.

\begin{figure*}[!h]
    \centering
    \begin{minipage}{.29\textwidth}
    \subfloat[Optimal $5$-means clusters]{
         \includegraphics[width=.99\textwidth]{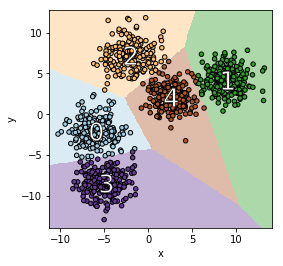}
    \label{fig:imm_example_optimal}
    }
    \end{minipage}\hspace{1em}%
    \begin{minipage}{.58\textwidth}
     \subfloat[$1^{\text{st}}$ split: 1 mistake caused by this split (1 total mistake)]{
         \includegraphics[width=.49\textwidth]{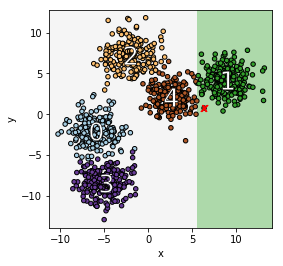}
      \label{fig:imm_example_split1}
     }\hspace{1em}%
     \subfloat[$2^{\text{nd}}$ split: 2 mistakes caused by this split (3 total mistakes)]{
         \includegraphics[width=.49\textwidth]{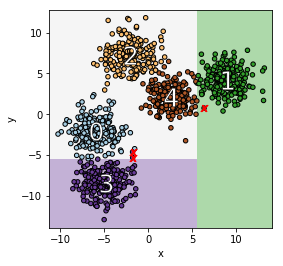}
      \label{fig:imm_example_split2}
     }\\
     \subfloat[$3^{\text{rd}}$ split: 12 mistakes caused by this split (15 total mistakes)]{
         \includegraphics[width=.49\textwidth]{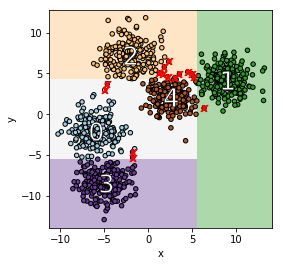}
      \label{fig:imm_example_split3}
     }\hspace{1em}%
     \subfloat[$4^{\text{th}}$ split: 0 mistakes caused by this split (15 total mistakes)]{
         \includegraphics[width=.49\textwidth]{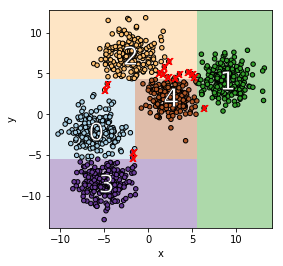}
      \label{fig:imm_example_split4}
     }
     \end{minipage}
    \caption{
    	Figure~\ref{fig:imm_example_optimal} presents the optimal $5$-means clustering. Figures~\ref{fig:imm_example_split1}--\ref{fig:imm_example_split4} depict the four splits of the IMM algorithm. The first split separates between cluster 1 and the rest, with a single mistake (marked as a red cross). Next, the IMM separates cluster 3 with 2 additional mistakes. The third split separates cluster 2, and this time the minimal number of mistakes is 12 for this split. Eventually, clusters 0 and 4 are separated without any mistakes.}
    \label{fig:imm_example}
\end{figure*}

\newpage
\subsection{Approximation guarantee for the IMM algorithm}
\label{sec:imm-approx-main}

Our main theoretical contribution is the following result.

\begin{theorem}\label{thm:main-k}\label{thm:main-k-appendix}
	Suppose that IMM takes centers $\vectmu^1,\ldots, \vectmu^k$ and returns a tree $T$ of depth $H$. Then, 
	\begin{enumerate}
		\item The $k$-medians cost is at most $$\cost(T)\leq (2H+1) \cdot \cost(\vectmu^1,\ldots, \vectmu^k)$$
		\item The $k$-means cost is at most $$\cost(T)\leq (8Hk+2) \cdot \cost(\vectmu^1,\ldots, \vectmu^k)$$
	\end{enumerate}
	In particular, IMM achieves worst case approximation factors of $O(k)$ and $O(k^2)$ by using any $O(1)$ approximation algorithm (compared to the optimal $k$-medians/means) to generate the initial centers.
\end{theorem}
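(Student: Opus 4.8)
\textbf{Proof proposal for Theorem~\ref{thm:main-k}.}

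The plan is to track, split by split, how much cost is introduced by the points that IMM discards as \emph{mistakes}, and then to show that the total mistake cost is controlled by $\cost(\vectmu^1,\ldots,\vectmu^k)$ up to the stated factors. The key conceptual point is that a point $\vectx^j$ is separated from its center $\vectmu^{y^j}$ by a split $x_i \le \theta$ only if $x^j_i$ and $\mu^{y^j}_i$ lie on opposite sides of $\theta$; since $\theta$ is chosen between $\ell_i = \min_j \mu^{y^j}_i$ and $r_i = \max_j \mu^{y^j}_i$, this forces $|x^j_i - \mu^{y^j}_i| \ge$ (distance from $\mu^{y^j}_i$ to $\theta$), and in particular the coordinate-$i$ gap between the point and \emph{its own} center must straddle the gap between two surviving centers at that node. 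So the central quantity to develop first is a structural lemma (the ``$\cost(\widehat C) \le \cost(opt) + t\onenorm{\vectmu^1-\vectmu^2}$'' bound cited in the $k=2$ analysis, here in its general-$k$ form, Lemma~\ref{lemma:tree-cost}): the cost of the final tree clustering, where each leaf uses the true median/mean of the points reaching it, is at most the cost of charging every non-mistake point to its original center plus, for each mistake point, the diameter (in $\ell_1$, resp.\ a squared-$\ell_2$ surrogate) of the cell in which the mistake occurred. Concretely $\cost(T) \le \cost(\vectmu^1,\ldots,\vectmu^k) + \sum_{\text{mistakes } j} (\text{diam of the node where } j \text{ was cut})$, using that assigning points to the empirical center of their leaf is at least as good as assigning them to any fixed point.

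Next I would bound, for a single split at a node $u$, the total mistake cost $\sum_{j \in \xvar{M}} \diam(\mathrm{cell}(u))$ by $\cost(\vectmu^1,\ldots,\vectmu^k)$ restricted to the points under $u$. Because IMM picks the \emph{mistake-minimizing} $(i,\theta)$, the number $|\xvar{M}|$ of points cut at $u$ is at most the number that \emph{any} valid split (one separating two centers) would cut. The clean way to exploit this is: some valid split separates a particular pair of centers $\vectmu^a, \vectmu^b$ in $\mathrm{cell}(u)$; take the coordinate $i$ and threshold where $\mu^a_i \ne \mu^b_i$; then every point cut by the chosen IMM split has coordinate-$i$ distance to its center at least the threshold-gap, and summing $|x^j_i - \mu^{y^j}_i|$ over just these points already accounts for a $\diam$-sized term per mistake when one bounds $\diam(\mathrm{cell}(u))$ itself by the spread of the surviving centers (whose pairwise coordinate gaps are each witnessed by a point-to-center distance somewhere under $u$). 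For $k$-medians this loses only a constant factor $2$ per level of the tree (hence $2H+1$), because each point under $u$ can be charged at most once per ancestor split and the triangle inequality in $\ell_1$ is exact. For $k$-means the diameter-to-distance conversion is lossier: $\|\vectx-\mathrm{mean}\|^2$ terms must be related to $\|\vectx-\vectmu^{y}\|^2$ terms via $(a+b)^2 \le 2a^2+2b^2$, and the cell diameter in squared $\ell_2$ must be spread across up to $k$ surviving centers, each gap charged to a separate point; this is where the extra factor of $k$ (giving $8Hk+2$) comes from. Finally, since the tree has $k-1$ internal nodes its depth $H$ satisfies $H \le k-1$, so plugging $H = O(k)$ and taking the reference centers from an $O(1)$-approximation yields $O(k)$ for $k$-medians and $O(k^2)$ for $k$-means.

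The main obstacle I anticipate is the per-split charging argument for $k$-means, specifically making the ``mistake-minimality of the IMM split'' interact cleanly with the squared-distance geometry. For $k$-medians, coordinate-wise additivity of $\ell_1$ makes it straightforward to turn ``$x^j_i$ crossed the threshold'' into a lower bound on $\|\vectx^j - \vectmu^{y^j}\|_1$ and to bound $\diam_1(\mathrm{cell}(u))$ by a telescoping sum of per-coordinate center spreads, each spread witnessed by existing point-to-center distances; the factor $2H$ then falls out by summing over the $\le H$ ancestors of each point. For $k$-means the difficulty is twofold: (i) a mistake at coordinate $i$ only lower-bounds the $i$-th \emph{component} of $\|\vectx^j-\vectmu^{y^j}\|_2^2$, not the whole thing, yet I need to pay for the full $d$-dimensional cell diameter; and (ii) I must avoid double-counting when the same point sits under many nested cells. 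The resolution is to charge each mistake to one coordinate-direction of one ancestor cell, argue each (point, coordinate, ancestor) triple is used $O(1)$ times, and absorb the resulting over-counting over the $k$ surviving centers and $H$ levels into the $Hk$ factor — this bookkeeping, rather than any single inequality, is the delicate part, and it is essentially forced by the fact that a single axis-aligned cut can only ``see'' one coordinate of the quadratic cost.
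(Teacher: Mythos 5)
Your high-level plan is the same as the paper's: you propose exactly the two-lemma decomposition the paper uses, and you even land on the same constants. First bound $\cost(T)$ by $\cost(\vectmu^1,\ldots,\vectmu^k)$ plus a sum of (number of mistakes at $u$) $\times$ (diameter of the bounding box of centers in $u$) over internal nodes $u$ (the paper's Lemma~\ref{lemma:tree-cost}), then bound that second term by $2H\cdot\cost$ or $4Hk\cdot\cost$ (the paper's Lemma~\ref{aux-claim:general_k_upper_bound}). That part is right.

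However, the core of the second step --- the paper's Claim~\ref{claim:covering} --- is missing, and the argument you sketch in its place does not work. You write: ``some valid split separates a particular pair of centers $\vectmu^a,\vectmu^b$; take the coordinate $i$ and threshold where $\mu^a_i \ne \mu^b_i$; then every point cut by the chosen IMM split has coordinate-$i$ distance to its center at least the threshold-gap.'' This is false: the IMM split may use a different coordinate $i'$, and there is no reason for a point cut along coordinate $i'$ to have any particular spread in coordinate $i$. Even if you instead consider the points cut by the \emph{hypothetical} split at coordinate $i$, comparing against a \emph{single} alternative threshold only lower-bounds the cost by $t_u$ times one gap between two centers, not by $t_u$ times the full diameter $\|\vectmu^{L,u}-\vectmu^{R,u}\|$. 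The missing idea is to exploit IMM's minimality against \emph{every} midpoint between consecutive surviving centers simultaneously. Concretely: for a fixed coordinate $i$, sort the surviving centers' $i$th coordinates $z_1 \le \cdots \le z_{k'}$, and split $[z_1,z_{k'}]$ at both the $z_j$'s and the midpoints. For each midpoint, setting the threshold there would cut at least $t_u$ of the surviving (non-mistake) points $\vectx$ still paired with their center; hence each midpoint-adjacent half-segment is contained in at least $t_u$ intervals $[x_i,c(\vectx)_i]$. Summing segment lengths and using that the segments are disjoint gives $\sum_{\vectx \in \cX^{\mathsf{cor}}_u} |x_i - c(\vectx)_i| \ge \frac{t_u}{2}(z_{k'}-z_1)$ (and, with Cauchy–Schwarz, the analogous squared bound with a further $1/k$). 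The factor $H$ then comes from the fact that each point lies in $\cX^{\mathsf{cor}}_u$ for at most $H$ ancestors $u$; this is a charging on the \emph{surviving} points' costs, not on the mistakes. Your proposed resolution for $k$-means (``charge each mistake to one coordinate-direction of one ancestor cell and argue each triple is used $O(1)$ times'') charges the wrong objects and does not recover the covering structure; the actual argument works coordinate-by-coordinate on the non-mistake points and never needs to pay for a ``full $d$-dimensional diameter'' in one step.
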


We state the theorem in terms of the depth of the tree to highlight that the approximation guarantee may depend on the structure of the input data. If the optimal clusters can be easily identified by a small number of salient features, then the tree may have depth $O(\log k)$. 
We later provide a lower bound showing that an $\Omega(\log k)$ approximation factor is necessary for $k$-medians and $k$-means (Theorem~\ref{thm:lb-k}). For this data set, our algorithm produces a threshold tree with depth $O(\log k)$, and therefore, the analysis is tight for $k$-medians. We leave it as an intriguing open question whether the bound can be improved for $k$-means.

\subsubsection{Proof Overview for Theorem~\ref{thm:main-k}}

The proof proceeds in three main steps. First, we rewrite the cost of IMM in terms of the minimum number of mistakes made between the output clustering and the clustering based on the given centers. Second, we provide a lemma that relates the cost of any clustering to the number of mistakes required by a threshold clustering. Finally, we put these two together to show that the output cost is at most an $O(H)$ factor larger than the $k$-medians cost and at most an $O(Hk)$ factor larger than the $k$-means cost, respectively, where $H$ is the depth of the IMM tree, and the cost is relative to $\cost(\vectmu^1,\ldots, \vectmu^k)$. 

The approximation bound rests upon a  characterization of the excess clustering cost induced by the tree. For any internal node $u$ of the final tree $T$, let $\mbox{cell}(u) \subseteq \mathbb{R}^d$ denote the region of the input space that ends up in that node, and let $B(u)$ be the bounding box of the centers that lie in this node, or more precisely,  $B(u) = \{\vectmu^j: 1 \leq j \leq k\} \cap \mbox{cell}(u)$. We will be interested in the diameter of this bounding box, measured either by $\ell_1$ or squared $\ell_2$ norm, and denoted by $\diam_1(B(u))$ and $\diam_2^2(B(u))$, respectively.

\paragraph{Upper bounding the cost of the tree.} The first technical claim (Lemma~\ref{lemma:tree-cost}) will show that if IMM takes centers $\vectmu^1,\ldots, \vectmu^k$ and returns a tree $T$ that incurs $t_u$ mistakes at node $u \in T$, then 
\begin{itemize}
\item The $k$-medians cost of $T$ satisfies   $\displaystyle \cost(T)\leq \cost(\vectmu^1,\ldots, \vectmu^k) + \sum_{u \in T} t_u \diam_1(B(u)) $
\item The $k$-means cost of $T$ satisfies  $\displaystyle  \cost(T)\leq 2\cdot \cost(\vectmu^1,\ldots, \vectmu^k) + 2\cdot \sum_{u \in T} t_u \diam_2^2(B(u))$
\end{itemize}

Briefly, any point $\vectx$ that ends up in a different leaf from its correct center $\vectmu^j$ incurs some extra cost. To bound this, consider the internal node $u$ at which $\vectx$ is separated from $\vectmu^j$. Node $u$ also contains the center $\vectmu^i$ that ultimately ends up in the same leaf as $\vectx$. For $k$-medians, the excess cost for $\vectx$ can then be bounded by $\|\vectmu^i - \vectmu^j\|_1 \leq \diam_1(B(u))$. The argument for $k$-means is similar.

These $\sum_u t_u \diam(B(u))$ terms can in turn be bounded in terms of the cost of the reference clustering. 

\paragraph{Lower bounding the reference cost.} 
We next need to relate the cost of the centers $\vectmu^1,\ldots, \vectmu^k$ to the number of mistakes and the diameter of the cells in the tree. Lemma~\ref{aux-claim:general_k_upper_bound} will show that if IMM makes $t_u$ mistakes at node $u \in T$, then
\begin{itemize}
\item The $k$-medians cost satisfies 
$\displaystyle \sum_{u \in T} t_u \cdot \diam_1(B(u)) \leq 2H \cdot \cost(\vectmu^1,\ldots, \vectmu^k).$
\item The $k$-means cost satisfies 
$\displaystyle \sum_{u \in T} t_u \cdot  \diam_2^2(B(u)) \leq 4Hk \cdot \cost(\vectmu^1,\ldots, \vectmu^k).$
\end{itemize}

The proof for this is significantly more complicated than the upper bound mentioned above. Moreover, it contains the main new techniques in our analysis of tree-based clusterings. 

The core challenge is that we aim to lower bound the cost of the given centers using only information about the number of mistakes at each internal node. Moreover, the IMM algorithm only minimizes the {\em number} of mistakes, and not the {\em cost} of each mistake. Therefore, we must show that if every axis-aligned cut in $B(u)$ separates at least $t_u$ points~$\vectx$ from their centers, then there must be a considerable distance between the points in $\mbox{cell}(u)$ and their centers.

To prove this, we analyze the structure of points in each cell. Specifically, we consider the single-coordinate projection of points in the box $B(u)$, and we order the centers in $B(u)$ from smallest to largest for the analysis. If there are $k'$ centers in node $u$, we consider the partition of $B(u)$ into $2(k'-1)$ disjoint segments, splitting at the centers and at the midpoints between consecutive centers. Since $t_u$ is the minimum number of mistakes, we must in particular have at least $t_u$ mistakes from the threshold cut at each midpoint. We argue that each of these segments is covered at least $t_u$ times by a certain set of intervals. Specifically, we consider the intervals between mistake points and their true centers, and we say that an interval \textit{covers} a segment if the segment is contained in the interval. This allows us to capture the cost of mistakes at different distance scales. For example, if a point is very far from its true center, then it covers many disjoint segments, and we show that it also implies a large contribution to the cost. 
Claim~\ref{claim:covering} in  Section~\ref{sec:general_k_upper_bound} provides our main covering result, and we use this to argue that the cost of the given centers can be lower bounded in terms of the distance between consecutive centers in $B(u)$. For $k$-medians, we can directly derive a lower bound on the cost in terms of the $\ell_1$ diameter $\diam_1(B(u))$. For $k$-means, however, we employ Cauchy-Schwarz, which incurs an extra factor of $k$ in the bound with $\diam_2^2(B(u))$. Overall, we sum these bounds over the height $H$ of the tree, leading to the claimed upper bounds in the above lemma. 

\subsubsection{Preliminaries and Notation for Theorem~\ref{thm:main-k}}

Let  $\vectmu^1,\ldots, \vectmu^k$ be the reference centers, and let $T$ be the resulting IMM tree. Each internal node $u$ corresponds to a value $\theta_u \in \mathbb{R}$ and a coordinate $i \in [d]$. The tree partitions $\cX$ into $k$ clusters $\widehat C_1, \ldots, \widehat C_k$ based on the points that reach the $k$ leaves in $T$, where we index the clusters so that leaf $j$ contains the centers $\vectmu^j$ and $\widehat \vectmu^j$, where $\widehat \vectmu^j$ is the mean of $\widehat C_j$ for $k$-means and the median of $\widehat C_j$ for $k$-medians. This provides a bijection between old and new centers (and clusters).
Recall that the map $c:\mathcal{X} \to \{\vectmu^1,\ldots, \vectmu^k\}$ associates each point to its nearest center (i.e., $c(\vectx)$ corresponds to the cluster assignment given by the centers $\{\vectmu^1,\ldots, \vectmu^k\}$).

For a node $u \in T$, we let $\cX_u$  denote the surviving data set vectors at node $u \in T$ based on the thresholds from the root to $u$.
We also define $J_u \subseteq [k]$ be the set of surviving centers at node~$u$ from the set $\{\vectmu^1,\ldots, \vectmu^k\}$, where these centers satisfy the thresholds from the root to~$u$. 
Define $\vectmu^{L,u}$ and $\vectmu^{R,u}$ to be the maximal (smallest and largest) coordinate-wise values of the centers in $J_u$, that is, for $i \in [d]$, we set
$$
\mu^{L,u}_i = \min_{j \in J_u} \mu^j_i,
\qquad\mathrm{and}\qquad 
\mu^{R,u}_i = \max_{j \in J_u} \mu^j_i.
$$
In other words, using the previous notation and recalling that $B(u) = \{\vectmu^1,\ldots,\vectmu^k\} \cap \mathrm{cell}(u)$, we have that
$$
\mathrm{diam}_1(B(u)) = \|\vectmu^{L,u} - \vectmu^{R,u}\|_1
\qquad 
\mbox{\ and\ }
\qquad
\mathrm{diam}_2^2(B(u)) = \|\vectmu^{L,u} - \vectmu^{R,u}\|_2^2.
$$

Recall that $t_u$ for node $u \in T$ denotes the number of {\em mistakes} incurred during the threshold cut defined by $u$, where a point $\vectx$ is a mistake at node $u$ if $x$ reaches $u$, it was not a mistake before, and exactly one of the following two events occurs:
$$
\{c(\vectx)_i \leq \theta_u \ \ \mathrm{and}\ \  x_i > \theta_u\}
\qquad \mathrm{or} \qquad
\{c(\vectx)_i > \theta_u \ \ \mathrm{and}\ \   x_i \leq \theta_u\}.
$$
Let $\cX = \xcor \cup \xmis$ be a partition of the input data set into two parts, where $\vectx$ is in $\xcor$ if it reaches the same leaf node in $T$ as its center $c(\vectx)$, and otherwise, $\vectx$ is in $\xmis$. In other words, $\xmis$ contains all points $\vectx \in \cX$ that are a mistake at any node $u$ in $T$, and the rest of the points are in $\xcor$. We note that the notion of ``mistakes'' used here is different than the definition of ``changes'' used for the analysis of $2$-means/medians, even though we reuse some of the same notation.

We need a standard consequence of the Cauchy–Schwarz inequality to analyze the $k$-means cost.
\begin{claim}\label{clm:cauchy_schwarz_k_means}
	For any $a_1,\ldots,a_{m}\in\mathbb{R},$ it holds that $\sum_{i=1}^ka_i^2 \geq \frac{1}{k}\left(\sum_{i=1}^ka_i\right)^2.$
\end{claim}
\begin{proof}
	Denote by $a$ the vector $(a_1,\ldots,a_{m})$ and by $b$ the vector $(\nicefrac{1}{\sqrt{k}},\ldots,\nicefrac{1}{\sqrt{k}}).$ 
	By the Cauchy–Schwarz inequality
	$\frac{1}{k}\left(\sum_{i=1}^ka_i\right)^2=\inner{a}{b}^2\leq\sum_{i=1}^ka_i^2$
\end{proof}

We also need two facts, which state the optimal center for a cluster corresponds to mean or median of the points in the cluster, respectively. The proofs of these facts can be found in standard texts~\cite{schutze2008introduction}.

\begin{fact}\label{fact:1-means-optimal center}
For any set $S=\{\vectx^1,\ldots, \vectx^n\}\subseteq \mathbb{R}^d$, the optimal center under the $\ell_2^2$ cost  is the mean $\vectmu = \frac1n\sum_{\vectx\in S}\vectx.$
\end{fact}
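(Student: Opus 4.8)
The plan is to avoid calculus entirely and use the classical bias--variance (Pythagorean) decomposition, which simultaneously establishes optimality of the mean and its uniqueness. Write $\bar{\vectx} = \frac1n\sum_{\vectx \in S}\vectx$ for the proposed center and fix an arbitrary competitor $\vectmu \in \R^d$. First I would rewrite each summand of the objective by inserting $\bar{\vectx}$: using $\vectx - \vectmu = (\vectx - \bar{\vectx}) + (\bar{\vectx} - \vectmu)$ and expanding the square, $\norm{\vectx - \vectmu}_2^2 = \norm{\vectx - \bar{\vectx}}_2^2 + 2\inner{\vectx - \bar{\vectx}}{\bar{\vectx} - \vectmu} + \norm{\bar{\vectx} - \vectmu}_2^2$.

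Next I would sum this identity over $\vectx \in S$ and observe that the cross term vanishes. Indeed, by linearity of the inner product in its first argument, $\sum_{\vectx \in S}\inner{\vectx - \bar{\vectx}}{\bar{\vectx} - \vectmu} = \inner{\sum_{\vectx \in S}(\vectx - \bar{\vectx})}{\bar{\vectx} - \vectmu}$, and $\sum_{\vectx \in S}(\vectx - \bar{\vectx}) = \big(\sum_{\vectx \in S}\vectx\big) - n\bar{\vectx} = \mathbf{0}$ by the definition of $\bar{\vectx}$. Hence
\[
\sum_{\vectx \in S}\norm{\vectx - \vectmu}_2^2 \;=\; \sum_{\vectx \in S}\norm{\vectx - \bar{\vectx}}_2^2 \;+\; n\,\norm{\bar{\vectx} - \vectmu}_2^2 .
\]

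Finally, the first term on the right does not depend on $\vectmu$ and the second is nonnegative, equal to zero precisely when $\vectmu = \bar{\vectx}$; therefore $\bar{\vectx}$ is the unique minimizer of $\sum_{\vectx \in S}\norm{\vectx - \vectmu}_2^2$, which is the statement. There is no real obstacle here --- the fact is elementary and the argument is only a few lines --- the only points requiring minor care are to perform the cross-term cancellation at the level of vectors (via linearity of $\inner{\cdot}{\cdot}$) rather than coordinate by coordinate, and to note that $n \geq 1$ so that $\bar{\vectx}$ is well defined. One could equivalently compute $\nabla_{\vectmu}\sum_{\vectx \in S}\norm{\vectx-\vectmu}_2^2 = 2\big(n\vectmu - \sum_{\vectx \in S}\vectx\big)$, set it to zero, and invoke convexity of the quadratic, but the decomposition above is cleaner and yields uniqueness for free.
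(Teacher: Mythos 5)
Your argument is correct and complete. The paper itself does not prove this fact---it simply cites a standard text---so there is no in-paper proof to compare against, but your bias--variance (Pythagorean) decomposition is exactly the canonical textbook argument one would find there, and it cleanly gives both optimality and uniqueness of the mean.
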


\begin{fact}\label{fact:1-median-optimal center}
For any set $S=\{\vectx^1,\ldots, \vectx^n\}\subseteq \mathbb{R}^d$, the optimal center $\vectmu$ under the $\ell_1$ cost is the coordinate-wise median, defined for $i\in[d]$ as $\mu_i = \mathsf{median}(x^1_i,\ldots, x^n_i).$
\end{fact}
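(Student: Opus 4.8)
The plan is to exploit the coordinate-wise separability of the $\ell_1$ norm and reduce to a one-dimensional statement. Since $\norm{\vectx-\vectmu}_1=\sum_{i=1}^d|x_i-\mu_i|$, the objective decomposes as
$$\sum_{\vectx\in S}\norm{\vectx-\vectmu}_1=\sum_{i=1}^d\Bigl(\sum_{j=1}^n|x^j_i-\mu_i|\Bigr),$$
and each inner sum depends only on the single scalar $\mu_i$. Hence we may minimize each coordinate independently, and it suffices to show: for any reals $a_1\le a_2\le\cdots\le a_n$, the function $f(t)=\sum_{j=1}^n|a_j-t|$ attains its minimum at $t=\mathsf{median}(a_1,\ldots,a_n)$.

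For the one-dimensional claim I would use a pairing argument, mirroring the matching idea in the proof of Lemma~\ref{lemma:matching}. Pair the $j$th smallest value with the $j$th largest: for each $j$ with $1\le j\le\lfloor n/2\rfloor$, the triangle inequality gives $|a_j-t|+|a_{n+1-j}-t|\ge a_{n+1-j}-a_j$, with equality precisely when $t\in[a_j,\,a_{n+1-j}]$. Summing over all such pairs — and, when $n$ is odd, adding the leftover middle term $|a_{(n+1)/2}-t|\ge 0$, with equality iff $t=a_{(n+1)/2}$ — we obtain
$$f(t)\ \ge\ \sum_{j=1}^{\lfloor n/2\rfloor}\bigl(a_{n+1-j}-a_j\bigr),$$
a bound independent of $t$. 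Any median $m$ of the $a_j$ lies in every interval $[a_j,\,a_{n+1-j}]$ and, when $n$ is odd, equals $a_{(n+1)/2}$; therefore $f(m)$ meets this lower bound, so $m$ is a minimizer. Applying this coordinate by coordinate shows the coordinate-wise median minimizes the $\ell_1$ cost, as claimed.

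There is essentially no hard step here — the statement is classical — but two minor points deserve a remark. First, when $n$ is even the minimizer is not unique: every point of the interval between the two middle order statistics is optimal, so ``the coordinate-wise median'' should be read as any fixed choice from this interval. Second, the reduction is valid only because the optimal center is allowed to be an arbitrary point of $\mathbb{R}^d$, not constrained to lie in $S$; the coordinate-wise median need not be an input point, consistent with Fact~\ref{fact:1-median-optimal center}. An equally short alternative is to note that $f$ is convex and piecewise linear with subgradient at $t$ equal to $\#\{j:a_j<t\}-\#\{j:a_j>t\}$, which changes sign exactly at the median; I would present whichever is cleaner, though the pairing version has the advantage of being visibly parallel to the Hall's-theorem argument already used in the paper.
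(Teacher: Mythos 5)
Your proof is correct. Note that the paper does not actually prove this fact: it is stated as a standard result and deferred to a textbook citation, so there is no in-paper argument to compare against. What you give is the classical argument in full — the coordinate-wise separability of the $\ell_1$ objective over an unconstrained center in $\mathbb{R}^d$, followed by the one-dimensional pairing bound $|a_j-t|+|a_{n+1-j}-t|\ge a_{n+1-j}-a_j$ with equality on the nested intervals $[a_j,a_{n+1-j}]$ — and it is sound, including your two caveats (non-uniqueness of the minimizer for even $n$, and the fact that the center need not be a data point). The convexity/subgradient alternative you mention would work equally well; either version suffices for the role this fact plays in the paper, namely computing the optimal per-leaf and per-cluster $\ell_1$ centers in the lower-bound calculations and in Algorithm~\ref{algo:2-medians-cut}.
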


\subsubsection{The Two Main Lemmas and the Proof of Theorem~\ref{thm:main-k}}

To prove the theorem, we state two lemmas that aid in analyzing the cost of the given clustering versus the IMM clustering. 
% These lemmas are simply a restatement of Lemmas~\ref{lemma:IMM-ub-main-text} and~\ref{lemma:IMM-lb-main-text}, this time using the new notation. 
The theorem will follow from these lemmas, and we will prove the lemmas in the proceeding subsections. We start with the lemma relating the number of mistakes~$t_u$ at each node $u$ and the distance between $\vectmu^{L,u}$ and $\vectmu^{R,u}$ to the cost incurred by the given centers.

\begin{lemma}\label{lemma:tree-cost}
	If IMM takes centers $\vectmu^1,\ldots, \vectmu^k$ and returns a tree $T$ of depth $H$ that incurs $t_u$ mistakes at node $u \in T$, then 
	\begin{enumerate}
		\item The $k$-medians cost of the IMM tree satisfies  $$\cost(T)\leq \cost(\vectmu^1,\ldots, \vectmu^k) + \sum_{u \in T} t_u \| \vectmu^{L,u} - \vectmu^{R,u} \|_1.$$
		\item The $k$-means cost of the IMM tree satisfies  $$\cost(T)\leq 2\cdot \cost(\vectmu^1,\ldots, \vectmu^k) + 2\cdot \sum_{u \in T} t_u \| \vectmu^{L,u} - \vectmu^{R,u} \|_2^2.$$
	\end{enumerate}
\end{lemma}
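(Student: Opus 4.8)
The plan is to bound the cost of the IMM tree by comparing it, term by term, to the cost of the reference centers, paying an extra charge only for the points in $\xmis$. First I would use the fact that the new center $\widehat\vectmu^j$ of cluster $\widehat C_j$ is \emph{optimal} for that cluster (mean for $k$-means by Fact~\ref{fact:1-means-optimal center}, median for $k$-medians by Fact~\ref{fact:1-median-optimal center}); hence replacing $\widehat\vectmu^j$ by \emph{any} other point can only increase the cost, and in particular
$$
\cost(T) = \sum_{j=1}^k \sum_{\vectx \in \widehat C_j} d(\vectx, \widehat\vectmu^j) \;\leq\; \sum_{j=1}^k \sum_{\vectx \in \widehat C_j} d(\vectx, \vectmu^j),
$$
where $d$ is $\ell_1$ distance (resp.\ squared $\ell_2$ distance) and $\vectmu^j$ is the old reference center sitting in leaf $j$. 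Now split the inner sum according to whether $\vectx \in \xcor$ or $\vectx \in \xmis$. If $\vectx \in \xcor$, then $\vectx$ reaches the same leaf as its own center $c(\vectx)$, so $\vectmu^j = c(\vectx)$ and the term is exactly $d(\vectx, c(\vectx))$, i.e.\ its contribution to $\cost(\vectmu^1,\ldots,\vectmu^k)$. Summing these over all correctly-classified points gives at most $\cost(\vectmu^1,\ldots,\vectmu^k)$, which already accounts for the leading term in both bounds (and for $k$-means the factor $2$ will appear below).

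Next I would handle $\vectx \in \xmis$. Such an $\vectx$ was separated from $c(\vectx)$ at a unique internal node $u$ — the first node on its root-to-leaf path where the threshold cut puts $\vectx$ and $c(\vectx)$ on opposite sides — and is counted among the $t_u$ mistakes at $u$. Crucially, both $c(\vectx)$ and the leaf-center $\vectmu^j$ that ends up with $\vectx$ lie in $J_u$ (the surviving centers at $u$): $c(\vectx)$ survives to $u$ because $\vectx$ did and $\vectx$ was not a mistake before $u$, and $\vectmu^j$ survives past $u$ on $\vectx$'s side. Therefore, for $k$-medians, $\|\vectmu^j - c(\vectx)\|_1 \le \|\vectmu^{L,u} - \vectmu^{R,u}\|_1 = \diam_1(B(u))$ coordinate by coordinate, and a triangle inequality gives
$$
\|\vectx - \vectmu^j\|_1 \;\le\; \|\vectx - c(\vectx)\|_1 + \|c(\vectx) - \vectmu^j\|_1 \;\le\; \|\vectx - c(\vectx)\|_1 + \diam_1(B(u)).
$$
Summing over all $\vectx \in \xmis$, the first terms again total at most $\cost(\vectmu^1,\ldots,\vectmu^k)$ — wait, more carefully: the $\xcor$ and $\xmis$ contributions of $\|\vectx - c(\vectx)\|$ together sum to exactly $\cost(\vectmu^1,\ldots,\vectmu^k)$ — while the second terms group by node $u$ and contribute $\sum_{u\in T} t_u \diam_1(B(u))$, since each mistake-point of node $u$ contributes one copy of $\diam_1(B(u))$. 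This yields the $k$-medians bound. For $k$-means the same scheme applies after first using the inequality $\|a+b\|_2^2 \le 2\|a\|_2^2 + 2\|b\|_2^2$ to split $\|\vectx-\vectmu^j\|_2^2 \le 2\|\vectx - c(\vectx)\|_2^2 + 2\|c(\vectx)-\vectmu^j\|_2^2$ and then bounding $\|c(\vectx)-\vectmu^j\|_2^2 \le \diam_2^2(B(u))$; this is exactly where the two factors of $2$ in the statement come from.

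The main obstacle, and the only place requiring real care, is the bookkeeping claim that \emph{every} mistake-point of node $u$ — not some of them — contributes precisely $\diam_1(B(u))$ (resp.\ $\diam_2^2(B(u))$) and that each mistake-point is charged to exactly one node; this relies on the IMM invariant that once a point becomes a mistake it is discarded and never recurses, so the sets of mistake-points across nodes are disjoint and partition $\xmis$. I would also need to double-check the edge case where a point's assigned leaf-center equals its nearest reference center even though the point is in $\xmis$ (it cannot be, by definition of $\xmis$, but it is worth stating). Everything else is the triangle inequality, the optimality of mean/median, and regrouping a double sum.
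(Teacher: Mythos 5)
Your proposal is correct and follows essentially the same route as the paper's proof: bound $\cost(T)$ by measuring distances to the reference centers $\vectmu^j$, split the sum over $\xcor$ and $\xmis$, apply the triangle inequality (or the relaxation $\|a+b\|_2^2 \leq 2\|a\|_2^2 + 2\|b\|_2^2$ for $k$-means), and charge each mistake point to the unique node $u$ where it was separated from $c(\vectx)$, using that both $c(\vectx)$ and the leaf-center survive to $u$ so their distance is at most $\|\vectmu^{L,u}-\vectmu^{R,u}\|$. The bookkeeping points you flag (disjointness of mistakes across nodes, exactly $t_u$ mistakes charged per node) are exactly how the paper closes the argument.
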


We next bound the cost of the given centers in the terms of the number of mistakes in the tree. The key idea is that if there must be many mistakes at each node, then the cost of the given centers must actually be fairly large.

\begin{lemma}\label{aux-claim:general_k_upper_bound}
	If IMM takes centers $\vectmu^1,\ldots, \vectmu^k$ and returns a tree $T$ of depth $H$ that incurs $t_u$ mistakes at node $u \in T$, then 
	\begin{enumerate}
		\item The $k$-medians cost satisfies 
		$$\sum_{u \in T} t_u \| \vectmu^{L,u} - \vectmu^{R,u} \|_1 \leq 2H \cdot \cost(\vectmu^1,\ldots, \vectmu^k).$$
		\item The $k$-means cost 
		satisfies 
		$$\sum_{u \in T} t_u \| \vectmu^{L,u} - \vectmu^{R,u} \|_2^2 \leq 4Hk \cdot \cost(\vectmu^1,\ldots, \vectmu^k).$$
	\end{enumerate}
\end{lemma}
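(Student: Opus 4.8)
The plan is to analyze each internal node $u$ separately and then sum the per-node bounds over the (at most $H$) levels of the tree. Fix an internal node $u$ that splits on coordinate $i$ with threshold $\theta_u$, let $J_u$ be the surviving centers at $u$, and write $k' = |J_u|$. Project everything onto coordinate $i$: order the surviving centers as $\mu^{(1)}_i \leq \mu^{(2)}_i \leq \cdots \leq \mu^{(k')}_i$ and consider the $2(k'-1)$ segments obtained by cutting the interval $[\mu^{(1)}_i, \mu^{(k')}_i]$ at every center and at every midpoint $m_\ell = (\mu^{(\ell)}_i + \mu^{(\ell+1)}_i)/2$ between consecutive centers. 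The key observation is that IMM chose the mistake-minimizing split, so \emph{every} admissible threshold — in particular each midpoint $m_\ell$ — separates at least $t_u$ surviving points from their surviving centers. For a midpoint $m_\ell$, a point $\vectx$ with surviving center $c(\vectx)$ is separated exactly when the interval $[\min(x_i, c(\vectx)_i), \max(x_i, c(\vectx)_i)]$ straddles $m_\ell$, i.e.\ contains it. So if we form, for each surviving mistake-or-not point $\vectx$, the interval $I(\vectx) = [\min(x_i, c(\vectx)_i), \max(x_i, c(\vectx)_i)]$, then each of the $k'-1$ midpoints is covered by at least $t_u$ of these intervals.

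Next I would upgrade this ``midpoints are covered $t_u$ times'' statement to ``every one of the $2(k'-1)$ segments is covered $t_u$ times,'' which is the content of Claim~\ref{claim:covering} (invoked from Section~\ref{sec:general_k_upper_bound}). Intuitively, an interval $I(\vectx)$ that covers a midpoint $m_\ell$ has one endpoint at a center coordinate and the other on the far side of $m_\ell$, so it also covers the two half-segments flanking $m_\ell$; a careful accounting shows the $t_u$-fold covering of midpoints forces a $t_u$-fold covering of all the small segments. Granting this, sum the lengths: for each point $\vectx$ the total length of segments covered by $I(\vectx)$ is at least (a constant fraction of) $|x_i - c(\vectx)_i|$ restricted to the range $[\mu^{(1)}_i, \mu^{(k')}_i]$, but more usefully, summing over all segments and using the $t_u$-fold covering gives
$$
t_u \cdot (\mu^{(k')}_i - \mu^{(1)}_i) \;\le\; \sum_{\vectx \in \cX_u} \big|x_i - c(\vectx)_i\big|
$$
up to the segment-merging bookkeeping (the left side is $t_u$ times the sum of all segment lengths, which telescopes to $t_u \diam$ in coordinate $i$). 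Since $\mu^{(k')}_i - \mu^{(1)}_i = \mu^{R,u}_i - \mu^{L,u}_i$, summing over coordinates $i \in [d]$ yields the $k$-medians per-node bound
$$
t_u \,\|\vectmu^{L,u} - \vectmu^{R,u}\|_1 \;\le\; \sum_{\vectx \in \cX_u} \|\vectx - c(\vectx)\|_1 .
$$
Each data point survives at $u$ only along the root-to-leaf path, so it is charged at most once per level, i.e.\ at most $H$ times total across the whole tree; summing the per-node bound over all internal nodes $u$ therefore gives $\sum_u t_u \|\vectmu^{L,u}-\vectmu^{R,u}\|_1 \le H \cdot \cost(\vectmu^1,\dots,\vectmu^k)$. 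The factor of $2$ in the lemma statement comes from the loss in the covering/segment argument (covering by ``full'' segments versus half-segments at the midpoints). For $k$-means, I replace absolute values by squares: the covering argument still produces $t_u$ times a sum of segment lengths on the left, but to pass from $\sum_i (\text{distances in coordinate } i)$ to $\sum_i (\text{distances})^2$ I apply Claim~\ref{clm:cauchy_schwarz_k_means} with the $k'-1 \le k$ gaps between consecutive centers, which converts $\big(\sum_\ell (\mu^{(\ell+1)}_i - \mu^{(\ell)}_i)\big)^2 \le k \sum_\ell (\mu^{(\ell+1)}_i - \mu^{(\ell)}_i)^2$; this is where the extra factor of $k$ enters, giving $\sum_u t_u \|\vectmu^{L,u}-\vectmu^{R,u}\|_2^2 \le 4Hk \cdot \cost(\vectmu^1,\dots,\vectmu^k)$.

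The main obstacle is the covering step — proving rigorously that the minimality of $t_u$ (a statement about midpoint cuts only) forces every one of the $2(k'-1)$ segments, including the ones adjacent to the centers themselves, to be covered $t_u$ times by the intervals $I(\vectx)$, and then correctly converting that covering into a lower bound on $\sum_{\vectx} |x_i - c(\vectx)_i|$ without losing more than the stated constant. A subtle point here is that a point and its center may both lie strictly between two consecutive centers (so $I(\vectx)$ covers no midpoint at all), and such points simply contribute nothing to the lower bound we need — the argument must only use the mistake-minimality at midpoints and cannot rely on intra-gap structure. Handling the endpoints of the center range (the outermost half-segments, beyond $\mu^{(1)}_i$ and $\mu^{(k')}_i$, which we deliberately exclude) and making sure the telescoping of segment lengths exactly recovers $\diam$ in coordinate $i$ are the remaining bookkeeping details; these are routine once Claim~\ref{claim:covering} is in place.
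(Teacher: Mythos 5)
Your proposal follows the paper's proof almost exactly: project to a single coordinate, cut the center range at centers and midpoints, observe that every midpoint is an admissible threshold so mistake-minimality forces at least $t_u$ mistakes there, convert that covering into a per-coordinate lower bound on $\sum_{\vectx}\lvert x_i - c(\vectx)_i\rvert$, sum over coordinates, and then charge each point at most $H$ times across the tree; for $k$-means apply Claim~\ref{clm:cauchy_schwarz_k_means} to the $k'-1\leq k$ gaps, which is the source of the extra factor of $k$. Your final constants $2H$ and $4Hk$ also match.

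One step in the middle is, however, both false and unnecessary. You claim to ``upgrade'' the $t_u$-fold covering of midpoints to a $t_u$-fold covering of \emph{every} one of the $2(k'-1)$ half-segments, citing Claim~\ref{claim:covering}, and you justify it by asserting that an interval $I(\vectx)$ straddling a midpoint $m_\ell$ ``covers the two half-segments flanking $m_\ell$.'' That is not true: since $c(\vectx)_i$ sits exactly on some center $z_j$, if (say) $c(\vectx)_i = z_\ell$ and $x_i$ lies strictly between $m_\ell$ and $z_{\ell+1}$, the interval $[z_\ell, x_i]$ contains only the left half-segment $[z_\ell, m_\ell)$, not the right one. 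Claim~\ref{claim:covering} in the paper makes exactly the weaker statement you started from --- for each consecutive pair $(j,j+1)$, at least $t_u$ intervals contain \emph{one} of the two half-segments --- and that already suffices, since each such containment contributes at least $(z_{j+1}-z_j)/2$ to $\lvert x_i - c(\vectx)_i\rvert$ with disjoint contributions across pairs, yielding the per-node bound $\sum_{\vectx\in\xcor_u}\lvert x_i - c(\vectx)_i\rvert \geq \tfrac{t_u}{2}(z_{k'}-z_1)$ directly. Your closing remark that ``the factor of 2 comes from the loss in the covering argument'' is the right bookkeeping; you should just drop the false upgrade entirely, as it contradicts that remark and your own displayed inequality, which (as written, without the factor $1/2$) is not what the argument gives.

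One small notational point: the relevant sum runs over $\xcor_u$ (points reaching $u$ together with their center), not over all of $\cX_u$; your prose gets this right but the display uses $\cX_u$.
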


Combining these two lemmas immediately implies Theorem~\ref{thm:main-k}.

\begin{proof}[Proof of Theorem~\ref{thm:main-k}.]
For $k$-medians, Lemmas~\ref{lemma:tree-cost} and~\ref{aux-claim:general_k_upper_bound}
together imply that 
$$\cost(T)\leq \cost(\vectmu^1,\ldots, \vectmu^k) + \sum_{u \in T} t_u \| \vectmu^{L,u} - \vectmu^{R,u} \|_1
\leq 
(2H+1) \cdot \cost(\vectmu^1,\ldots, \vectmu^k).$$
For $k$-means, we have that 
\begin{eqnarray*}
\cost(T)\leq 2\cdot \cost(\vectmu^1,\ldots, \vectmu^k) + 2\cdot \sum_{u \in T} t_u \| \vectmu^{L,u} - \vectmu^{R,u} \|_2^2
\leq (8Hk+2) \cdot \cost(\vectmu^1,\ldots, \vectmu^k).
\end{eqnarray*}

% $$\sum_{u \in T} t_u \| \vectmu^{L,u} - \vectmu^{R,u} \|_1 \leq 2H \cdot \cost(\vectmu^1,\ldots, \vectmu^k)
% \leq
% \cost(\vectmu^1,\ldots, \vectmu^k) + \sum_{u \in T} t_u \| \vectmu^{L,u} - \vectmu^{R,u} \|_1$$
% 
\end{proof}

\subsubsection{Proof of Lemma~\ref{lemma:tree-cost}}

We begin with the $k$-medians proof (the $k$-means proof will be similar). Notice that the cost can only increase when measuring the distance to the (suboptimal) center $\vectmu^j$ instead of the (optimal) center $\widehat \vectmu^j$ for cluster $\widehat C_j$, and hence,
$$
\cost(T) = \sum_{j=1}^k \sum_{\vectx \in \widehat C_j} \|\vectx - \widehat \vectmu^j\|_1
\leq 
\sum_{j=1}^k \sum_{\vectx \in \widehat C_j} \|\vectx - \vectmu^j\|_1.$$
We can rewrite this sum using the partition  $\xcor$ and $\xmis$ of $\cX$, using the fact that
whenever $\vectx\in \xcor$, then the distance is computed with respect to the true center $c(\vectx)$,
\begin{eqnarray*}
\sum_{j=1}^k \sum_{\vectx \in \widehat C_j} \|\vectx - \vectmu^j\|_1 &=& 
\sum_{j=1}^k \sum_{\vectx \in \xcor \cap \widehat C_j} \|\vectx- \vectmu^j\|_1 +
\sum_{j=1}^k \sum_{\vectx \in \xmis \cap \widehat C_j} \|\vectx- \vectmu^j\|_1
\\ &=& \sum_{\vectx \in \xcor} \|\vectx- c(\vectx)\|_1 +
\sum_{j=1}^k \sum_{\vectx \in \xmis \cap \widehat C_j} \|\vectx- \vectmu^j\|_1
\end{eqnarray*}
Starting with the above cost bound, and using the triangle inequality, we see
\begin{eqnarray*}
\cost(T) 
&\leq&
\sum_{\vectx \in \xcor} \|\vectx- c(\vectx)\|_1 +
\sum_{j=1}^k \sum_{\vectx \in \xmis \cap \widehat C_j} \|\vectx- \vectmu^j\|_1
\\ &\leq&  
\sum_{\vectx \in \xcor} \|\vectx- c(\vectx)\|_1 +
\sum_{j=1}^k \sum_{\vectx \in \xmis \cap \widehat C_j} 
(\|\vectx- c(\vectx)\|_1 + \|c(\vectx)- \vectmu^j\|_1) 
\\ &=&  \cost(\vectmu^1,\ldots, \vectmu^k) +
\sum_{j=1}^k \sum_{\vectx \in \xmis \cap \widehat C_j} \|c(\vectx)- \vectmu^j\|_1
\end{eqnarray*}

To control the second term in the final line, we must bound the cost of the mistakes. We decompose $\xmis$ based on the node $u$ where $\vectx \in \xmis$ is first separated from its true center $c(\vectx)$ due to the threshold at node~$u$.
To this end, consider some point $\vectx \in \xmis \cap \widehat C_j$, where its distance is measured to the incorrect center $\vectmu^{j} \neq c(\vectx)$.  Both centers $c(\vectx)$ and $\vectmu^j$ have survived until node $u$ in the threshold tree $T$, and hence, both vectors are part of the definitions of $\vectmu^{L,u}$ and  $\vectmu^{R,u}$. In particular, we can use the upper bound
$$\|c(\vectx) - \vectmu^j\|_1 \leq \| \vectmu^{L,u} - \vectmu^{R,u} \|_1.$$
There are $t_u$ points in $\xmis$ caused by the threshold at node $u$, and we have that
$$\sum_{j=1}^k \sum_{\vectx \in \xmis \cap \widehat C_j} \|c(\vectx) - \vectmu^j\|_1 \leq 
\sum_{u \in T} t_u \cdot \| \vectmu^{L,u} - \vectmu^{R,u} \|_1.$$
Therefore, we have, as desired
\begin{eqnarray*}
\cost(T) &\leq& \cost(\vectmu^1,\ldots, \vectmu^k) + \sum_{j=1}^k \sum_{\vectx \in \xmis \cap \widehat C_j} \|\vectx- \vectmu^j\|_1
\\&\leq& \cost(\vectmu^1,\ldots, \vectmu^k) + \sum_{u \in T} t_u \| \vectmu^{L,u} - \vectmu^{R,u} \|_1.
\end{eqnarray*}
\noindent
Analyzing $k$-means is similar; we incur a factor of two by using Claim~\ref{clm:cauchy_schwarz_k_means} instead of the triangle inequality:
\begin{eqnarray*}
\cost(T) &\leq& 
\sum_{\vectx \in \xcor} \|\vectx- c(\vectx)\|_2^2 + 
2\sum_{j=1}^k \sum_{\vectx \in \xmis \cap \widehat C_j}  (\|\vectx- c(\vectx)\|_2^2 +\|c(\vectx) - \vectmu^j\|_2^2)
\\&\leq& 2\cdot \cost(\vectmu^1,\ldots, \vectmu^k) + 2\cdot \sum_{j=1}^k \sum_{\vectx \in \xmis \cap \widehat C_j}  \|c(\vectx) - \vectmu^j\|_2^2
\\ &\leq& 2\cdot \cost(\vectmu^1,\ldots, \vectmu^k) + 2\cdot \sum_{u \in T} t_u \| \vectmu^{L,u} - \vectmu^{R,u} \|_2^2
\end{eqnarray*}

\subsubsection{Proof of Lemma~\ref{aux-claim:general_k_upper_bound}}
\label{sec:general_k_upper_bound}
	
To prove this lemma, we bound the cost at each node $u$ of tree in terms of the mistakes made at this node. For this lemma, we define $\xcor_u$ to be the set of points in $\cX$ that reach node $u$ in $T$ along with their center $c(\vectx)$. We note that $\xcor_u$ differs from $\xcor \cap \cX_u$ because a point $\vectx \in \xcor_u$ may not make it to $\xcor$ if there is a mistake later on (i.e., $\xcor$ is the union of $\xcor_u$ only over leaf nodes).

\begin{lemma}\label{lemma:mistake-bound}
For any node $u \in T$, we have that
	$$
	\sum_{\vectx \in \xcor_u} \|\vectx - c(\vectx)\|_1
	\geq 
	\frac{t_u}{2} \cdot \| \vectmu^{L,u} - \vectmu^{R,u} \|_1.
	$$
and
	$$
\sum_{\vectx \in \xcor_u} \|\vectx - c(\vectx)\|_2^2
\geq 
\frac{t_u}{4k}   \cdot  \| \vectmu^{L,u} - \vectmu^{R,u} \|_2^2.
$$
\end{lemma}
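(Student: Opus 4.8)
\textbf{Proof proposal for Lemma~\ref{lemma:mistake-bound}.}

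The plan is to fix a single node $u$, work one coordinate at a time, and exploit the minimality of the mistake count $t_u$ at $u$. Fix a coordinate $i \in [d]$ and sort the surviving centers in $J_u$ by their $i$th coordinate, say $\nu_1 \leq \nu_2 \leq \cdots \leq \nu_{k'}$ where $k' = |J_u|$; note $\nu_1 = \mu^{L,u}_i$ and $\nu_{k'} = \mu^{R,u}_i$, so the contribution of coordinate $i$ to $\|\vectmu^{L,u} - \vectmu^{R,u}\|_1$ is exactly $\nu_{k'} - \nu_1$. I would partition the interval $[\nu_1, \nu_{k'}]$ into the $2(k'-1)$ segments obtained by cutting at each center $\nu_\ell$ and at each midpoint $m_\ell = (\nu_\ell + \nu_{\ell+1})/2$. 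The key observation is that any threshold cut placed at $m_\ell$ is a valid split at $u$ (it sends at least one center to each side), so by the minimality of $t_u$ it must separate at least $t_u$ points of $\cX_u$ from their centers; equivalently, at least $t_u$ of the point-to-center intervals $[\min(x_i, c(\vectx)_i), \max(x_i, c(\vectx)_i)]$ straddle $m_\ell$. A symmetric statement holds at each center-location $\nu_\ell$ as well (a threshold infinitesimally to one side of $\nu_\ell$ is again a valid split). Hence \emph{every} one of the $2(k'-1)$ segments is ``covered'' — contained in — at least $t_u$ of these point-to-center intervals. This is the content of Claim~\ref{claim:covering} referenced in the overview, and I would invoke it here.

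Given the covering statement, I would sum over all segments: the total length of segments covered, counted with multiplicity, is at least $t_u$ times the total segment length, which telescopes to $t_u (\nu_{k'} - \nu_1)$. On the other hand, each point-to-center interval for a point $\vectx \in \xcor_u$ has length $|x_i - c(\vectx)_i|$ and can only cover segments that lie inside it, so it contributes at most $|x_i - c(\vectx)_i|$ to this count — \emph{but} a single interval can cover at most roughly half the segments in any ``local'' sense, which is where the factor of two enters: an interval of length $|x_i - c(\vectx)_i|$ straddling some midpoints covers a set of consecutive segments whose total length is at most $|x_i - c(\vectx)_i|$, and more carefully, because the midpoints sit halfway between consecutive centers, the total covered length is at most twice $\sum |x_i - c(\vectx)_i|$ when we want to recover the full diameter. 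Concretely, I expect to obtain $\sum_{\vectx \in \xcor_u} |x_i - c(\vectx)_i| \geq \tfrac{t_u}{2}(\nu_{k'} - \nu_1)$ for each coordinate $i$; summing over $i \in [d]$ gives the $k$-medians bound $\sum_{\vectx \in \xcor_u}\|\vectx - c(\vectx)\|_1 \geq \tfrac{t_u}{2}\|\vectmu^{L,u} - \vectmu^{R,u}\|_1$.

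For the $k$-means bound I would run the same per-coordinate covering argument, but now track squared distances. From the covering I get a lower bound on $\sum_{\vectx \in \xcor_u} |x_i - c(\vectx)_i|$ in terms of $t_u$ and the gaps between consecutive centers; to convert the sum of absolute values into a sum of squares I apply Claim~\ref{clm:cauchy_schwarz_k_means} (Cauchy–Schwarz), which says $\sum a_j^2 \geq \tfrac1{k}(\sum a_j)^2$ — but here I would apply it across the (at most $k$) segments/center-gaps rather than across data points, so that $\sum_{\vectx}|x_i - c(\vectx)_i|^2$ loses only a factor of $k$ relative to the square of $\sum_{\vectx}|x_i - c(\vectx)_i|$, and another factor from the midpoint geometry. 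Summing over coordinates and using $\|\cdot\|_2^2 = \sum_i (\cdot)_i^2$ yields $\sum_{\vectx \in \xcor_u}\|\vectx - c(\vectx)\|_2^2 \geq \tfrac{t_u}{4k}\|\vectmu^{L,u} - \vectmu^{R,u}\|_2^2$. The main obstacle, and the step I expect to require the most care, is precisely the covering argument (Claim~\ref{claim:covering}): verifying that \emph{every} segment — not just every midpoint cut — is covered $t_u$ times, and correctly bookkeeping how much covered length a single point-to-center interval can be responsible for, so that the factors of $2$ (for $k$-medians) and $4k$ (for $k$-means) come out exactly as stated rather than off by constants. The placement of cuts at both the centers and the midpoints is what makes this bookkeeping work, and getting the Cauchy–Schwarz application aligned with the right index set (segments, of which there are $O(k)$) is the subtle point for the $k$-means case.
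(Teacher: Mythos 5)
Your overall strategy is the paper's (per coordinate, sort the surviving centers, cut at midpoints, use minimality of $t_u$, then Cauchy--Schwarz over the at most $k$ gaps for $k$-means), but the covering claim at the heart of your argument is stated in a strictly stronger form that is false, and the justification you give for it does not go through. What minimality of $t_u$ gives you at a midpoint $m_\ell$ is that at least $t_u$ point-to-center intervals \emph{straddle} $m_\ell$; since each such interval has a surviving center as one endpoint, it must contain one of the two half-segments $[\nu_\ell, m_\ell)$ or $[m_\ell, \nu_{\ell+1})$ --- but you cannot control which one, so you cannot conclude that each half-segment individually is covered $t_u$ times. The ``symmetric statement at the center locations'' does not repair this: an interval straddling a threshold just below $\nu_\ell$ may contain an arbitrarily small piece of the adjacent segment (e.g.\ a point just below $\nu_\ell$ whose center is $\nu_\ell$), and straddling a point is not containment of a segment. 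Concretely, with two surviving centers at $0$ and $1$ in coordinate $i$, one data point at $0.45$ assigned to the center at $0$ and one at $0.3$ assigned to the center at $1$, the minimum number of mistakes is $t_u=1$, yet no point-to-center interval contains the left half-segment $[0,0.5)$. So your version of the claim fails, while the paper's Claim~\ref{claim:covering} is the weaker (and correct) statement: each \emph{pair} $(j,j+1)$ is covered $t_u$ times, meaning at least one of its two half-segments is contained.

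This matters because your bookkeeping leans on the false statement: ``every segment covered $t_u$ times'' would give total covered length $t_u(\nu_{k'}-\nu_1)$ and hence a factor-$1$ bound, and the factor of $2$ you then insert (``an interval can cover at most roughly half the segments'') is not where the $2$ comes from and is not a derivation. The correct accounting is: each of the $k'-1$ pairs is covered at least $t_u$ times; each covering point contributes at least half the corresponding gap to $|x_i-c(\vectx)_i|$, and these contributions are disjoint within a single interval, so summing gives $\tfrac{t_u}{2}(\nu_{k'}-\nu_1)$. For $k$-means one also needs, per point, that the sum of the squared half-gaps it covers is at most $|x_i-c(\vectx)_i|^2$ (disjointness again), and only then Cauchy--Schwarz (Claim~\ref{clm:cauchy_schwarz_k_means}) over the at most $k$ gaps, which produces the $4k$. (Incidentally, the per-coordinate inequality you aim for is true even with constant $1$ in place of $\tfrac12$: every $\theta\in[\mu^{L,u}_i,\mu^{R,u}_i)$ is a valid split, hence straddled by at least $t_u$ intervals, and integrating the straddle count over this range gives $\sum_{\vectx\in\xcor_u}|x_i-c(\vectx)_i|\ge t_u(\mu^{R,u}_i-\mu^{L,u}_i)$ --- but that is a different argument from the one you wrote.) In short: right strategy, but the covering step is misstated and its justification, as written, is invalid.
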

\begin{proof}
Fix a coordinate $i \in [d]$ and a node $u \in T$.
To simplify notation, we let $z_1 \leq \cdots \leq z_{k'}$ denote the {\em sorted} values of $i$th coordinate of the $k' \leq k$ centers that survive until node $u$ (so that $z_1 = \mu^{L,u}_i$ and $z_{k'} = \mu^{R,u}_i$). Observe that for each $\vectx \in \xcor_u$, the center $c(\vectx)$ must have survived until node $u$, and hence, $c(\vectx)_i$ equals one of the values $z_j$ for $j\in[k']$. 

We need a definition that allows us to relate the cost in coordinate $i$ to the distances between $z_1$ and $z_{k'}$. For consecutive values $(j,j+1)$, we say that the pair $(j,j+1)$ is {\em covered} by $\vectx$ if either 
\begin{itemize}
	\item The segment $[z_j, \frac{z_j + z_{j+1}}{2})$ is contained in the segment $[x_i,c(\vectx)_i]$, or
	\item The segment $[\frac{z_j + z_{j+1}}{2}, z_{j+1})$ is contained in the segment $[x_i,c(\vectx)_i]$.
\end{itemize} 

We prove the following claim, which enables us to relate the cost in the $i$th coordinate to the value $z_{k'} - z_1$ by decomposing this value into the distance between consecutive centers.
\begin{claim}\label{claim:covering}
For each $j = 1,2,\ldots, k'-1$, the pair $(j,j+1)$ is {covered} by at least~$t_u$ points $\vectx \in \xcor_u$.
\end{claim}
\begin{proof}
Suppose for contradiction that this does not hold. We argue that we can find a threshold value for coordinate $i$ that makes fewer than $t_u$ mistakes. To see this, assume that $(j,j+1)$ is covered by fewer than $t_u$ points $\vectx \in \cX_u$. In particular, setting the threshold to be $\frac{z_j + z_{j+1}}{2}$ separates fewer than $t_u$ points $\vectx$ from their centers $c(\vectx)$. This implies that there are fewer than $t_u$ mistakes at node $u$, which is a contradiction because the IMM algorithm chooses the coordinate and threshold pair that minimizes the number of mistakes.
\end{proof}

Now this claim suffices to prove Lemma~\ref{lemma:mistake-bound}. The only challenge is that we must string together the covering points $\vectx$ to get a bound on $z_{k'}-z_1$.

We start with the $k$-medians proof. Using the above claim, we can lower bound the contribution of coordinate $i$ to the cost of the given centers. Notice that the values $z_1 \leq \cdots \leq z_{k'}$ partition the interval between $z_1 = \mu^{L,u}_i$ and $z_{k'} = \mu^{R,u}_i$. Thus, each time $\vectx$ covers a pair $(j,j+1)$, there must be a contribution of $\frac{z_{j+1} - z_j}{2}$ to the cost $|x_i - c(\vectx)_i|$. Because each pair is covered at least $t_u$ times by Claim~\ref{claim:covering}, we conclude that 
$$
\sum_{\vectx \in \xcor_u} |x_i - c(\vectx)_i|
\geq t_u \sum_{j = 1}^{k'-1}\left(\frac{z_{j+1} - z_j}{2}\right)
= \frac{t_u}{2} (z_{k'} - z_1).
$$
To relate the bound to $\vectmu^{L,u}$ and $\vectmu^{R,u}$, we note that the above argument holds for each coordinate $i \in [d]$, and
we have that
$$
\sum_{\vectx \in \xcor_u} \|\vectx - c(\vectx)\|_1
=  \sum_{i \in [d]} \sum_{\vectx \in \xcor_u}   |\vectx_i - c(\vectx)_i|
\geq 
\frac{t_u}{2} \cdot  \| \vectmu^{L,u} - \vectmu^{R,u} \|_1.
$$

For the $k$-means proof, we apply the same argument as above, this time using Claim~\ref{clm:cauchy_schwarz_k_means} to bound the sum of squared values as 
$$\sum_{\vectx \in \xcor_u} |x_i - c(\vectx)_i|^2
\geq t_u \sum_{j = 1}^{k'-1}\left(\frac{z_{j+1} - z_j}{2}\right)^2
\geq \frac{t_u}{k} \left(\sum_{j = 1}^{k'-1}\left(\frac{z_{j+1} - z_j}{2}\right)\right)^2
= \frac{t_u}{4k} (z_{k'} - z_1)^2,$$
and therefore, summing over coordinates $i\in[d]$, we have
$$
\sum_{\vectx \in \xcor_u} \|\vectx - c(\vectx)\|_2^2
=  \sum_{i \in [d]} \sum_{\vectx \in \xcor_u}   |x_i - c(\vectx)_i|^2
\geq 
\frac{t_u}{4k} \cdot \| \vectmu^{L,u} - \vectmu^{R,u} \|_2^2.
$$
\end{proof}

\begin{proof}[Proof of Lemma~\ref{aux-claim:general_k_upper_bound}.]
	We start with the $k$-medians proof.
The factor of $H$ arises because the same points $\vectx \in \cX$ can appear in at most $H$ sets $\xcor_u$ because $H$ is the depth of the tree. More precisely, using Lemma~\ref{lemma:mistake-bound} for each node $u$, we have that 
$$
H \cdot \cost(\vectmu^1,\ldots, \vectmu^k) \geq \sum_{u \in T} \sum_{\vectx \in \xcor_u}  \|\vectx - c(\vectx)\|_1
\geq \sum_{u \in T} \frac{t_u}{2} \| \vectmu^{L,u} - \vectmu^{R,u} \|_1.
$$	
Applying the same steps for the $k$-means cost, we have that
$$
H \cdot \cost(\vectmu^1,\ldots, \vectmu^k) \geq \sum_{u \in T} \sum_{\vectx \in \xcor_u}  \|\vectx - c(\vectx)\|_2^2
\geq \sum_{u \in T} \frac{t_u}{4k} \| \vectmu^{L,u} - \vectmu^{R,u} \|_2^2.
$$
\end{proof}

%%%%%%%%%%%%%%%%%
\subsection{Approximation lower bound}

To complement our upper bounds, we show that a threshold tree with $k$ leaves cannot, in general, yield better than an $\Omega(\log k)$ approximation to the optimal $k$-medians or $k$-means clustering.

\begin{theorem}\label{thm:lb-k}
For any $k \geq 2$, there exists a data set with $k$ clusters such that any  threshold tree $T$ with $k$ leaves
must have $k$-medians and $k$-means cost at least
$$
\cost(T) \geq \Omega(\log k) \cdot \cost(opt).
$$
\end{theorem}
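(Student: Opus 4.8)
The plan is to exhibit a single data set on which \emph{every} threshold tree with $k$ leaves pays an $\Omega(\log k)$ factor, for $k$-medians and $k$-means simultaneously, and to charge the tree's cost one internal node at a time via a recursion on the tree's shape.

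\paragraph{The hard instance.} I would build $\cX\subseteq\R^d$, with $d$ polynomial in $k$, in a self-similar fashion over $\Theta(\log_2 k)$ ``scales,'' devoting a fresh block of coordinates to each scale. The target properties are: \emph{(P1)} $\cX$ partitions into $k$ tight groups $C_1,\dots,C_k$ which form the optimal clustering, with $\cost(opt)=\Theta(k)$ (after normalization, each $C_j$ contributes $\Theta(1)$ to the optimum); and \emph{(P2)} progress toward separating the clusters is unavoidably wasteful at the current scale. Concretely: whenever a subset $S$ of $m\ge 2$ of the clusters has been isolated by the splits above some node, then for \emph{every} coordinate $i$ and \emph{every} threshold $\theta$ that leaves at least one cluster of $S$ on each side, the cut $x_i\le\theta$ separates $\Omega(m)$ of the points of $\bigcup_{j\in S}C_j$ from the center $\mu^j$ of the cluster they belong to, and each such point is then $\Omega(1)$ away (in the $\ell_1$, resp.\ squared-$\ell_2$, metric, at the scale relevant to $S$) from every cluster center on the other side of the cut — including the ``unbalanced'' cuts that peel off a single cluster, which must also be shown wasteful. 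Realizing (P1) and (P2) at once is the delicate part: in each scale's coordinate block one places a few ``witness'' points per cluster, shifted off the cluster center by an amount small enough that the per-cluster, per-scale contribution to $\cost(opt)$ is tiny (so the sum over all $\Theta(\log k)$ scales is still $\Theta(1)$ per cluster), but arranged — via a shift/permutation pattern of the centers' projections — so that no threshold on any coordinate can avoid severing a constant fraction of these witnesses from their clusters.

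\paragraph{Reduction to a recursion over the tree.} Fix any threshold tree $T$ with $k$ leaves, each labeled by the optimal (median/mean) center of the points reaching it. After pruning internal nodes that fail to split their surviving clusters, the internal nodes of $T$ form a binary tree on the $k$ optimal clusters: a node $u$ ``receives'' a set $S_u$ of $m_u$ clusters, and its children receive a partition of $S_u$. By property (P2), the cut at $u$ first-severs $\Omega(m_u)$ points from their home clusters, each of which then lands in a leaf whose center is $\Omega(1)$ away; since a point is first-severed at most once, these point sets are disjoint across the internal nodes, so
$$\cost(T)\;\ge\;c\sum_{u\text{ internal}}m_u\;=\;c\sum_{j=1}^{k}\mathrm{depth}_T(j)\;\ge\;c\,k\log_2 k,$$
using that each cluster $j$ belongs to $S_u$ exactly for the internal nodes on its root-to-leaf path, and that any binary tree on $k$ leaves has average leaf depth at least $\log_2 k$ (Kraft's inequality plus convexity). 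Since $\cost(opt)=\Theta(k)$, this gives $\cost(T)=\Omega(\log k)\cdot\cost(opt)$.

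\paragraph{Both objectives and tightness.} The same instance handles $k$-means: at every scale the forced separations and the cluster diameters are within constant factors of a common value, so replacing $\ell_1$ lengths by squared $\ell_2$ lengths changes only the constants in (P1) and (P2), and the recursion above is unchanged. To match the remark after Theorem~\ref{thm:main-k}, one checks that on this (highly symmetric) instance the mistake-minimizing split at each node is one of the scale coordinates, so IMM returns a balanced tree of depth $O(\log k)$; Theorem~\ref{thm:main-k} then yields an $O(\log k)$ approximation for $k$-medians, matching the lower bound.

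\paragraph{Main obstacle.} The only step requiring real work is the construction itself: choosing the scale sequence and the witness multiplicities so that (P2) holds \emph{uniformly over all $d$ coordinates and all thresholds}, while $\cost(opt)$ stays $\Theta(k)$ rather than $\Theta(k\log k)$. The tension is exactly that the optimum also ``pays per scale per cluster'' for the witness points, so the witnesses must be cheap for the optimum yet expensive — by a factor growing to $\Omega(1)$ per scale — for any threshold tree; pinning down offsets and permutation patterns that achieve this for every coordinate and threshold (in particular ruling out a cheap caterpillar of unbalanced cuts) is the heart of the proof, after which the charging recursion is immediate.
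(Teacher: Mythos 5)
Your charging argument is a genuinely different and, in spirit, cleaner route than the paper's: you bound $\sum_{u}m_u = \sum_j \mathrm{depth}_T(j) \geq k\log_2 k$ via Kraft's inequality and convexity, which avoids the paper's two-step structure of (i) showing the tree must be complete with distinct features per root-to-leaf path up to depth $\Omega(\log k)$ and (ii) summing mistakes level by level. If your property (P2) held, that repackaging would work.

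However, your proof has two genuine gaps, and they are exactly where the paper does all its work. First, the construction. You gesture at a multi-scale design but never resolve the budget tension you yourself flag: the optimum must pay only $\Theta(1)$ per cluster \emph{summed over all scales}, yet every cut must force $\Omega(1)$ per severed point. The paper instead uses a concrete one-shot construction (not multi-scale): choose $k$ random codewords $\mathbf{v}^1,\dots,\mathbf{v}^k\in\{\pm1\}^d$ with $d=k^3$, and let cluster $j$ consist of the $d$ points obtained by zeroing one coordinate of $\mathbf{v}^j$. The key properties then come for free from pseudorandomness of random $\pm1$ strings (Claim~\ref{clm:general_k_lower_bound_logk_random}): any two codewords differ on $\Omega(d)$ coordinates (so cross-cluster distance is $\Omega(d)$), and the projection of the surviving codewords onto any $\leq \frac{\ln k}{50}$ coordinates is nearly uniform (so any threshold cut at a node with $m_u$ surviving codewords misroutes $\approx m_u/2$ of them). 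This is the realization of your (P2) that you do not supply.

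Second, and more subtly, your (P2) as stated is not quite the right property: you need every cut to force $\Omega(m_u)$ points that are \emph{freshly} severed at $u$, not merely separated. If the tree reuses a coordinate, a cut can ``separate'' points that were already charged upstream, breaking the disjointness in your sum. The paper handles this explicitly (Claim~\ref{clm:lower_bound_no_feature_twice}: reusing a feature on a root-to-leaf path creates an unreachable leaf, forces two codewords into one cluster, and already blows the cost to $\Theta(d^2)$), and, independently, the structure of the instance — one zeroed point per coordinate per cluster — guarantees fresh witnesses for each new coordinate used. Your sketch addresses neither. Finally, note that the paper also needs your pruning step (``each internal node splits its surviving clusters'') to be justified together with the anti-collision lemma (Claim~\ref{clm:lower_bound_many_original_points_different_cluster}): a leaf receiving $\geq 2$ clusters is ruled out because it would already cost $\Theta(d^2)$. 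If you insert the paper's construction and these two anti-degeneracy lemmas into your Kraft-style charging, you do get a valid and arguably shorter proof; as written, the proposal is a strategy outline with the hard part left open.
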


The data set is produced by first picking $k$ random centers from the hypercube $\{-1,1\}^d$, for large enough~$d$, and then using each of these to produce a cluster consisting of the $d$ points that can be obtained by replacing one coordinate of the center by zero.  Thus the clusters have size $d$ and radius $O(1)$. To prove the lower bound, we use ideas from the study of pseudo-random binary vectors, showing that projecting the centers to any subset of $m\lesssim\log_2 k$ coordinates take on all $2^m$ possible values, with each occurring roughly equally often.
 Then, we show that (i) the threshold tree must be essentially a complete binary tree with depth $\Omega(\log_2 k)$ to achieve a clustering with low cost, and (ii) any such tree incurs a cost of $\Omega(\log k)$ times more than the optimal for this data set (for both $k$-medians and $k$-means).
The proof of Theorem~\ref{thm:lb-k} appears in Appendix~\ref{sec:general_k_lower_bound}.

\section{Conclusion}
In this paper we discuss the capabilities and limitations of explainable clusters. For the special case of two clusters ($k=2$), we provide nearly matching upper and lower bounds for a single threshold cut. For general $k >2$, we present the IMM algorithm that achieves an $O(H)$ approximation for $k$-medians and an $O(Hk)$ approximation for $k$-means when the threshold tree has depth $H$ and $k$ leaves. 
We complement our upper bounds with a lower bound showing that any threshold tree with $k$ leaves must have cost at least $\Omega(\log k)$ more than the optimal for certain data sets. 
Our theoretical results provide the first approximation guarantees on the quality of explainable unsupervised learning in the context of clustering. Our work makes progress toward the larger goal of explainable AI methods with precise objectives and provable guarantees.

An immediate open direction is to improve our results for $k$ clusters, either on the upper or lower bound side. One option is to use larger threshold trees with more than $k$ leaves (or allowing more than $k$ clusters). It is also an important goal to identify natural properties of the data that enable explainable, accurate clusters. For example, it would be interesting to improve our upper bounds on explainable clustering for well-separated data. Our lower bound of $\Omega(\log k)$ utilizes clusters with diameter $O(1)$ and separation $\Omega(d)$, where the hardness stems from the randomness of the centers. In this case, the approximation factor $\Theta(\log k)$ is tight because our upper bound proof actually provides a bound in terms of the tree depth (which is about $\log k$, see Appendix~\ref{apx:lower_bound_log_k}). Therefore, an open question is whether a $\Theta(\log k)$ approximation is possible for any well-separated clusters (e.g., mixture of Gaussians with separated means and small variance). Beyond $k$-medians/means, it would be worthwhile to develop other clustering methods using a small number of features (e.g., hierarchical clustering).
% Finally, we believe our algorithms would be useful in practice, as they are efficient and easily implementable.

\paragraph{Acknowledgements.}
Sanjoy Dasgupta has been supported by NSF CCF-1813160. Nave Frost has been funded by the  European Research  Council (ERC) under the European Unions Horizon 2020 research and innovation programme (Grant agreement No. 804302). The contribution of Nave Frost is part of a Ph.D. thesis research conducted at Tel Aviv University. 

%{\small
\bibliography{references}
\bibliographystyle{plain}
%}

\newpage
\appendix

\paragraph{Appendix Organization:}
\begin{itemize}
    \item Section~\ref{sec:general_k_lower_bound} contains the $\Omega(\log k)$ lower bound for tree-based clustering using $k$ leaves.
    \item Section~\ref{sec:k2_lower_bound} provides improved lower bounds for $2$-medians or $2$-means for a tree with two leaves.
    \item Section~\ref{sec:k2_upper_bound} contains the remaining upper bound proof exhibiting a 4-approximation for $2$-means.
    \item Section~\ref{sec:DP_Implementation} contains details about improving the running time of the algorithms.
\end{itemize}

\section{Lower Bound: Threshold Tree with exactly $k$ leaves}\label{sec:general_k_lower_bound}

In this section we show that any threshold tree with $k$ leaves must be an $\Omega(\log k)$-approximation, under the $k$-means and $k$-medians cost. We will show a data set that will cause many mistakes. 
This data set consists of $k$ clusters where any two clusters are very far from each other while inside any cluster the points differ by at most two features. 
Each cluster is created by first taking a codeword and then changing one feature at a time to $0.$
The consequence of this process is that for every feature there are many points that globally are very different yet locally all equal to $0$.

The proof of the lower bound has a few steps:
\begin{enumerate}
    \item In Section~\ref{sec:lower_bound_dataset} we show that there is a code such that (i) every two points are far apart, and (ii) when inspecting any $O(\log k)$ features, many codewords are consistent with this local view. From thiscode  we construct our data set with $dk$ points and $\cost(opt)=O(dk).$
    \item In Section~\ref{sec:lower_bound_cluster} we prove that the clusters induced by the threshold tree $T$ are similar to the original clusters, except for at most $k$ points in each cluster. These points will cause $\cost(T)$ to be large. 
    \item In Section~\ref{sec:lower_bound_threshold_tree} we uncover a few properties of any threshold tree created by an  $O(\log k)$-approximation algorithm: up until level $O(\log k)$ the tree has to be complete and no feature is used more than once.  
    \item In  Section~\ref{sec:lower_bound_final_proof} we put together all the claims and show that each level causes $\Omega(k\log k)$ mistakes, each with a cost of $\Omega(d)$, thus $\cost(T)=\Omega(kd\log k)$ which proves the lower bound of $\Omega(\log k)$-approximation. 
\end{enumerate}

\paragraph{Data set construction.} We first take $k$ \emph{codewords} $\mathbf{v}^1,\ldots,\mathbf{v}^k\in\{+1,-1\}^d$ that have the properties described in Claim~\ref{clm:general_k_lower_bound_logk_random}. From each codeword $\mathbf{v}$ we create $d$ data points, $\cX^{\mathbf{v}}$, each time by changing exactly one feature to $0.$ In total we have $dk$ points in the data set, $\cX=\cup_i \cX^{\mathbf{v}^i}$. The cost of the clustering that cluster together all points that belong to the same vector $\mathbf{v}^i$ is $O(dk)$, as the cost of each point is $\Theta(1).$ Thus, $\cost(opt)\leq O(dk).$ 

\subsection{The data set}\label{sec:lower_bound_dataset}
\begin{proposition}[Hoeffding’s inequality] 
Let $X_1, ..., X_n$ be independent random variables, where for each $i$, $X_i\in[0,1]$. Define the random variable $X=\sum_{i=1}^n X_i.$ Then, for any $t\geq 0,$ we have $\Pr(|X-\E[X]|\geq t)\leq 2e^{-\frac{2t^2}{n}}.$
\end{proposition}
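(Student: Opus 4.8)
The statement is the classical Hoeffding bound, and the plan is to prove it by the exponential-moment (Chernoff) method combined with Hoeffding's lemma on the moment generating function of a bounded, centered random variable.

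First I would center the variables: put $Y_i = X_i - \E[X_i] \in [a_i, b_i]$ with $b_i - a_i = 1$ and $\E[Y_i] = 0$, so that $X - \E[X] = \sum_{i=1}^n Y_i$. For the upper tail, fix $\lambda > 0$ and apply Markov's inequality to the nonnegative variable $e^{\lambda(X - \E X)}$, then use independence to factor the expectation:
$$\Pr\!\left(X - \E[X] \geq t\right) \;\leq\; e^{-\lambda t}\,\E\!\left[e^{\lambda\sum_i Y_i}\right] \;=\; e^{-\lambda t}\prod_{i=1}^n \E\!\left[e^{\lambda Y_i}\right].$$

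The crux is \emph{Hoeffding's lemma}: for any mean-zero $Y$ supported in an interval of length $1$, $\E[e^{\lambda Y}] \leq e^{\lambda^2/8}$. I would establish this using convexity of $y \mapsto e^{\lambda y}$ on $[a,b]$ --- writing $e^{\lambda y} \leq \frac{b-y}{b-a}e^{\lambda a} + \frac{y-a}{b-a}e^{\lambda b}$ and taking expectations with $\E Y = 0$ --- which reduces the bound to showing that $\varphi(h) := -ph + \log(1 - p + p e^{h})$, where $p = -a/(b-a)$ and $h = \lambda(b-a)$, satisfies $\varphi(h) \leq h^2/8$. This in turn follows from $\varphi(0) = \varphi'(0) = 0$ together with $\varphi''(h) = \frac{p(1-p)e^h}{(1-p+pe^h)^2} \leq \frac14$ (an AM--GM bound on the denominator), via a second-order Taylor expansion. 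Plugging this in with $b_i - a_i = 1$ gives $\prod_i \E[e^{\lambda Y_i}] \leq e^{n\lambda^2/8}$, hence $\Pr(X - \E X \geq t) \leq e^{-\lambda t + n\lambda^2/8}$.

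Finally I would optimize the free parameter: the exponent $-\lambda t + n\lambda^2/8$ is minimized at $\lambda = 4t/n$, yielding $\Pr(X - \E X \geq t) \leq e^{-2t^2/n}$; the identical argument applied to $-Y_i$ gives the matching bound on $\Pr(X - \E X \leq -t)$, and summing the two tails produces the stated factor of $2$. The only non-routine step is Hoeffding's lemma --- in particular the uniform bound $\varphi'' \leq 1/4$ --- everything else being a direct application of Markov's inequality, independence, and single-variable calculus.
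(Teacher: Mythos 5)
Your proof is correct and is the standard Chernoff--Hoeffding argument. The paper states this proposition as a known classical fact and does not supply a proof, so there is nothing to compare against; your outline (centering, Markov on the exponential moment, independence to factor the MGF, Hoeffding's lemma via the convexity bound and $\varphi'' \leq 1/4$, optimizing $\lambda = 4t/n$, and a union bound over the two tails) is the canonical route and all the steps check out.
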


\begin{claim}\label{clm:general_k_lower_bound_logk_random}
For any $k\geq 3$, there are $k$ points $C\subseteq\{\pm1\}^d$ that have the following properties for any $\epsilon\geq\frac{\ln(k)}{\sqrt{k}}$: 
\begin{enumerate}
    \item $d = k^3$
    \item for every $\mathbf{c}\neq \mathbf{c}'\in C$ their distance is linear, i.e., $|\{i: c_i\neq c'_i\}|\geq d/4.$ \item for every $\ell\leq\frac{\ln(k)}{50}$ indexes in $[d]$, and every assignment to these indexes, the number of points in $C$ that has these assignment is at least $k(1/2^\ell - \epsilon)$
\end{enumerate}
\end{claim}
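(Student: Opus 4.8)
The plan is to obtain the $k$ codewords by a probabilistic argument: draw $\mathbf{c}^1,\ldots,\mathbf{c}^k$ independently and uniformly from $\{\pm1\}^d$ with $d = k^3$, and show that all three properties hold simultaneously with positive probability. Property~1 is just the choice of $d$. For property~2, fix a pair $\mathbf{c}\neq\mathbf{c}'$; the Hamming distance $|\{i : c_i \neq c'_i\}|$ is a sum of $d$ i.i.d.\ Bernoulli$(1/2)$ variables with mean $d/2$, so Hoeffding's inequality gives $\Pr(\text{distance} < d/4) \leq \Pr(|X - d/2| \geq d/4) \leq 2e^{-d/8}$. Since $d = k^3$, this is far smaller than $1/k^2$, so a union bound over the $\binom{k}{2}$ pairs leaves property~2 failing with probability $o(1)$.

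The substance is property~3. Fix a set $S$ of $\ell \leq \frac{\ln k}{50}$ coordinates and a target assignment $\mathbf{a}\in\{\pm1\}^S$. Each codeword $\mathbf{c}^j$ agrees with $\mathbf{a}$ on $S$ independently with probability exactly $2^{-\ell}$, so the number $N_{S,\mathbf{a}}$ of codewords consistent with the local view is Binomial$(k, 2^{-\ell})$ with mean $k/2^\ell$. I would apply Hoeffding again (after rescaling the indicator variables to $[0,1]$) with deviation $t = \epsilon k$ to get $\Pr\!\left(N_{S,\mathbf{a}} < k(2^{-\ell} - \epsilon)\right) \leq 2e^{-2\epsilon^2 k}$. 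Now I need this to survive a union bound over all choices of $(S,\mathbf{a})$. The number of coordinate subsets of size at most $\ell$ is at most $d^\ell = k^{3\ell}$, and the number of assignments to each is $2^\ell \leq k^{3\ell}$, so the total count is at most $k^{6\ell} \leq k^{6\ln k/50} = e^{(6/50)(\ln k)^2}$. Thus it suffices that $2e^{-2\epsilon^2 k}\cdot e^{(6/50)(\ln k)^2} = o(1)$, i.e.\ that $2\epsilon^2 k$ dominates $(\ln k)^2$; this is exactly why the hypothesis $\epsilon \geq \frac{\ln k}{\sqrt{k}}$ is imposed, since then $2\epsilon^2 k \geq 2(\ln k)^2$, which beats the $\frac{6}{50}(\ln k)^2$ from the union bound with room to spare (and also dominates the constant absorbed from the factor $2$). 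Combining the three failure probabilities, all of which are $o(1)$, shows that a valid set $C$ exists for all sufficiently large $k$, and the small remaining cases $k \geq 3$ can be handled by taking the constants slightly more carefully or by direct construction.

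The main obstacle is getting the union-bound arithmetic in property~3 to close with the stated threshold on $\epsilon$ rather than a larger one: the number of local views is $e^{\Theta(\ell \ln k)}$, and since $\ell$ itself is allowed to be as large as $\Theta(\ln k)$, the log of the union-bound count is $\Theta((\ln k)^2)$, so the per-event concentration must supply a $(\ln k)^2$-sized exponent --- precisely what $\epsilon \geq \frac{\ln k}{\sqrt k}$ provides through $\epsilon^2 k \geq (\ln k)^2$. I would be careful to use $d^\ell$ (not $\binom{d}{\ell}$, though that is smaller) as the crude but sufficient bound on the number of coordinate subsets, and to note that the constant $\frac{1}{50}$ in the bound $\ell \leq \frac{\ln k}{50}$ is chosen exactly to leave slack between $2(\ln k)^2$ and $\frac{6}{50}(\ln k)^2$. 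One subtlety worth flagging: property~3 as stated only needs a lower bound on $N_{S,\mathbf{a}}$, so only the one-sided Hoeffding tail is required, which slightly simplifies the bookkeeping.
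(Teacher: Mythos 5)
Your proposal is correct and follows essentially the same route as the paper's proof: a probabilistic-method argument picking $k$ uniformly random codewords, then Hoeffding plus a union bound over pairs for property~2 and over (coordinate set, assignment) pairs for property~3, with the hypothesis $\epsilon \geq \frac{\ln k}{\sqrt{k}}$ supplying the $(\ln k)^2$-sized exponent needed to absorb the union-bound count. The only differences are cosmetic bookkeeping: you bound the count by $d^\ell \cdot 2^\ell \leq k^{6\ell}$ rather than $\binom{d}{\ell}\cdot 2^{\ell}$, and you defer the small-$k$ constants to ``handle separately,'' whereas the paper tracks the constants to claim the failure probability is below $1$ directly for all $k\geq 3$.
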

\begin{proof}

Take $k$ random points in $\{\pm1\}^d$. We will show that the probability that all properties hold is bigger than $0$ and this will prove our claim using the probabilistic method.

To prove the second property, we use Hoeffding's inequality and union bound. We can bound the probability that any two points in $C$ agree by more than $3d/4$ coordinates by $2k^2e^{-d/8}<1-e^{-1}$ for $k\geq3.$

To prove the third property we again use Hoeffding's inequality and union bound. 
This time though we have $k$ random variables, one for each point. 
There are $\binom{d}{\ell}$ possible $\ell$ coordinates, and there are $2^\ell$ possible assignments to these coordinates. For specific $\ell$ coordinates and an assignment to these coordinates the expected number of points in $C$ that has the specific assignment is $k/2^\ell$. By Hoeffding's inequality, the probability that we deviate by $\epsilon k$ is less than $e^{-\epsilon^2k}$. 
The probability that the last property does not hold is bounded by 
$$\binom{d}{\ell}2^{\ell+1} e^{-2\epsilon^2k}\leq e^{\ell\ln d+2\ell+1-2\epsilon^2 k}.$$ Thus for $\epsilon\geq\frac{\ln(k)}{\sqrt{k}},$ the last term is smaller than $e^{-1}.$

\end{proof}

\subsection{The cluster created by a threshold tree}\label{sec:lower_bound_cluster}
\begin{claim}\label{clm:lower_bound_many_original_points_same_cluster}
 For any threshold tree $T$ with at most $k$ leaves, and for any codeword $\mathbf{v}$, the leaf containing $\mathbf{v}$ also contains at least $d-k$ points of $\cX^{\mathbf{v}}$.
\end{claim}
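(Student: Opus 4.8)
The plan is to exploit the extremely rigid structure of this data set: every point of $\cX^{\mathbf v}$ agrees with the codeword $\mathbf v$ on all but one coordinate, so such a point can be separated from $\mathbf v$ only by a split that queries precisely that one coordinate, and a tree with $k$ leaves simply does not have enough internal nodes to query more than $k-1$ coordinates along any one root-to-leaf path.

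First I would fix notation: write the point of $\cX^{\mathbf v}$ obtained by zeroing coordinate $i$ as $\mathbf v^{(i)}$, so that $v^{(i)}_i = 0$ and $v^{(i)}_j = v_j$ for every $j \neq i$. Although $\mathbf v$ itself need not be one of the $dk$ data points, the threshold tree $T$ still partitions $\R^d$ into $k$ axis-aligned cells, one per leaf, and ``the leaf containing $\mathbf v$'' refers to the cell $R$ whose defining inequalities are exactly the threshold conditions along the root-to-leaf path $P$ followed by $\mathbf v$. A point lies in this leaf iff it satisfies all of the inequalities defining $R$.

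Next I would determine which $\mathbf v^{(i)}$ stay with $\mathbf v$. Fix $i$ and consider any internal node $u$ on $P$; it splits on some coordinate $j$ with threshold $\theta_u$. If $j \neq i$, then $v^{(i)}_j = v_j$, so $\mathbf v^{(i)}$ takes the same branch at $u$ as $\mathbf v$ and satisfies the inequality of $R$ contributed by $u$. Hence $\mathbf v^{(i)} \notin R$ is possible only when $P$ contains an internal node splitting on coordinate $i$, so the set of ``bad'' indices $\{\, i : \mathbf v^{(i)} \text{ is not in the leaf of } \mathbf v \,\}$ is contained in the set of coordinates appearing on $P$. Since $T$ has at most $k$ leaves it has at most $k-1$ internal nodes, so $P$ passes through at most $k-1$ internal nodes and thus mentions at most $k-1$ distinct coordinates; therefore at most $k-1 < k$ of the $d$ points $\mathbf v^{(1)}, \ldots, \mathbf v^{(d)}$ miss the leaf of $\mathbf v$, leaving at least $d - (k-1) \geq d - k$ of them inside it, as claimed.

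There is no real obstacle here; the two points needing a moment of care are that a coordinate appearing in several splits along $P$ still removes only the single point $\mathbf v^{(i)}$ (so one must count distinct coordinates, not internal nodes), and the boundary case $v_i = \theta_u$ (which can occur when a threshold happens to equal $\pm 1$) — this is harmless, since it only ever adds $i$ to the set of path coordinates already accounted for.
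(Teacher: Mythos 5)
Your proof is correct and is essentially the same argument as the paper's: a root-to-leaf path in a $k$-leaf tree queries at most $k-1$ coordinates, and a point $\mathbf v^{(i)}$ can be split from $\mathbf v$ only at a node whose split coordinate is $i$, so at most $k-1$ of the $d$ points of $\cX^{\mathbf v}$ are diverted. You fill in a couple of details the paper glosses over (notably that the codeword $\mathbf v$ itself need not be a data point, so ``the leaf containing $\mathbf v$'' means the cell of $\R^d$ its path selects), but the approach is the same.
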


\begin{proof}
There are at most $k\geq 3$ leaves in $T$, thus in the root to leaf path of the codeword $v$ there are at most $k-1$ features. Hence, all data points in $\cX^{\mathbf{v}}$  that agree on this features must reach the same leaf and be in the same cluster. There are at least $d-k$ such points in $\cX^{\mathbf{v}}$.    
\end{proof}

\begin{claim}\label{clm:lower_bound_many_original_points_different_cluster}
If there are $\alpha$ points from $\cX^{\mathbf{v}^1}$ and $\beta$ points from $\cX^{\mathbf{v}^2}$, $\mathbf{v}^1\neq\mathbf{v}^2$, that are in the same cluster in $T$, then their contribution to $\cost(T)$ is at least  $\frac{1}{4}\min(\alpha,\beta)d.$ The claim holds both under the $\ell_1$ cost and the $\ell_2$ squared cost.  
\end{claim}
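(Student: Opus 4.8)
The plan is to use the fact that any point built from $\mathbf{v}^1$ is far — in both $\ell_1$ and squared $\ell_2$ — from any point built from $\mathbf{v}^2$, and then to convert these pairwise distances into a lower bound on $\cost(T)$ via the triangle inequality. Write $\widehat C$ for the common leaf (cluster) containing the $\alpha+\beta$ points and $\widehat{\vectmu}$ for its center (coordinate-wise median for $k$-medians, mean for $k$-means). Assume without loss of generality that $\alpha \le \beta$, and set $A = \cX^{\mathbf{v}^1}\cap \widehat C$ and $B = \cX^{\mathbf{v}^2}\cap \widehat C$, so $|A|=\alpha$ and $|B|=\beta$. Because $\beta \ge \alpha$, I would fix an injection $\sigma\colon A\to B$, producing $\alpha$ pairs $(\vectx,\sigma(\vectx))$ in which every point of $A\cup B$ occurs at most once; this bookkeeping point is what lets me sum a per-pair lower bound and still obtain a valid lower bound on $\sum_{\vectx\in A\cup B}\|\vectx-\widehat{\vectmu}\|$, which is part of $\cost(T)$.

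Next I would lower-bound $\|\vectx-\vecty\|$ for $\vectx\in A$, $\vecty\in B$. Every element of $\cX^{\mathbf{v}^1}$ agrees with $\mathbf{v}^1$ in all coordinates but one, which equals $0$, and similarly for $\cX^{\mathbf{v}^2}$ and $\mathbf{v}^2$. By property~(2) of Claim~\ref{clm:general_k_lower_bound_logk_random}, $\mathbf{v}^1$ and $\mathbf{v}^2$ disagree on at least $d/4$ coordinates; deleting the at most two coordinates zeroed out in $\vectx$ or $\vecty$ leaves at least $d/4-2$ coordinates $i$ with $x_i=v^1_i=-v^2_i=-y_i\in\{\pm1\}$, each contributing $|x_i-y_i|=2$ and $|x_i-y_i|^2=4$. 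Hence $\|\vectx-\vecty\|_1\ge d/2-4$ and $\|\vectx-\vecty\|_2^2\ge d-8$, which are at least $d/4$ and $d/2$ respectively whenever $d\ge16$; this holds since $d=k^3\ge27$.

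Finally I would convert to cost. For $k$-medians, the triangle inequality gives $\|\vectx-\widehat{\vectmu}\|_1+\|\sigma(\vectx)-\widehat{\vectmu}\|_1\ge \|\vectx-\sigma(\vectx)\|_1\ge d/2-4$ for each pair; summing over the $\alpha$ disjoint pairs, and using injectivity of $\sigma$ so that the left-hand sides add up to at most the contribution of $A\cup B$, the contribution of $A\cup B$ to $\cost_1(T)$ is at least $\alpha(d/2-4)\ge \tfrac14\alpha d=\tfrac14\min(\alpha,\beta)d$. For $k$-means I would instead use the identity $\|\vectx-\widehat{\vectmu}\|_2^2+\|\vecty-\widehat{\vectmu}\|_2^2=\tfrac12\|\vectx-\vecty\|_2^2+2\big\|\tfrac{\vectx+\vecty}{2}-\widehat{\vectmu}\big\|_2^2\ge \tfrac12\|\vectx-\vecty\|_2^2$, so each pair contributes at least $\tfrac12(d-8)\ge\tfrac14 d$, and the same summation yields a contribution of at least $\tfrac14\min(\alpha,\beta)d$ to $\cost_2(T)$.

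I do not expect a genuine obstacle in this claim: it is a counting argument, and the only points requiring attention are (i) keeping the matching $\sigma$ injective so that no point of $A\cup B$ is double counted, and (ii) absorbing the additive constants coming from the zeroed coordinates and from the $\tfrac12$ factor in the $k$-means version of the triangle inequality — both harmless because $d=k^3$ is large. The substantive difficulty of the $\Omega(\log k)$ lower bound lies elsewhere, in Claim~\ref{clm:general_k_lower_bound_logk_random} and in the later structural analysis forcing the threshold tree to be nearly complete up to depth $\Omega(\log k)$.
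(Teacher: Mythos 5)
Your proof is correct and follows essentially the same route as the paper: match $\min(\alpha,\beta)$ disjoint pairs across the two groups, lower-bound each pairwise distance via the $\ge d/4-2$ coordinates where the points inherit opposite signs from their codewords, and convert to cost per pair by the triangle inequality for $\ell_1$ and a factor-$\tfrac12$ inequality for squared $\ell_2$. The only difference is cosmetic: you invoke the parallelogram identity where the paper uses its Cauchy--Schwarz claim (Claim~\ref{clm:cauchy_schwarz_k_means}), yielding the identical bound.
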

\begin{proof}
Denote the center that contains $\alpha$ points from $\cX^{\mathbf{v}^1}$ and $\beta$ points from $\cX^{\mathbf{v}^2}$ by $\vectmu.$ Without loss of generality $\alpha\leq \beta.$ We can disjointly match $\alpha$ points from the two different clusters $(\vectx^1,\vecty^1),\ldots,(\vectx^\alpha,\vecty^\alpha)$, which means that their contribution to $\cost(T)$ is at least $$ \sum_{j=1}^\alpha\norm{\vectx^j-\vectmu}_2^2+\norm{\vectmu-\vecty^j}_2^2 \geq\sum_{j=1}^\alpha\frac12\norm{\vectx^j-\vecty^j}_2^2\geq\frac12\cdot \alpha\cdot \left(\frac{d}4-2\right)\cdot 4
\geq \frac{d\alpha}{4},$$ where the first inequality follows Claim~\ref{clm:cauchy_schwarz_k_means}, the second inequality follows from Claim~\ref{clm:general_k_lower_bound_logk_random} and the fact that if two codewords are different by at least $d/4$ features, then the points differ  by at least $d/4-2$ features, each contributing a cost of $4$, the third inequality follows from the fact that $d=k^3\geq 16$ for $k\geq3.$  
Similarly for the $\ell_1$ cost   $$ \sum_{j=1}^\alpha\norm{\vectx^j-\vectmu}_1+\sum_{j=1}^\alpha\norm{\vectmu-\vecty^j}_1 \geq\norm{\vectx^j-\vecty^j}_1\geq\alpha\cdot \left(\frac{d}4-2\right)\cdot 2\geq \frac{d\alpha}{4}.$$
\end{proof}

\subsection{The threshold tree}\label{sec:lower_bound_threshold_tree}
The next two claims prove that if a feature is used twice or the tree is not complete until level $\frac{\ln(k)}{50}$, then the clustering tree $T$ cannot be an $O(\log k)$-approximation because it shows that $\cost(T)\gtrapprox d^2 \gg \log k\cdot \cost(opt). $
\begin{claim}\label{clm:lower_bound_no_feature_twice}
Fix a threshold tree $T$ with $k \geq 3$ leaves. If there is a feature that is used twice on the same root-to-leaf path in $T$, then $$\cost(T) \geq \frac{d(d-k)}{4}.$$
\end{claim}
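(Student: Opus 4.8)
The plan is to show that whenever a feature is repeated on a root-to-leaf path, two distinct codewords get routed to the same leaf, and then let Claims~\ref{clm:lower_bound_many_original_points_same_cluster} and~\ref{clm:lower_bound_many_original_points_different_cluster} finish the job. Concretely, suppose codewords $\mathbf{v}^a \neq \mathbf{v}^b$ both route to a leaf $\ell'$ of $T$. By Claim~\ref{clm:lower_bound_many_original_points_same_cluster}, $\ell'$ then contains at least $d-k$ points of $\cX^{\mathbf{v}^a}$ and at least $d-k$ points of $\cX^{\mathbf{v}^b}$. Applying Claim~\ref{clm:lower_bound_many_original_points_different_cluster} with $\alpha,\beta \geq d-k$, the single leaf $\ell'$ already contributes at least $\frac{(d-k)d}{4}$ to $\cost(T)$, giving $\cost(T) \geq \frac{d(d-k)}{4}$.

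It thus suffices to produce such a pair of codewords, and since $T$ has exactly $k$ leaves while there are exactly $k$ codewords, it is enough to exhibit one leaf that \emph{no} codeword reaches; the pigeonhole principle then forces a collision elsewhere. Suppose feature $i$ labels two distinct nodes $u_1,u_2$ on a common root-to-leaf path, with $u_1$ an ancestor of $u_2$ (WLOG, since they lie on one path), and write $\theta_1,\theta_2$ for their thresholds. Let $S$ be the set of codewords whose routing path passes through $u_2$. If $\theta_1 \notin [-1,1)$, then the cut $x_i \leq \theta_1$ at $u_1$ sends every point of $\cX$, and in particular every codeword, to a single child of $u_1$; the subtree hanging off the other child of $u_1$ then contains a leaf reached by no codeword, and we are done. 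Otherwise $\theta_1 \in [-1,1)$, and here I would use that every codeword coordinate lies in $\{-1,+1\}$ and is never $0$: for $v \in \{-1,+1\}$ we have $v \leq \theta_1$ iff $v = -1$, so the branch a codeword takes at $u_1$ pins down its $i$-th coordinate. Since all codewords in $S$ take the \emph{same} branch at $u_1$ (the one leading toward $u_2$), the coordinate $v_i$ is constant over $S$; hence whether $v_i \leq \theta_2$ is also constant over $S$, so all of $S$ takes the same branch at $u_2$. Consequently the subtree hanging off the other child of $u_2$ is reached by no codeword, and contains a codeword-free leaf.

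Putting the pieces together: a codeword-free leaf exists, so the $k$ codewords occupy at most $k-1$ of the $k$ leaves, whence two distinct codewords $\mathbf{v}^a,\mathbf{v}^b$ share a leaf, and the first paragraph concludes. The hard part is the threshold bookkeeping in the middle step: one must carefully isolate the degenerate case where a cut on feature $i$ is trivial (threshold outside the range $[-1,1)$ of values that $\cX$ takes on that coordinate) and, in the non-degenerate case, exploit the $\{-1,+1\}$-valuedness of codewords to see that the repeated cut on feature $i$ cannot split the codewords that reach it. Everything else is routing and pigeonhole together with a direct appeal to the two preceding claims.
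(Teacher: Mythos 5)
Your proof is correct and takes essentially the same approach as the paper: exhibit a codeword-free leaf, invoke the pigeonhole principle to conclude two codewords share a leaf, then finish with Claims~\ref{clm:lower_bound_many_original_points_same_cluster} and~\ref{clm:lower_bound_many_original_points_different_cluster}. The only cosmetic difference is the case split for producing the codeword-free leaf — you split on whether the ancestor threshold is degenerate (outside $[-1,1)$) versus not, and in the non-degenerate case observe that passing a $\pm1$-valued coordinate through a nontrivial cut pins that coordinate, so the descendant cut on the same feature cannot separate codewords; the paper instead splits on $\theta=\theta'$ versus $\theta\neq\theta'$ and argues via ``three intervals, two values.'' Your version is slightly more careful about the degenerate case the paper glosses over, but the underlying idea is identical.
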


\begin{proof}
The proof is composed of two steps, first we show that if a feature is used twice then there is leaf that it is unreachable by a codeword. Then we will show that this implies that two codewords share the same cluster, and thus $\cost(T)$ is high. 

Assume that the there are two nodes in $T$, both of them use the same feature $i$, one with threshold $\theta$ and the other with threshold $\theta'.$ If $\theta=\theta'$ then there is a leaf that is not reachable. Otherwise, the two thresholds divide the line into three parts, and since the codewords have only two values, there is a leaf unreachable by any codeword. Summing up these two cases, there is a leaf that is not reached by any codeword. 

From the pigeonhole principle there are two codewords that share the same cluster which is a contradiction using Claims~\ref{clm:lower_bound_many_original_points_same_cluster} and \ref{clm:lower_bound_many_original_points_different_cluster}. 
\end{proof}

\begin{claim}\label{clm:lower_bound_balanced_till_log_k}
If threshold $T$ contains a leaf at depth less than $\frac{\ln k}{50}$, then $$\cost(T)\geq\frac{d(d-k)}4.$$ The claim is true both under the $k$-means and the $k$-medians cost.
\end{claim}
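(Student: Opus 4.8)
The plan is to show that a leaf at depth $h < \frac{\ln k}{50}$ forces two distinct codewords to be routed to the same leaf of $T$, and then to finish exactly as in the proof of Claim~\ref{clm:lower_bound_no_feature_twice}: invoke Claims~\ref{clm:lower_bound_many_original_points_same_cluster} and~\ref{clm:lower_bound_many_original_points_different_cluster}. First I would fix such a leaf $w$ and examine the root-to-$w$ path, which involves at most $h$ distinct coordinates. Since every codeword lies in $\{\pm1\}^d$, each threshold $x_i \le \theta$ on the path does one of three things to the codewords: if the side taken toward $w$ contains both $-1$ and $+1$ it is vacuous; if it contains exactly one of them it pins $v_i$ to a fixed sign; and if it contains neither, then no codeword can reach $w$ at all. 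Hence the set of codewords reaching $w$ is either empty, or is precisely the set of codewords consistent with a sign assignment on some $h' \le h < \frac{\ln k}{50}$ coordinates.

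In the first case, the $k$ codewords are distributed among the remaining $\le k-1$ leaves, so by pigeonhole two distinct codewords share a leaf. In the second case, the third property of Claim~\ref{clm:general_k_lower_bound_logk_random} (taking $\epsilon = \ln k/\sqrt{k}$) guarantees that at least $k(2^{-h'} - \epsilon) > k^{1-(\ln 2)/50} - \sqrt{k}\,\ln k$ codewords reach $w$; a routine calculation shows this exceeds $2$ (observe that a leaf of positive depth can exist only when $\tfrac{\ln k}{50} > 1$, i.e.\ $k$ is already enormous, so the $\sqrt{k}\,\ln k$ term is negligible; the depth-$0$ case means $T$ has a single leaf, impossible for $k\ge 2$). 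Either way, two distinct codewords $\mathbf{v}^1 \neq \mathbf{v}^2$ are routed to a common leaf of $T$.

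Finally, Claim~\ref{clm:lower_bound_many_original_points_same_cluster} tells us this common leaf contains at least $d-k$ points of $\cX^{\mathbf{v}^1}$ and at least $d-k$ points of $\cX^{\mathbf{v}^2}$, and Claim~\ref{clm:lower_bound_many_original_points_different_cluster} — which holds verbatim for both the $\ell_1$ and squared-$\ell_2$ costs — then lower bounds their joint contribution to $\cost(T)$ by $\tfrac14(d-k)d$, giving $\cost(T) \ge \frac{d(d-k)}{4}$ as claimed. The only delicate point is the bookkeeping for the threshold path: I need the trichotomy above to be airtight even when a coordinate is tested more than once, so that ``depth $h$'' genuinely collapses to a sign constraint on at most $h$ coordinates; once that is pinned down, the rest is an immediate appeal to the three prior claims.
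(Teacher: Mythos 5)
Your proof follows the same route as the paper's: force at least two codewords into a common leaf, then invoke Claims~\ref{clm:lower_bound_many_original_points_same_cluster} and~\ref{clm:lower_bound_many_original_points_different_cluster} to lower-bound the cost by $\frac{d(d-k)}{4}$. You are somewhat more explicit than the paper's terse argument — which simply asserts via Claim~\ref{clm:general_k_lower_bound_logk_random} that $(2^{-\ell}-\epsilon)k > 1$ codewords reach the shallow leaf — by spelling out the vacuous/pins-sign/unreachable trichotomy and supplying the pigeonhole fallback for the case where no codeword reaches the leaf, but these are clarifications of the same argument, not a different one.
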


\begin{proof}
Assume $T$ is not complete until level $\frac{\log k}{50}$. So there is a leaf at a level smaller than  $\frac{\log k}{50}$. By the construction of the data set, there are at least $(\frac{1}{2^\ell}-\epsilon)k> 1$ codewords that reach this leaf. The claim follows from Claims~\ref{clm:lower_bound_many_original_points_same_cluster} and \ref{clm:lower_bound_many_original_points_different_cluster}.
\end{proof}
\subsection{Proof of Theorem~\ref{thm:lb-k}}\label{sec:lower_bound_final_proof}
Assume by contradiction that $T$ is an $O(\log k)$-approximation. From Claim~\ref{clm:lower_bound_many_original_points_same_cluster} we deduce that for each codeword $\mathbf{v}$, at least $d-k$ points from $\cX^{\mathbf{v}}$ will be in the same cluster. From Claim~\ref{clm:lower_bound_many_original_points_different_cluster} and the assumption that $T$ is $O(\log k)$-approximation we get that each $d-k$ such points must be in its own cluster, this cluster will be called the \emph{main cluster} of $\cX^{\mathbf{v}}.$ 

The only values that features can get in the data set are $+1,-1$ or $0.$ Thus, we can assume, without loss of generality, that each threshold is either $0.5$ or $-0.5.$ Focus on some node in $T$ at level $\ell$ with feature $i$ and threshold $\theta$. If $\theta=0.5$, then for all codewords $\mathbf{v}$ with $v_i=1$ the point in $\cX^{\mathbf{v}}$ with $v_i$ will be separated for its main cluster.
From the construction of the data set, there are at least $(\frac{1}{2^\ell}-\epsilon)k$ such points. 
Similarly for $\theta=-0.5,$ we can show that there are at least $(\frac{1}{2^\ell}-\epsilon)k$ such points. 
From Claim~\ref{clm:lower_bound_no_feature_twice}, we deduce that these mistakes are disjoint. 

Applying Claim~\ref{clm:lower_bound_balanced_till_log_k}, there are $2^{\ell-1}$ nodes at each level up until level $\frac{\log k}{50}.$ Hence, total number of mistakes, i.e., points that will not go with their codeword, can be lower bounded by the following using $\epsilon=\frac{\ln(k)}{\sqrt{k}}$ and large enough $k$:   $$\sum_{\ell=1}^{\frac{\log k}{50}}2^{\ell-1}\left(\frac{1}{2^\ell}-\epsilon\right)k\geq\frac{k\log k}{200}.$$

Thus, from Claim~\ref{clm:lower_bound_many_original_points_same_cluster} we can lower bound the cost of $T$: 
$$\cost(T)\geq \frac{kd\log k}{200}=\Omega(\log k)\cost(opt)$$

%%%%%%%%%%%%%%%%%%%%%%
\subsection{IMM Upper Bound for this dataset}\label{apx:lower_bound_log_k}

We sketch the proof that the IMM algorithm produces a tree of depth $O(\log k)$ for the above dataset construction with high probability. In particular, the upper bound from Theorem~\ref{thm:main-k} is tight for $k$-medians up to the leading constant for this dataset. 

The analysis will follow the standard bound on the maximum clique size in a random graph. Consider fixing any $\ell = 3 \log_2 k$ coordinates to $\pm 1$. When the set of $k$ centers $C$ is chosen uniformly at random from $\{\pm 1\}^d$ and $d = k^3$, we show that with high probability there are at most $\ell$ centers consistent with these values.
When IMM builds the tree, it always chooses a threshold that reduces the number of centers in the children of the current node, and hence, it never splits on the same feature twice. Moreover, it stops the recursion when there is a single center in a leaf. 
Therefore, after 
$3 \log_2 k$ thresholds, the remaining depth of the tree is at most $3 \log_2 k$, and hence, the total depth of the tree is at most $6 \log_2 k$ as well. 

More formally, let $\sigma \in \{\pm 1\}^\ell$ be any sign pattern, and let $C_{I,\sigma}$ be set of centers having pattern $\sigma$ when projected onto coordinates $I \subseteq [d]$ with $|I| = \ell$. 
Then, using the standard upper bound on the binomial coefficient, we have
\begin{eqnarray*}
\Pr\Big[|C_{I,\sigma}| \geq \ell\Big] \leq
\Pr\Big[|C_{I,\sigma}| = \ell \Big] \leq \E\Big[|\{I : |C_{I,\sigma}| = \ell\}| \Big] 
= \binom{d}{\ell} 2^{-\ell^2}
\leq \left(\frac{d e}{\ell}\right)^\ell 2^{-\ell^2} 
= \left(\frac{k^3 e}{2^\ell \ell } \right)^{\ell}.
\end{eqnarray*}
Therefore, plugging in $2^\ell= k^3$, we see that this probability is at most $(e/\ell)^\ell$. Taking a union bound over the $2^\ell$ possible settings of $\sigma \in \{\pm 1\}^\ell$ shows that the probability that there are $\ell$ centers consistent with any $\sigma$ tends to zero as $k$ increases.

\section{Lower bounds for two clusters}\label{sec:k2_lower_bound}

Without loss of generality we can assume that $d\geq 2.$ We use the following dataset for both $2$-medians and $2$-means. It consists of $2d$ points, partitioned into two clusters of size $d$, which are the points with Hamming distance exactly one from the vector with all 1 entries and the vector with all $-1$ entries: 
\begin{center}
\begin{tabular}{c c}
    \textbf{Optimal Cluster 1} & \textbf{Optimal Cluster 2} \\
    $(0,-1,-1,-1\ldots,-1)$ & $(0,1,1,1\ldots,1)$ \\
    $(-1,0,-1,-1\ldots,-1)$ & $(1,0,1,1\ldots,1)$\\
    $(-1,-1,0,-1\ldots,-1)$ & $(1,1,0,1\ldots,1)$\\
    $\vdots$ & $\vdots$ \\
    $(-1,-1,-1,-1\ldots,0)$ & $(1,1,1,1\ldots,0)$\\
\end{tabular}
\end{center}

Let $\widehat C = (\widehat C^1, \widehat C^2)$ be the best threshold cut. 

\paragraph{2-medians lower bound.}
The cost of the cluster with centers $(1,\ldots,1)$ and $(-1,\ldots,-1)$ is $2d$, as each point is responsible for a cost of $1.$ Thus, $\cost(opt)\leq 2d.$

There is a coordinate $i$ and a threshold $\theta$ that defines the cut $\widehat C$. For any coordinate $i$, there are only three possible values: $-1,0,1$. Thus $\theta$ is either in $(-1,0)$ or in $(0,1)$. Without loss of generality, assume that $\theta\in(-1,0)$ and $i=1$. Thus, the cut is composed of two clusters: one of size $d-1$ and the other of size $d+1$, in the following way:

\begin{center}
\begin{tabular}{c c}
    $\mathbf{Cluster\ }\widehat C^1$ & $\mathbf{Cluster\ } \widehat C^2$ \\
    $(-1,0,-1,-1\ldots,-1)$ & $(1,0,1,1\ldots,1)$\\
    $(-1,-1,0,-1\ldots,-1)$ & $(1,1,0,1\ldots,1)$\\
    $\vdots$ & $\vdots$ \\
    $(-1,-1,-1,-1\ldots,0)$ & $(1,1,1,1\ldots,0)$\\
     & $(0,1,1,1\ldots,1)$\\
     & $(0,-1,-1,-1\ldots,-1)$\\
\end{tabular}
\end{center}

Using Fact~\ref{fact:1-median-optimal center}, an optimal center of the first cluster is all $-1$, and the optimal center for the second cluster is all $1$. The cost of the first cluster is $d-1$, as each point costs $1$. The cost of the second cluster is composed of two terms $d$ for all points that include 1 in at least one coordinate and the cost of point $(0,-1,\ldots,-1)$ is $2(d-1)+1$. So the total cost is $4d-2$. Thus $\cost(\widehat C)\geq (2-1/d) \cost(opt).$

\paragraph{2-means lower bound.}
Focus on the clustering with centers $$(\nicefrac{(d-1)}{d},\ldots,\nicefrac{(d-1)}{d}) \qquad \mbox{and} \qquad (-\nicefrac{(d-1)}{d},\ldots,-\nicefrac{(d-1)}{d}).$$ 
The cost of each point in the data is composed of (1) one coordinate with value zero, and the cost of this coordinate is $\left(\nicefrac{(d-1)}{d}\right)^2$ (2) $d-1$ coordinates each with cost $\nicefrac1d^2.$ Thus, each point has a cost of $\nicefrac{(d-1)^2}{d^2}+\nicefrac{d-1}{d^2}.$ Thus, the total cost is $\frac{2(d-1)^2+2(d-1)}{d}=2(d-1)$. This implies that $\cost(opt)\leq 2(d-1).$

Assume without loss of generality that $\widehat C$ is defined using coordinate $i=1$ and threshold $-0.5$. The resulting clusters $\widehat C^1$ and $\widehat C^2$ are as in the case of $2$-medians. The optimal centers are (see Fact~\ref{fact:1-means-optimal center}): 
$$\left(-1,-\frac{d-2}{d-1},\ldots,-\frac{d-2}{d-1}\right) \qquad \mbox{and} \qquad \left(\frac{d-1}{d+1},\frac{d-2}{d+1},\ldots,\frac{d-2}{d+1}\right).$$
We want to lower bound $\cost(\widehat C).$ We start with the cost of the first cluster, i.e. $\widehat C^1$. To do so for each point in $\widehat C^1$, we will evaluate the contribution of each coordinate to the cost (1) the first coordinate adds $0$ to the cost (2) the coordinate with value $0$, adds $\left(\frac{d-2}{d-1}\right)^2$ to the cost (3) the rest of the $d-2$ coordinates adds $\nicefrac{1}{(d-1)^2}.$ Thus, each point in $\widehat C^1$ adds to the cost $\left(\frac{d-2}{d-1}\right)^2 + \frac{d-2}{(d-1)^2}=\frac{d-2}{d-1}$. Since $\widehat C^1$ contains $d-1$ points, its total cost is $d-2$.

Moving on to evaluating the cost of $\widehat C^2$, the cost of the point $(0,-1,\ldots,-1)$ is composed of two terms (1) the first coordinate adds $\left(\frac{d-1}{d+1}\right)^2$ to the cost (2) each of the other $d-1$ coordinates adds $\left(1+\frac{d-2}{d+1}\right)^2$%=4\left(\frac{d-0.5}{d+1}\right)^2=4\left(1-\frac{1.5}{d+1}\right)^2$ 
to the cost. Thus, this point adds $$\left(\frac{d-1}{d+1}\right)^2+(d-1)\left(1+\frac{d-2}{d+1}\right)^2=\frac{(d-1)d(4d-3)}{(d+1)^2}.$$
Similarly, the point $(0,1,\ldots,1)$ adds to the cost $$\left(\frac{d-1}{d+1}\right)^2+(d-1)\left(1-\frac{d-2}{d+1}\right)^2=\frac{(d-1)(d+8)}{(d+1)^2}.$$
Finally, each of the $d-1$ remaining points in $\widehat C^2$ adds to the cost $$\left(1-\frac{d-1}{d+1}\right)^2+\left(\frac{d-2}{d+1}\right)^2+(d-1)\left(1-\frac{d-2}{d-1}\right)^2=\frac{d^2+5d-1}{(d+1)^2}$$
Thus, the cost of $\widehat C^2$ is $$\frac{(d-1)(5d^2+3d+7)}{(d+1)^2}$$
Summing up the costs of $\widehat C^1$ and $\widehat C^2$, for $d\geq 2$
$$\cost(\widehat C)\geq (d-2)+\frac{(d-1)(5d^2+3d+7)}{(d+1)^2}\geq6(d-1)\left(1-\frac1d\right)^2\geq 3\left(1-\frac{1}{d}\right)^2\cdot \cost(opt)$$
%$$\cost(\widehat C)\geq (d-2) + 4(d-1)\left(1-\frac{4}{d+1}\right)^2+(d-1)\left(1-\frac3{d+1}\right)^2\geq6(d-1)\left(1-\frac4{d-1}\right)^2$$

%%%%%%%%%%%%%%%%%%%%%%%%%%%%%%%%%%%%%%

\section{Upper Bound Proof for 2-Means}\label{sec:k2_upper_bound}

We show that there is a threshold cut $\widehat C$ with $2$-means cost satisfying
$\cost(\widehat C)\leq 4 \cdot \cost(opt).$ We could just use the same proof idea as in the 2-medians case that first applies Lemma~\ref{lemma:tree-cost} and then uses the matching result, Lemma~\ref{lemma:matching}. This leads to a $6$-approximation, instead of $4$. The reason is that we apply twice Claim~\ref{clm:cauchy_schwarz_k_means}, which is not tight. Improving the approximation to $4$ requires us to apply   Claim~\ref{clm:cauchy_schwarz_k_means} only once. 

Suppose $\vectmu^1,\vectmu^2$ are optimal $2$-means centers for the clusters $C^1$ and $C^2$. 
Let $t = \min(|C^1 \Delta \widehat C^1|, |C^1 \Delta \widehat C^2|)$ be the minimum number of changes for any threshold cut $\widehat C^1, \widehat C^2$, and define $\xmis$ to the set of $t$ points in the symmetric difference, where $\cX = \xcor \cup \xmis$ and $\xcor \cap \xmis = \emptyset$.

Using the same argument as in the proof of Lemma~\ref{lemma:tree-cost}, we have
\begin{eqnarray}\label{eq:2_means_upper_bound}
\cost(\widehat{C}) &\leq& \sum_{j=1}^2 \sum_{\vectx \in \xcor \cap \widehat C_j} \|\vectx- \vectmu^j\|^2_2 +
\sum_{j=1}^2 \sum_{\vectx \in \xmis \cap \widehat C_j} \|\vectx- \vectmu^j\|^2_2 \nonumber
\\ &=& \sum_{\vectx \in \xcor} \|\vectx- c(\vectx)\|^2_2 +
\sum_{\vectx \in \xmis \cap \widehat C_1} \|\vectx- \vectmu^1\|^2_2+\sum_{\vectx \in \xmis \cap \widehat C_2} \|\vectx- \vectmu^2\|^2_2 \nonumber
\\ &\leq& \cost(opt) +
\sum_{\vectx \in \xmis \cap \widehat C_1} \|\vectx- \vectmu^1\|^2_2+\sum_{\vectx \in \xmis \cap \widehat C_2} \|\vectx- \vectmu^2\|^2_2
\end{eqnarray}
The goal now is to bound the latter two terms using $\cost(opt).$
This term measures the distance of each $\vectx \in \xmis$ from the ``other'' center, i.e., not $c(\vectx)$. 

\begin{claim}\label{claim:aux-2-means}
\begin{eqnarray*}
	\cost(opt)
	\geq \frac13 \sum_{\vectx \in \xmis \cap \widehat C_1} \|\vectx- \vectmu^1\|^2_2+\frac13\sum_{\vectx \in \xmis \cap \widehat C_2} \|\vectx- \vectmu^2\|^2_2
\end{eqnarray*}
\end{claim}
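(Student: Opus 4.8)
The inequality \eqref{eq:2_means_upper_bound} has already reduced the $4$-approximation to proving exactly this claim, so the plan is to show directly that the total squared distance from each mistake point to the \emph{other} center is at most $3\,\cost(opt)$. Since $\xmis\cap\widehat C_1\subseteq C^2$ and $\xmis\cap\widehat C_2\subseteq C^1$, every mistake point's squared distance to its \emph{own} center already appears in $\cost(opt)$; the content is to absorb the remaining ``cross'' cost without losing more than a factor of $3$ overall.

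First I would split the cross cost coordinate by coordinate (the squared $\ell_2$ distance is separable) and, in coordinate $i$, invoke the Hall-based matching of Lemma~\ref{lemma:matching}. Negating coordinate $i$ if necessary, assume $\mu^1_i\le\mu^2_i$; Lemma~\ref{lemma:matching} then supplies $t$ disjoint pairs $(\mathbf p^j,\mathbf q^j)$ with $\mathbf p^j\in C^1$, $\mathbf q^j\in C^2$ and $q^j_i\le p^j_i$, and the nested-neighbourhood argument inside its proof lets me take the $\mathbf p^j$ to be the $t$ largest and the $\mathbf q^j$ the $t$ smallest coordinate-$i$ values of $C^1$ and $C^2$, ordered consistently. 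For $\vectx\in\xmis\cap\widehat C_1\subseteq C^2$ the coordinate-$i$ contribution is $(x_i-\mu^1_i)^2$: if $x_i<\mu^1_i$ this is at most the own contribution $(x_i-\mu^2_i)^2$; if $x_i\ge\mu^1_i$ I hand $\vectx$ a matched partner $\mathbf p^j\in C^1$ with $p^j_i\ge x_i$ and bound $(x_i-\mu^1_i)^2\le(p^j_i-\mu^1_i)^2$, and in the sub-case $x_i>\mu^2_i$ I write $x_i-\mu^1_i=(x_i-\mu^2_i)+(\mu^2_i-\mu^1_i)\le(x_i-\mu^2_i)+(p^j_i-\mu^1_i)$ and apply Claim~\ref{clm:cauchy_schwarz_k_means} \emph{once} to split the square into an own-cost piece and a partner-cost piece. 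Points of $\xmis\cap\widehat C_2\subseteq C^1$ are handled symmetrically with the small-side partners $\mathbf q^j\in C^2$.

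Summing over coordinates, each mistake point's cross cost is charged either to its own optimal cost or to the optimal cost of a partner in the \emph{opposite} cluster. Taking the partners distinct and disjoint from the mistake points on their side, the own-cost charges total at most $\cost(opt)$, the partner-cost charges total at most $\cost(opt)$, and the single Cauchy--Schwarz application used for the ``overshoot'' sub-case accounts for at most one more $\cost(opt)$, giving $3\,\cost(opt)$. This beats the naive route --- first Lemma~\ref{lemma:tree-cost} for $k=2$ and then $t\|\vectmu^1-\vectmu^2\|_2^2\le 2\,\cost(opt)$ from Lemma~\ref{lemma:matching} --- which uses Claim~\ref{clm:cauchy_schwarz_k_means} twice and only gives $6\,\cost(opt)$.

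The hard part will be the bookkeeping that keeps the constant at $3$: I must simultaneously find, for every mistake point in a given coordinate, a matched partner on the correct side whose coordinate dominates it, keep those partners disjoint from the mistake points lying in the same optimal cluster so the partner charges genuinely sum to at most $\cost(opt)$, and arrange the case split (the sign of $x_i-\mu^1_i$, and whether $x_i$ overshoots $\mu^2_i$) so that Claim~\ref{clm:cauchy_schwarz_k_means} is invoked at most once. These are exactly where the minimum-changes property of $\widehat C$ and the nested structure from Hall's theorem enter; the degenerate situation in which $C^1$ (or $C^2$) consists almost entirely of mistake points, leaving few available partners, will need to be treated on its own.
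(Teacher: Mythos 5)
Your high-level plan — decompose coordinate-wise, invoke the Hall matching of Lemma~\ref{lemma:matching}, and economize to a single application of Claim~\ref{clm:cauchy_schwarz_k_means} — matches the paper's. But your charging scheme rests on a step that can fail. For a mistake $\vectx\in\xmis\cap\widehat C_1\subseteq C^2$ with $x_i\ge\mu^1_i$, you ask for a matched partner $\mathbf{p}^j\in C^1$ with $p^j_i\ge x_i$. Lemma~\ref{lemma:matching} does not supply this: it gives $t$ disjoint pairs $(\mathbf{p}^j,\mathbf{q}^j)$ with $\mathbf{p}^j\in C^1$, $\mathbf{q}^j\in C^2$ and $q^j_i\le p^j_i$ — a $C^1$ point above the particular $C^2$ point it is matched to — but nothing forces a given $C^2$ mistake to lie below any $C^1$ point at all; $x_i$ can exceed $\max_{\mathbf{p}\in C^1}p_i$. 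Nor does the nested-neighborhood argument let you take the $\mathbf{q}^j$ to be the $t$ smallest of $C^2$: in that proof $Q$ is all of $C^2$, and Hall only certifies that some matching exists, not which $C^2$ vertices it hits. So the cross-cluster dominance your case analysis hinges on is not available.

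The paper's device is different: each mistake $\mathbf{r}^j$ is associated with a matched pair $(\mathbf{p}^j,\mathbf{q}^j)$, and whenever the dominance fails, the \emph{same-cluster} member of the pair is replaced by $\mathbf{r}^j$ itself (replace $\mathbf{p}^j$ when $\mathbf{r}^j\in C^1$ so that $p^j_i\ge r^j_i$; symmetrically replace $\mathbf{q}^j$ when $\mathbf{r}^j\in C^2$). This replacement is always legal and preserves $q^j_i\le p^j_i$. Then a \emph{three}-term Cauchy--Schwarz is applied directly to $(\mu^2_i-q^j_i)^2+(p^j_i-\mu^1_i)^2+(\text{own coordinate-}i\text{ cost of }\mathbf{r}^j)$, with a case split on whether the opposite-side matched point overshoots the other center; the factor $\tfrac13$ falls out of the three-term inequality itself rather than from a separate ``own-cost $+$ partner-cost $+$ Cauchy--Schwarz loss'' tally. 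In your version, the two-term Cauchy--Schwarz with its coefficient $2$ does not obviously add up to $3\cost(opt)$ even if partners existed, and you correctly sense that the bookkeeping (partner existence, disjointness, degenerate small clusters) is where it would break. The missing ingredient is the same-cluster replacement feeding a three-term Cauchy--Schwarz.
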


Using Claim~\ref{claim:aux-2-means},
together with Inequality~(\ref{eq:2_means_upper_bound}) we have
$$\cost(\widehat{C})\leq \cost(opt) + 3\cdot \cost(opt)=4\cdot \cost(opt),$$ 
and this completes the proof.

\begin{proof}[Proof of Claim~\ref{claim:aux-2-means}.]
Denote the $t$ points in $\xmis$  by $\xmis=\{\mathbf{r}^1,\ldots,\mathbf{r}^t\}.$ Assume that the first $\ell$ points are in the first optimal cluster, $\mathbf{r}^1,\ldots,\mathbf{r}^\ell\in C^1$, and the rest are in the second cluster, $\mathbf{r}^{\ell+1},\ldots,\mathbf{r}^t\in C^2.$

Applying Lemma~\ref{lemma:matching} for each coordinate $i\in[d]$ guarantees $t$ pairs of vectors $(\mathbf{p}^1,\mathbf{q}^1), \ldots, (\mathbf{p}^t,\mathbf{q}^t)$ with the following properties. Each $p^j_i$ corresponds to the $i$th coordinate of some  point in $C^1$ and $q^j_i$ corresponds to the $i$th coordinate of some point in $C^2$. Furthermore, for each coordinate, the $t$ pairs correspond to $2t$ distinct points in $\cX$.
Finally, we can assume without loss of generality that
$\mu^1_i \leq \mu^2_i$ and $q^j_i \leq p^j_i$. 

For each point $\mathbf{r}^j$ in the first $\ell$ points in $\xmis$, if $r^j_i\geq p^j_i$ then we can replace $\mathbf{p}^j$ with $\mathbf{r}^j$, thus we can assume without loss of generality that $p_i^j\geq r^j_i.$ We next show that $\cost(opt)$ is lower bounded by  a function of $t$. There will be two cases depending on whether $p^j_i\leq \mu^2_i$ or not. The harder case is the first where the improvement of the approximation from $6$ to $4$ arises. Instead of first bounding the distance between $\mathbf{r}^j$ and its new center using the distance to its original center and then accounting for $\norm{\vectmu^1-\vectmu^2}^2_2,$ we directly account for the distance between $\mathbf{r}^j$ and its new center.

\paragraph{\textbf{Case 1:}} if  $p^j_i\leq \mu^2_i$, then 
Claim~\ref{clm:cauchy_schwarz_k_means} implies that
\begin{eqnarray*} 
(\mu^2_i-q^j_i)^2 + (p^j_i-\mu_i^1)^2 + (\mu_i^1-r^j_i)^2
&\geq& \frac{1}{3}(\mu^2_i-q^j_i+p^j_i-\mu_i^1+\mu_i^1-r^j_i)^2\\
&=& \frac{1}{3}((\mu^2_i-q^j_i)+(p^j_i-r^j_i))^2
\geq \frac{1}{3}(\mu^2_i-r^j_i)^2.
\end{eqnarray*}  The last inequality follows from $q^j_i \leq p^j_i$ and $r_i^j\leq p_i^j,$ which imply that $(\mu^2_i-q^j_i)+(p^j_i-r^j_i)
\geq \mu^2_i-r^j_i\geq 0,$ which means $((\mu^2_i-q^j_i)+(p^j_i-r^j_i))^2
\geq (\mu^2_i-r^j_i)^2.$ 
\paragraph{\textbf{Case 2:}} if $\mu_i^2\leq p_i^j$, then again Claim~\ref{clm:cauchy_schwarz_k_means} implies that
\begin{eqnarray*} 
(p^j_i-\mu_i^1)^2 + (\mu_i^1-r^j_i)^2&\geq& (\mu_i^2-\mu_i^1)^2 + (\mu_i^1-r^j_i)^2 \geq \frac12 (\mu^2_i-\mu^1_i+\mu^1_i-r^j_i)^2=\frac12(\mu^2_i-r^j_i)^2,
\end{eqnarray*}
where in the first inequality we use $(p^j_i-\mu^1_i)^2\geq(\mu^2_i-\mu^1_i)^2.$

The two cases imply that for $1\leq j\leq \ell$  $$(\mu^2_i-q^j_i)^2 + (p^j_i-\mu_i^1)^2 + (\mu_i^1-r^j_i)^2
\geq  \frac{1}{3}(\mu^2_i-r^j_i)^2
.$$
Similarly for each point $\mathbf{r}^j$ in the last $t-\ell$ points in $\xmis$, we have $$(\mu^2_i-q^j_i)^2 + (p^j_i-\mu_i^1)^2 + (\mu_i^2-r^j_i)^2
\geq  \frac{1}{3}(\mu^1_i-r^j_i)^2
.$$
Putting these together we have 
\begin{eqnarray*}
	\cost(opt)&\geq& \sum_{i=1}^d\sum_{j=1}^\ell(\mu^2_i-q^j_i)^2 + (p^j_i-\mu_i^1)^2 + (\mu_i^1-r^j_i)^2
	+
\sum_{i=1}^d\sum_{j=\ell+1}^t(\mu^2_i-q^j_i)^2 + (p^j_i-\mu_i^1)^2 + (\mu_i^2-r^j_i)^2\\
	&\geq& \frac{1}{3}\sum_{j=1}^\ell\sum_{i=1}^d  (\mu^2_i-r^j_i)^2
	+\frac{1}{3}\sum_{j=\ell+1}^t\sum_{i=1}^d  (\mu^1_i-r^j_i)^2 
	\\ 	&=&\frac{1}{3}\sum_{j=1}^\ell\norm{\mathbf{r}^j-\vectmu^2}_2^2+\frac{1}{3}\sum_{j=\ell+1}^t\norm{\mathbf{r}^j-\vectmu^1}_2^2\\
&=&	\frac13 \sum_{\vectx \in \xmis \cap \widehat C_1} \|\vectx- \vectmu^1\|^2_2+\frac13\sum_{\vectx \in \xmis \cap \widehat C_2} \|\vectx- \vectmu^2\|^2_2
\end{eqnarray*}
\end{proof}

%%%%%%%%%%%%%%%%%%%%%%%%%%%%%%%%%%%%%%

\section{Efficient Implementation via Dynamic Programming for $k=2$}\label{sec:DP_Implementation}

\subsection{The 2-means case}
The psudo-code for finding the best threshold for $k=2$ depicted in Algorithm~\ref{algo:cut}.

\begin{figure}[!htb]
  \centering
  \begin{minipage}{.6\linewidth}
        \begin{algorithm}[H]
    	\SetKwFunction{$2$-means Optimal Threshold}{$2$-means Optimal Threshold}
    	\SetKwInOut{Input}{Input}\SetKwInOut{Output}{Output}
    	\LinesNumbered
    	\Input{%
    		{$\vectx^1, \ldots, \vectx^n$} -- vectors in $\reals^d$
    	}
    	\Output{%
    		\xvbox{2mm}{$i$} -- Coordinate\\
    		\xvbox{2mm}{$\theta$} -- Threshold
    	}
    	\BlankLine
    	\setcounter{AlgoLine}{0}
    	
        $\xvar{best\_cost} \leftarrow \infty$\;
    	
    	$\xvar{best\_coordinate} \leftarrow \NULL$\;
    	
    	$\xvar{best\_threshold} \leftarrow \NULL$\;
    	
    	$u \leftarrow \sum_{j=1}^n \norm{\vectx^j}^2_2$\;
    	
    	\ForEach {$i \in [1, \ldots, d]$}
    	{
    		$\mathbf{s} \leftarrow \xfunc{zeros}(d)$\;
    		
    		$\mathbf{r} \leftarrow \sum_{j=1}^n \vectx^j$\;
    		
    		$\mathcal{X} \leftarrow \xfunc{sorted}(\vectx^1, \ldots, \vectx^n \text{ by coordinate }i)$\;
    		
    		\ForEach {$\vectx^j \in \mathcal{X}$} 
    		{ 
    			$\mathbf{s} \leftarrow \mathbf{s} + \vectx^j$\;
    			
    			$\mathbf{r} \leftarrow \mathbf{r} - \vectx^j$\;
    			
    			$\xvar{cost} \leftarrow u - \frac{1}{j}\norm{\mathbf{s}}^2_2- \frac{1}{n - j}\norm{\mathbf{r}}^2_2$\;
    			
    			\If {$\xvar{cost} < \xvar{best\_cost}$ and $x^j_i \neq x^{j+1}_i$}
    			{

    				$\xvar{best\_cost} \leftarrow \xvar{cost}$\;
    				
    				$\xvar{best\_coordinate} \leftarrow i$\;
    				
    				$\xvar{best\_threshold} \leftarrow x^j_i$\;
    			}
    		}
    	}
    	\Return $\xvar{best\_coordinate}, \xvar{best\_threshold}$\;
    	\caption{\textsc{Optimal Threshold for $2$-means}}
    	\label{algo:cut}
    \end{algorithm}
    \end{minipage}
\end{figure}

In time $O(d)$ we can calculate $\cost(p+1)$ and the new centers by using the value $\cost(p)$ and the previous centers. Throughout the computation we save in memory 
\begin{enumerate}
	\item Two vectors $\mathbf{s}^{p}=\sum_{j=1}^p \vectx^j$ and $\mathbf{r}^{p}=\sum_{j=p + 1}^n \vectx^j$.
	\item Scalar $u=\sum_{j=1}^n \norm{\vectx^j}_2^2$
\end{enumerate}
We also make use of the identity:
\begin{eqnarray*}
	\cost(p) %&=& 
	&=& u - \frac{1}{p}\norm{\mathbf{s}^{p}}_2^2 - \frac{1}{n-p}\norm{\mathbf{r}^{p}}_2^2.
\end{eqnarray*}
This identity is correct because 
\begin{eqnarray*}
    \cost(p) &=& \sum_{j=1}^{p}\norm{\vectx^j-\vectmu^1(p)}^2_2 + \sum_{j=p+1}^{n}\norm{\vectx^j-\vectmu^2(p)}^2_2\\
    &=& \sum_{j=1}^{p} \norm{\vectx^j}^2_2 - 2\sum_{j=1}^{p}\inner{\vectx^j}{\vectmu^1(p)} + \sum_{j=1}^{p}\norm{\vectmu^1(p)}^2_2 + \\
    && \sum_{j=p+1}^{n} \norm{\vectx^j}^2_2 - 2\sum_{j=p+1}^{n}\inner{\vectx^j}{\vectmu^2(p)} + \sum_{j=p+1}^{n}\norm{\vectmu^2(p)}^2_2\\
    &=& \sum_{j=1}^{n} \norm{\vectx^j}^2_2 - 2\inner{\sum_{j=1}^{p}\vectx^j}{\vectmu^1(p)} + \frac{1}{p}\norm{\sum_{j=1}^{p}\vectx^j}^2_2 - \\
    && 2\inner{\sum_{j=p+1}^{n}\vectx^j}{\vectmu^2(p)} + \frac{1}{n-p}\norm{\sum_{j=p+1}^{n}\vectx^j}^2_2\\
    &=& \sum_{j=1}^{n} \norm{\vectx^j}^2_2 - \frac{2}{p}\inner{\mathbf{s}^{p}}{\mathbf{s}^{p}}+\frac{1}{p}\norm{\mathbf{s}^{p}}^2_2 - \frac{2}{n-p}\inner{\mathbf{r}^{p}}{\mathbf{r}^{p}}+\frac{1}{n-p}\norm{\mathbf{r}^{p}}^2_2\\
    &=& u - \frac{1}{p}\norm{\mathbf{s}^{p}}^2_2 - \frac{1}{n-p}\norm{\mathbf{r}^{p}}^2_2
\end{eqnarray*}

 By invoking this identity, we can quickly compute the cost of placing the first $p$ points in cluster one and the last $n-p$ points in cluster two. Each such partition can be achieved by using a threshold $\theta$ between $x_i^p$ and $x_i^{p+1}$. Our algorithm computes these costs for each feature $i \in[d]$. Then, we output the feature $i$ and threshold $\theta$ that minimizes the cost. This guarantees that we find the best possible threshold cut.

 Overall, Algorithm \ref{algo:cut} iterates over the $d$ features, and for each feature it sorts the $n$ vectors according to their values in the current feature. Next, the algorithm iterates over the $n$ vectors and for each potential threshold, it calculates the cost by evaluating the inner product of two $d$-dimensional vectors.
 Overall its runtime complexity is $O\left(nd^2 + nd\log n\right)$.
 
 \subsection{The 2-medians case}
 The high level idea of a finding an optimal 2-medians cut is similar to the 2-means algorithm. The algorithm goes over all possible thresholds. For each threshold, it finds the optimal centers and calculates the cost accordingly. Then, it outputs the threshold cut that minimizes the $2$-medians cost.
 
  \begin{figure}[!htb]
  \centering
  \begin{minipage}{.6\linewidth}
        \begin{algorithm}[H]
    	\SetKwFunction{$2$-medians Optimal Threshold}{$2$-means medians Threshold}
    	\SetKwInOut{Input}{Input}\SetKwInOut{Output}{Output}
    	\LinesNumbered
    	\Input{%
    		{$\vectx^1, \ldots, \vectx^n$} -- vectors in $\reals^d$
    	}
    	\Output{%
    		\xvbox{2mm}{$i$} -- Coordinate\\
    		\xvbox{2mm}{$\theta$} -- Threshold
    	}
    	\BlankLine
    	\setcounter{AlgoLine}{0}
    	
    	$\xvar{best\_cost} \leftarrow \infty$\;
    	
    	$\xvar{best\_coordinate} \leftarrow \NULL$\;
    	
    	$\xvar{best\_threshold} \leftarrow \NULL$\;
    	
    	\ForEach {$i \in [1, \ldots, d]$}
    	{
    	
    	    $\vectmu^2(0) \leftarrow \xfunc{median}(\vectx^1, \ldots \vectx^n)$\;
    	
    	    $\xvar{cost} \leftarrow \sum_{j=1}^n \norm{\vectx^j - \vectmu^2(0)}_1$\;
    	
    	    $\mathcal{X} \leftarrow \xfunc{sorted}(\vectx^1, \ldots, \vectx^n \text{ by coordinate }i)$\;
    	
    		\ForEach {$j \in [1, \ldots, n - 1]$} 
    		{ 
	    		$\vectmu^1(j) \leftarrow \xfunc{median}(\vectx^1, \ldots \vectx^j)$\;
    		
	    		$\vectmu^2(j) \leftarrow \xfunc{median}(\vectx^{j+1}, \ldots \vectx^n)$\;
	    		
    			$\xvar{cost} \leftarrow \xvar{cost} + \norm{\vectx^j - \vectmu^1(j)}_1 - \norm{\vectx^j - \vectmu^2(j - 1)}_1$\;
    			
    			\If {$\xvar{cost} < \xvar{best\_cost}$ and $x^j_i \neq x^{j+1}_i$}
    			{
    				
    				$\xvar{best\_cost} \leftarrow \xvar{cost}$\;
    				
    				$\xvar{best\_coordinate} \leftarrow i$\;
    				
    				$\xvar{best\_threshold} \leftarrow x^j_i$\;
    			}
    		}
    	}
    	\Return $\xvar{best\_coordinate}, \xvar{best\_threshold}$\;
    	\caption{\textsc{Optimal Threshold for $2$-medians}}
    	\label{algo:2-medians-cut}
    \end{algorithm}
    \end{minipage}
\end{figure}
 
 \paragraph{Updating cost.} 
 To update the cost we need to show how to express $\cost(p+1)$ in terms of $\cost(p).$ We know that $\cost(p+1)$ is equal to 
 $$\cost(p+1)=\sum_{\vectx\in C_1}\norm{x-\vectmu^1(p+1)}_1+\sum_{\vectx\in C_2}\norm{x-\vectmu^2(p+1)}_1.$$ 
 For every feature $i \in[d]$, there are $n-1$ thresholds to consider. After sorting by this feature, we can consider all splits into $C_1$ and $C_2$, where $C_1$ contains the $p$ smallest points,  and $C_2$ contains the $n-p$ largest points. We increase $p$ from $p=1$ to $p=n-1$, computing the clusters and cost at each step. If $p$ is odd then the median of $C_1$ (i.e., the optimal center of $C_1$) does not change compared to $p-1$. The only contribution to the cost is the point~$\vectx$ that moved from $C_2$ to $C_1$. If $p$ is even, then at each coordinate there are two cases, depending on whether the median changes or not. If it changes, then let $\Delta$ denote the change in cost of the points in $C_1$ that are smaller than the median. By symmetry, the change in the cost of the points that are larger is $-\Delta$. Thus, the change of the cost is balanced by the points that are larger and smaller than the median. Similar reasoning holds for the other cluster $C_2.$ Therefore, we conclude that moving $\vectx$ from $C_2$ to $C_1$ changes the cost by exactly $\norm{\vectx-\vectmu^1(p+1)}_1 - \norm{\vectx-\vectmu^2(p)}_1$. Thus, we have the following connection between $\cost(p+1)$ and $\cost(p)$:
 $$\cost(p+1)=\cost(p) + \norm{\vectx-\vectmu^1(p+1)}_1 - \norm{\vectx-\vectmu^2(p)}_1.$$
 
  \paragraph{Updating centers.} For each $p$, the cost update relies on efficient calculations of the centers $\vectmu^1(p)$ and  $\vectmu^2(p + 1)$. The centers $\vectmu^1(p), \vectmu^2(p)$ are the medians of the clusters at the $p$th threshold. Note that moving from the $p$th thresold to the $(p+1)$th will only change the clusters by moving one vector from one cluster to the other. 
  We can determine the changes efficiently by using $d$ arrays, one for each coordinate. Each array will contain (pointers to) the input vectors $\cX$ sorted by their $i$th feature value. As we move the threshold along a single coordinate, we can read off the partition into two clusters, and we can compute the median of each cluster by considering the midpoint in the sorted list. 
  
  Overall, this procedure computes the cost of each threshold, while also determining the partition into two clusters and their centers (medians). The time is $O(nd \log n)$ to sort by each feature, and $O(nd^2)$ to compute $\cost(p)$ for each $p \in [n]$ and each feature. Therefore, the total time for the $2$-medians algorithm is 
  $O(nd^2 + nd \log n).$

\end{document}